\documentclass[letterpaper, 10 pt, journal]{IEEEtran}

\IEEEoverridecommandlockouts                              

\usepackage{graphics} 
\usepackage{graphicx} 
\usepackage{float} 
\usepackage{amssymb}  
\usepackage{caption}
\usepackage{subcaption}
\usepackage{amsmath}

\usepackage{amsthm}
\newtheorem{definition}{Definition}
\newtheorem{proposition}{Proposition}
\DeclareMathOperator*{\argmax}{arg\,max}
\DeclareMathOperator*{\argmin}{arg\,min}
\usepackage{hyperref}

\include{utils.tex}

\usepackage{import}
\usepackage{xifthen}
\usepackage{pdfpages}
\usepackage{transparent}
\usepackage{hyperref}
\usepackage{bm}

\title{ \LARGE \bf Iterated Invariant EKF for Quadruped Robot Odometry}

 \author{Hilton Marques Souza Santana$^{1,2*}$, João Carlos Virgolino Soares$^{1}$, Sven Goffin$^{3}$, Ylenia Nisticò$^{1}$, Silvère Bonnabel$^{4}$, Claudio Semini$^{1}$, Marco Antonio Meggiolaro$^{2}$
 \thanks{*Corresponding author, {\tt\small hiltonmarquess@gmail.com}}
 \thanks{This study was financed in part by the Coordenação de Aperfeiçoamento de Pessoal de Nível Superior – Brasil (CAPES) – Finance Code 001, the Fundação de Amparo à Pesquisa do Estado do Rio de Janeiro (FAPERJ), the European Union – NextGenerationEU, the PNRR MUR Project PE000013 “Future Artificial Intelligence Research (FAIR)”. CNPq - Brasil also provided a fellowship for Prof. M.A. Meggiolaro.}
 \thanks{$^{1}$Dynamic Legged Systems Lab, Istituto Italiano di Tecnologia, Genoa, Italy.}
 \thanks{$^{2}$Department of Mechanical Engineering, Pontifical Catholic University of Rio de Janeiro, Brazil.}
 \thanks{$^{3}$Department of Electrical Engineering and Computer Science, University of Liège, Belgium.}
 \thanks{$^{4}$Mines Paris PSL, Centre for Robotics, 60 bd Saint-Michel, 75006 Paris, France.}
 } 

\begin{document}

\maketitle

\begin{abstract}
Kalman filter–based algorithms are fundamental for mobile robots, as they provide a computationally efficient solution to the challenging problem of state estimation. However, they rely on two main assumptions that are difficult to satisfy in practice: (a) the system dynamics must be linear with Gaussian process noise, and (b) the measurement model must also be linear with Gaussian measurement noise.
Previous works have extended assumption (a) to nonlinear spaces through the Invariant Extended Kalman Filter (IEKF), showing that it retains properties similar to those of the classical Kalman filter when the system dynamics are group-affine on a Lie group. More recently, the counterpart of assumption (b) for the same nonlinear setting was addressed in \cite{goffin2025}. By means of the proposed Iterated Invariant Extended Kalman Filter (IterIEKF), the authors of that work demonstrated that the update step exhibits several compatibility properties of the classical linear Kalman filter. 
In this work, we introduce a novel open-source state estimation algorithm for legged robots based on the IterIEKF. The update step of the proposed filter relies solely on proprioceptive measurements, exploiting kinematic constraints on foot velocity during contact and base-frame velocity, making it inherently robust to environmental conditions. Through extensive numerical simulations and evaluation on real-world datasets, we demonstrate that the IterIEKF outperforms the vanilla IEKF, the $\mathrm{SO}(3)$-based Kalman Filter, and its iterated variant in terms of both accuracy and consistency.
\end{abstract}

\section{INTRODUCTION}

Contact-aided inertial state estimation for legged robots is characterized by nonlinear dynamics, hybrid contact switching, and intrinsic unobservable directions such as global position and yaw. While Extended Kalman Filter (EKF)–based approaches remain widely used due to their computational efficiency, their statistical consistency critically depends on how estimation errors are modeled and linearized.

Over the past decade, filtering-based algorithms have been widely adopted for state estimation in legged robots. Most approaches follow the traditional complementary-filter framework used in navigation \cite[p. 393]{brown1997}, in which an Inertial Measurement Unit (IMU) is rigidly mounted on the robot’s body. The resulting high-frequency inertial measurements are integrated to provide a motion estimate that largely bypasses the need to model the platform’s full dynamics. However, due to sensor noise and biases, this integration rapidly drifts from the true trajectory. To correct this nominal estimate in a proprioceptive setting, observations are typically incorporated under a static-contact assumption. Common choices include comparing forward-kinematics predictions with body motion inferred from foot contacts \cite{bloesch2012,Hartley2019ContactaidedIE}, or using the average foot velocities expressed in the base frame as an estimate of the base velocity \cite{bloesch2013}. It was shown in these works \cite{bloesch2012,bloesch2013}, that by means of this proprioceptive measurement, 
the velocity in the base frame, pitch, and roll angles are observables.

Within the Kalman filtering framework, two main methodologies are commonly employed. The first is the classic EKF, which represents the robot’s orientation using either quaternions or the $\mathrm{SO}(3)$ group \cite{bloesch2012,camurri2020, nistico2025}. In our paper, this approach is referred to as the $\mathrm{SO}(3)$-EKF. The second methodology  \cite{Hartley2019ContactaidedIE,lin2023} is based on the Invariant Extended Kalman Filter (IEKF) \cite{barrau2016}, in which orientation and position are no longer treated as independent variables, enabling a more faithful propagation of the robot’s probability distribution.
\begin{figure}[t!]
		\centering
		\setlength{\fboxsep}{0pt}
		\fbox{\includegraphics[scale=0.19]{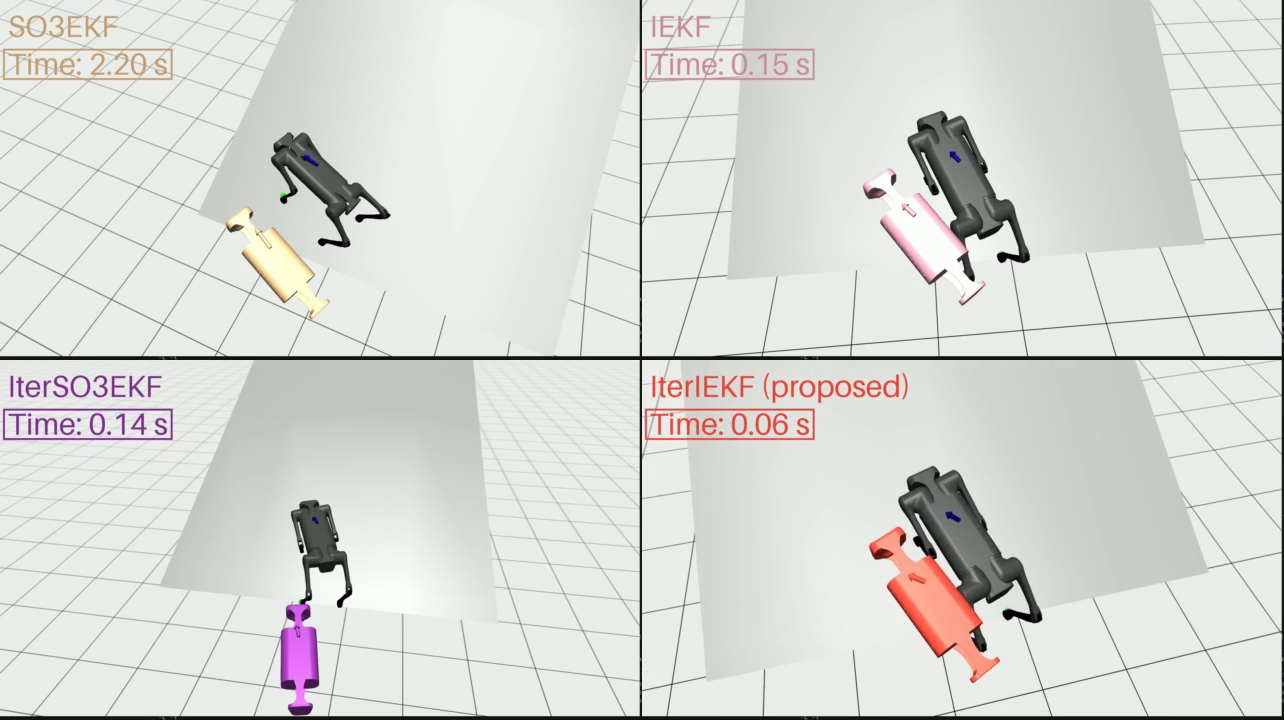}}
\caption{
	Time of convergence of the four filters considered in this work (IterIEKF, IEKF, $\mathrm{SO}(3)$-EKF and Iter$\mathrm{SO}(3)$-EKF) with the same initial state, process and measurement white noise. 
	The simulation stops when the error in the base velocity is less than $0.05\, \mathrm{m / s}$ and the error in gravity direction is less than $0.015\, \mathrm{rad}$.
The result shows that the IterIEKF converges faster than the other three filters.
}
		\label{fig:mujoco_simulation}
\end{figure}

Both methodologies share a common limitation: during the update step, the measurement function is linearized in order to obtain a Gaussian approximation of the posterior, whose mean is then used as the state estimate. However, it can be shown that the mean of this approximated Gaussian may differ substantially from a meaningful estimate of the true posterior distribution, and in particular from its mode. The mode, defined as the maximum of the posterior distribution, is computed by solving a nonlinear minimization problem using the Iterated Extended Kalman Filter~\cite[p. 279]{jazwinski1970}, provided that the robot state evolves in a vector space. An algorithmic approach to solve this minimization problem is to recast it as a Gauss–Newton iteration following \cite{bell1994iterated}. This methodology has been extended to nonlinear state spaces: \cite{bourmaud2016} considers Lie groups, while \cite{he2021} generalizes the framework to manifolds. In this work, we refer to the approach of \cite{bourmaud2016} as Iter$\mathrm{SO}(3)$-EKF. A limitation of this methodology is that it does not incorporate the invariant observer theory proposed in \cite{barrau2016}. A principled extension in this direction was recently proposed by Goffin et al.~\cite{goffin2025}, who introduced the Iterated Invariant EKF (IterIEKF). The IterIEKF is an invariant filtering approach that preserves some compatibility properties of the linear Kalman filter in the presence of noise-free measurements. By incorporating a Gauss–Newton-based iterative update, the IterIEKF computes a maximum a posteriori (MAP) estimate and can significantly improve state estimation performance in nonlinear systems.

However, the interaction between invariance and iteration in hybrid contact-aided legged systems remains insufficiently understood. In practice, velocity-based contact measurements and contact switching may partially violate strict invariance assumptions. It is therefore unclear whether iterative refinement can compensate for structural inconsistencies when invariance is not preserved, or whether iteration instead amplifies incorrect observability modeling. This work investigates this question through a complete geometric derivation and systematic evaluation of four filtering architectures: $\mathrm{SO(3)}$-EKF, Iter$\mathrm{SO(3)}$-EKF, IEKF, and IterIEKF (see Fig. \ref{fig:mujoco_simulation}), using Monte Carlo simulations and dataset analysis to clarify the effects of symmetry preservation and iterative refinement in convergence behavior and statistical consistency.

\subsection{Contributions}
\label{sec:contributions}

To the best of our knowledge, this is the first formulation of the IterIEKF for quadruped robots. Unlike~\cite{Hartley2019ContactaidedIE}, we rely only on velocity in the base frame as a measurement update. We compare our proposed filter with three other filters: IEKF, $\mathrm{SO(3)}$-EKF and Iter$\mathrm{SO(3)}$-EKF. The performance improvement is shown in simulations and a real-world dataset. Our key contributions can be summarized as follows:

\begin{itemize}

\item We derive formulations of invariant and non-invariant iterated filtering for contact-aided quadruped state estimation, explicitly accounting for foot contact velocity measurements expressed in the base frame. We emphasize that, to the best of our knowledge, the formulations of IterIEKF and Iter$\mathrm{SO}(3)$-EKF are presented for the first time in this setting.

\item We provide, to our knowledge, the first systematic study of the impact of iterative invariant updates on the accuracy and statistical consistency of legged robot state estimation. Through controlled Monte Carlo simulations, we show that IterIEKF significantly improves NEES consistency and accuracy compared to single-step invariant filtering.
More specifically, simulations indicate a reduction of up to $60 \%$ in the error of base velocity and gravity direction estimates. In real-world dataset, we observe a reduction of approximately $12 \%$ in the pitch angle estimation error.

\item We demonstrate that combining invariance with iterative refinement reduces convergence time under large initialization errors and measurement uncertainty, 
	while preserving statistical consistency. In some experiments, a reduction of up to $60 \%$ 
	in convergence time is observed for the IterIEKF to reach the same error level as the IEKF.

\end{itemize}

Aditionally, we open-source the proposed framework\footnote{\url{https://hilton-santana.github.io/Legged-IterIEKF/}}, including simulation tools, to facilitate reproducibility and further research.

The remainder of this paper is organized as follows. Sec. II shows the literature review. Sec. III outlines the mathematical background required to understand the proposed approach. Sec. IV presents a simple motivating example that highlights the characteristics of the IterIEKF in comparison with other methods. Sec. V describes the complete methodology for developing the IterIEKF and the Iter$\mathrm{SO}(3)$-EKF, including a discussion of uncertainty in the observation model. Sec. VI presents the experimental results and provides a detailed analysis. Sec. VII discusses the limitations of the proposed method, and Sec. VIII concludes the paper and outlines directions for future work.

\section{LITERATURE REVIEW}

State estimation for legged robots is fundamentally constrained by nonlinear inertial dynamics, intermittent contact constraints, and intrinsic unobservabilities (global position and yaw). Classical error-state EKF formulations~\cite{bloesch2012} operate in local coordinates and rely on first-order linearization around a nominal trajectory. While computationally efficient, these approaches are known to suffer from inconsistency when the linearized error dynamics do not preserve the system’s symmetry structure, as formalized in the context of SLAM in~\cite{barrau2015}.

Invariant filtering~\cite{barrau2016},~\cite{barrau2018} addresses this limitation by defining the estimation error on matrix Lie groups and exploiting the group-affine system structure. When the dynamics and observations satisfy appropriate invariance properties, the resulting error dynamics become autonomous, i.e., the error dynamic does not depend on the system state, leading to improved consistency and reduced sensitivity to linearization. The theoretical foundations of non-linear error-based filtering~\cite{barrau2015thesis}, and Lie group state estimation~\cite{sola2018},~\cite{hall2013} provide the structural justification for this approach.

For legged robots,~\cite{Hartley2019ContactaidedIE} formulated contact-aided inertial navigation on $\mathrm{SE}_{2+N}(3)$, demonstrating that invariant filtering improves accuracy relative to $\mathrm{SO}(3)$-EKF implementations. This formulation leverages the natural embedding of pose, velocity, and contact positions in an extended Lie group, aligning propagation with the geometry of $\mathrm{SE}_2(3)$. The geometric interpretation of IMU-driven dynamics and uncertainty transport further supports this construction~\cite{brossard2022}.

However, strict invariance requires the observation model to respect the underlying group action. In practical legged systems, measurements such as base-frame velocity or contact-relative constraints often violate full invariance, weakening theoretical guarantees. This occurs, for example, under the rigid contact assumption, where foot velocities are constrained to be zero in the world frame. In such cases, performance improvements become empirical rather than structurally guaranteed.
Legged locomotion introduces hybrid dynamics through contact switching. Early approaches relied on deterministic contact detection via force thresholds~\cite{bloesch2013},~\cite{wisth2023}, an assumption shown to be fragile on compliant or deformable terrain~\cite{fahmi2021}. From an estimation-theoretic standpoint, contact modeling affects both observability and covariance contraction.

Learning-based approaches~\cite{lin2021}~\cite{youm2024} infer contact events or velocity estimates from proprioceptive data and inject them as pseudo-measurements into an IEKF framework. While improving robustness to ambiguous contact conditions, these methods introduce model-dependent corrections whose statistical properties are difficult to characterize.

Robust formulations incorporating scale-variant cost functions within invariant filtering~\cite{santana2024} attempt to mitigate slippage and outliers without abandoning geometric structure. More recent hybrid architectures combine invariant filtering with neural compensators~\cite{lee2025}, seeking to preserve group-consistent propagation while correcting for unmodeled effects. Although promising, these methods shift part of the estimator behavior outside the geometric framework, potentially affecting observability preservation.

Multi-sensor state estimation systems such as Pronto~\cite{camurri2020}, MUSE~\cite{nistico2025} and VILENS~\cite{wisth2023} enhance robustness by incorporating exteroceptive sensing. However, these architectures often rely on classical filtering or smoothing back-ends, and do not explicitly exploit invariant error dynamics to enforce structural consistency.

Invariant formulations have also been extended to smoothing. Yoon et al.~\cite{Yoon2024} proposed a Lie-group-based framework that preserves symmetry and retains consistency properties analogous to those of the IEKF, while improving linearization accuracy. This demonstrates that invariance extends to batch estimators, but at the cost of increased computation and latency. In contrast, this work focuses on recursive iterated invariant filtering suitable for real-time locomotion.

Iterated EKF schemes were originally introduced to reduce linearization bias by repeatedly re-linearizing the measurement model~\cite{bell1994iterated}. On manifolds, such iterations require consistent retraction operators and covariance transport~\cite{he2021}. On Lie groups, these operations can be simplified due to the underlying group structure~\cite{bourmaud2016}. The recently proposed IterIEKF~\cite{goffin2025} further improves this framework by re-linearizing invariant measurement residuals rather than arbitrary ones.

Although iterative refinement improves convergence in strongly nonlinear regimes, it remains unclear whether iteration preserves, restores, or potentially degrades statistical consistency when the underlying error representation does not respect system symmetries. In particular, the practical role of iteration in hybrid contact-aided legged locomotion has not been systematically evaluated.

In summary, prior work establishes that symmetry preservation improves consistency and that iterative updates reduce linearization bias. However, practical legged systems deviate from ideal group-affine assumptions: contact switching, soft-terrain interactions~\cite{fahmi2021}, and velocity-based updates introduce departures from strict invariance. Robust invariant formulations~\cite{santana2024} and structure-preserving inertial filtering approaches (e.g.,~\cite{fink2020iros}) emphasize the importance of maintaining geometric coherence under such disturbances.

The existing literature therefore clarifies individual mechanisms (symmetry preservation, contact modeling, and iterative refinement) but does not resolve how they interact in hybrid legged locomotion. In particular, the effect of iterative invariant updates under partially violated invariance assumptions has not been systematically quantified.

\section{BACKGROUND}
This work considers two matrix Lie groups, \(\mathrm{SE}_2(3)\) introduced in \cite[p. 18]{barrau2015thesis} and \(\mathrm{SO}(3) \times \mathbb{R}^{6}\). These groups are diffeomorphic as manifolds but not isomorphic as groups. Consequently, they share the same topology while exhibiting different geometric structures.
Previous work has shown that incorporating the geometry of \(\mathrm{SE}_2(3)\) simplifies the IMU dynamics, yielding the so-called \emph{natural frame dynamics} \cite{barrau2022}. Although $\mathrm{SO}(3) \times \mathbb{R}^{6}$ and $\mathrm{SE}_2(3)$ are diffeomorphic, endowing the state space with the Lie group structure of $\mathrm{SE}_2(3)$ induces a different connection and parallelization. Under this geometry, the IMU dynamics acquire a more structured form, which leads to an improved estimation performance \cite[pg. 58]{brossard2020}.

Every matrix Lie Group $G$ has an associated Lie Algebra $\mathfrak{g}$, which is a vector space,   simpler than $G$.
Both $G$ and $\mathfrak{g}$ have the same dimension $n$. We define the isomorphisms between $\mathfrak{g}$ and $\mathbb{R}^{n}$ 
as follows:
\begin{equation}
\label{eq:}
\begin{aligned}
	\text{Hat}: \mathbb{R}^{n} \to \mathfrak{g}, 
	\mathbf{v} \mapsto \mathbf{v}^{\wedge} =  \sum_{i=1}^{n}v_{i}E_{i},
\end{aligned}
\end{equation}
\begin{equation}
\label{eq:}
\begin{aligned}
\text{Vee}: \mathfrak{g} \to \mathbb{R}^{n}, 
\mathbf{v}^{\wedge} \mapsto (\mathbf{v}^{\wedge})^{\vee} = \mathbf{v} = \sum_{i=1}^{n} v_{i}\mathbf{e}_{i},
\end{aligned}
\end{equation}
where $\{E_i\}$ is a basis for $\mathfrak{g}$ and $\{ \mathbf{e}_{i}\}$ is a basis for $\mathbb{R}^{n}$. In this work, we extensively use the $\mathbb{R}^{n}$ 
representation of $\mathfrak{g}$. Furthermore, $\mathfrak{g}$ and $G$ are related by the following local homeomorphism \cite[p. 179]{stillwell2008}:
\begin{equation}
\label{eq:exp_map}
\begin{aligned}
	\exp_G: \mathfrak{g} \to G, 
	\mathbf{v}^{\wedge} \mapsto  \exp_G(\mathbf{v}^{\wedge}) = \sum_{i=0}^{\infty} \frac{{\mathbf{v}^{\wedge}}^{i}}{i!}
\end{aligned},
\end{equation}
with its inverse locally defined as $\log_G: G \to \mathfrak{g}$. 
We also define $\mathrm{Exp}_G(\mathbf{v}) := \exp_G(\mathbf{v}^{\wedge})$ and
$\mathrm{Log}_G := \log_G(\mathcal{X})^{\vee}$ for all $ \mathbf{v} \in \mathbb{R}^n, \mathcal{X} \in G$.
Sometimes the subscript $G$ is omitted when the Lie Group is clear from the context.
The exponential and logarithm maps for the Lie groups considered in this work are defined in Appendix \ref{appendix:appendix_formulas}.
Following \cite{sola2018}, we also overload the following operations:
\begin{equation}
\label{eq:main_operators}
\begin{aligned}
	\text{left-}\oplus: G \times \mathbb{R}^{n} \to  G,\mathcal{X} \oplus \mathbf{u} &= \mathcal{X} \mathrm{Exp}(\mathbf{u}), \\
	\text{right-}\oplus: \mathbb{R}^{n} \times G \to G, \mathbf{u} \oplus \mathcal{X} &= \mathrm{Exp}(\mathbf{u})\mathcal{X}. \\
\end{aligned}
\end{equation}
A common approximation of the exponential map for the case $\| \mathbf{u}\| \approx 0$ is using the right-Jacobian, $\mathcal{J}_{r,G}(\cdot)$, which allows us to write \cite{sola2018}:
\begin{equation}
\label{eq:right_jacobian}
\mathrm{Exp}_G(\mathbf{a} + \mathbf{u}) \approx \mathrm{Exp}_G(\mathbf{a}) \oplus \mathcal{J}_{r,G}(\mathbf{a}) \mathbf{u}.
\end{equation}

Let $\mathcal{T}_\mathcal{X}$ denote the tangent space at $\mathcal{X} \in G$, then $\mathcal{T}_\mathcal{X}$ 
is isomorphic to $\mathcal{T}_\mathcal{E}$, the tangent space at the identity element $\mathcal{E},$ by the adjoint map:
\begin{equation}
\label{eq:adjoint_map}
\text{Ad}_\mathcal{X} :\mathbb{R}^n \to \mathbb{R}^n,	\mathbf{u} \to \text{Ad}_\mathcal{X}\mathbf{u} =  (\mathcal{X} \mathbf{u}^{\wedge} \mathcal{X}^{-1})^{\vee}.	
\end{equation}

The following definition (Definition \ref{def:submanifold}) is fundamental for the IterIEKF formulation:
\begin{definition}[Observed Sets \cite{goffin2025}]
  \label{def:submanifold}
  Let $G$ be a Lie Group that is represented linearly by matrices of $\mathbb{R}^{d \times d}$. 
  The submanifold of $G$  denoted by $S_{\mathcal{X}^{-1}\mathbf{d} = \mathbf{y}}$  is defined as:
  \begin{equation}
      S_{\mathcal{X}^{-1} \mathbf{d} = \mathbf{y}} := \{\mathcal{X} \in G \mid \mathcal{X}^{-1} \mathbf{d} = \mathbf{y}\}, 
    \quad \mathbf{d}, \mathbf{y} \in  \mathbb{R}^{d}.
  \end{equation}
\end{definition}
\begin{proposition}
\label{prop:coset}
Observed sets are right-cosets of the stabilizer subgroup $G_\mathbf{d}$: 
\begin{equation}
  G_\mathbf{d} := \{\mathcal{X} \in G \mid \mathcal{X} \mathbf{d} = \mathbf{d}\}.
\end{equation}
\end{proposition}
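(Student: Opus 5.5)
We need to show that a nonempty observed set is a coset of the stabilizer, i.e. $S_{\mathcal{X}^{-1}\mathbf{d}=\mathbf{y}} = G_\mathbf{d}\,\mathcal{X}_0$ for any $\mathcal{X}_0 \in S_{\mathcal{X}^{-1}\mathbf{d}=\mathbf{y}}$. Since the defining equation involves $\mathcal{X}^{-1}$, the coset must be written so that the stabilizer acts on the side that touches $\mathbf{d}$; I will fix $\mathcal{X}_0$ and prove equality by double inclusion. If the set is empty there is nothing to show, so the statement should be read as concerning nonempty observed sets.

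First I check that $G_\mathbf{d}$ is a subgroup, which is routine. The identity fixes $\mathbf{d}$. If $\mathcal{A}\mathbf{d}=\mathbf{d}$ and $\mathcal{B}\mathbf{d}=\mathbf{d}$, then $\mathcal{A}\mathcal{B}\mathbf{d}=\mathbf{d}$. Multiplying $\mathcal{A}\mathbf{d}=\mathbf{d}$ by $\mathcal{A}^{-1}$ gives $\mathcal{A}^{-1}\mathbf{d}=\mathbf{d}$.

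For the inclusion $G_\mathbf{d}\mathcal{X}_0 \subseteq S$, take $\mathcal{X}=\mathcal{A}\mathcal{X}_0$ with $\mathcal{A}\in G_\mathbf{d}$. Then
$$\mathcal{X}^{-1}\mathbf{d}=\mathcal{X}_0^{-1}\mathcal{A}^{-1}\mathbf{d}=\mathcal{X}_0^{-1}\mathbf{d}=\mathbf{y}.$$

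For the reverse inclusion, take $\mathcal{X}\in S$ and set $\mathcal{A}:=\mathcal{X}\mathcal{X}_0^{-1}$. From $\mathcal{X}_0^{-1}\mathbf{d}=\mathbf{y}=\mathcal{X}^{-1}\mathbf{d}$ I get $\mathcal{X}\mathcal{X}_0^{-1}\mathbf{d}=\mathbf{d}$, so $\mathcal{A}\in G_\mathbf{d}$ and $\mathcal{X}=\mathcal{A}\mathcal{X}_0$.

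This left multiplication by $G_\mathbf{d}$ gives the set $G_\mathbf{d}\mathcal{X}_0$, which is what "right coset" means under the usual convention $Hg$. The only real obstacle is this convention: the paper should state the coset as $G_\mathbf{d}\mathcal{X}_0$ explicitly, and it must assume $S$ is nonempty.
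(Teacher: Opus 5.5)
Your proposal is correct and follows essentially the same route as the paper's appendix proof: fix $\mathcal{X}_0$ in the observed set and prove $S_{\mathcal{X}^{-1}\mathbf{d}=\mathbf{y}} = G_\mathbf{d}\mathcal{X}_0$ by double inclusion, using $\mathcal{H}=\mathcal{X}\mathcal{X}_0^{-1}$ for the reverse direction. Your explicit check that $G_\mathbf{d}$ is a subgroup (in particular $\mathcal{A}^{-1}\mathbf{d}=\mathbf{d}$), together with your caveat that the set must be nonempty, fills in two points the paper uses only implicitly.
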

\begin{proof}
	See Appendix \ref{appendix:appendix_proof}. 
\end{proof}
Proposition \ref{prop:coset} gives a geometric interpretation
for the observed sets defined in \cite{goffin2025}: they are cosets of a stabilizer subgroup of the Lie Group $G$.
 
For the state of the robot, we assume the Single Rigid Body assumption and treat the robot as a floating body. We estimate
the orientation of the robot $\mathbf{R} \in  \mathrm{SO}(3)$, the body velocity $\mathbf{v} \in  \mathbb{R}^{3}$ 
and the centroid position $\mathbf{p} \in  \mathbb{R}^{3}$. All of these quantities are in relation to an inertial frame. 
We embed the state into two Lie groups, the $\mathrm{SE}_2(3) = \mathrm{SO}(3) \ltimes \mathbb{R}^{6}$, 
introduced in \cite{barrau2016}
\begin{equation}
    \mathcal{X} := (\mathbf{R}, \mathbf{x}) := (\mathbf{R}, \mathbf{v}, \mathbf{p}) 
    := 
\begin{bmatrix}
\mathbf{R} & \mathbf{v} & \mathbf{p} \\
\mathbf{0}_{1,3} & 1 & 0 \\
\mathbf{0}_{1,3} & 0 & 1 
\end{bmatrix} \in  \mathbb{R}^{5 \times 5},
\end{equation}
with the inverse given as:
\begin{equation}
\label{eq:state_inverse}
\mathcal{X}^{-1} := (\mathbf{R}^{T}, -\mathbf{R}^{T} \mathbf{x}) = (\mathbf{R}^{T}, -\mathbf{R}^{T}\mathbf{v}, -\mathbf{R}^{T} \mathbf{p}).
\end{equation}
The other embedding is given by:
\begin{equation}
\label{so3_true_state}
\boldsymbol{x}_i := \{\mathbf{R}_i, \mathbf{v}_i, \mathbf{p}_i\} \in \mathrm{SO}(3) \times \mathbb{R}^{6}.
\end{equation}
Finally, we adopt a Bayesian approach, where the true (unknown) state at instant $t_i$, $\mathcal{X}_i$ 
and $\boldsymbol{x}_i$,
are random variables with known initial distribution \cite[p. 91]{shalom2002}. 
The initial distribution is defined in the Lie Algebra $\mathfrak{g}$ and projected onto $G$ with
the exponential map. For $\mathrm{SE}_2(3)$ we have: 
\begin{equation}
\begin{aligned}
    \boldsymbol{\xi}_{0} \sim \mathcal{N}(\mathbf{0}_{9,1}, \mathbf{P}_0), \\
    \bar{\mathcal{X}}_0 = \boldsymbol{\xi}_0 \oplus \hat{\mathcal{X}}_0, \bar{\mathcal{X}_0} \sim \mathcal{N}_R(\hat{\mathcal{X}}_0, \mathbf{P}_0),
\end{aligned}
\end{equation}
where $\boldsymbol{\xi}_0 \in \mathbb{R}^9$ is the initial error, $\bar{\mathcal{X}}_0$ is the initial nominal state, $\mathbf{P}_0 \in  \mathbb{R}^{9 \times 9}$ is a covariance matrix, $\hat{\mathcal{X}}_0 \in  \mathrm{SE}_2(3)$ is a known mean and 
$\mathcal{N}_R(\hat{\mathcal{X}}_0, \mathbf{P}_0)$ is the right projection of the tangent covariance 
known as concentrated Gaussian \cite{wang2006error, barrau2018}. Analogously, for $\mathrm{SO}(3) \times \mathbb{R}^{6}$ we have:
\begin{equation}
\begin{aligned}
    \delta \boldsymbol{x}_{0} \sim \mathcal{N}(\mathbf{0}_{9,1}, \mathbf{\Sigma}_0 \in  \mathbb{R}^{9 \times  9}), \\
    \bar{\boldsymbol{x}}_0 = \delta \boldsymbol{x}_0 \oplus \hat{\boldsymbol{x}}_0, 
    \bar{\boldsymbol{x}}_0 \sim \mathcal{N}_R(\hat{\boldsymbol{x}}_0, \mathbf{\Sigma}_0).
\end{aligned}
\end{equation}

In this work, we represent the state with respect to the world frame. Since the measurement is expressed in the base frame, this choice leads to a right-invariant error formulation~\cite{barrau2022}. Although the primary observable quantity, the base velocity, is measured in the body frame, we adopt this formulation to facilitate the integration of the proposed method into existing navigation pipelines. For completeness, in Appendix \ref{appendix:appendix_robocentric} 
we derive an analogous robocentric formulation, which leads to a left-invariant error.

\section{MOTIVATING EXAMPLE}
\label{sec:motivating_example}
\begin{figure*}[t]
		\centering
    \subfloat[\protect\label{fig:setup_slam}]{
        \includegraphics[scale=0.38]{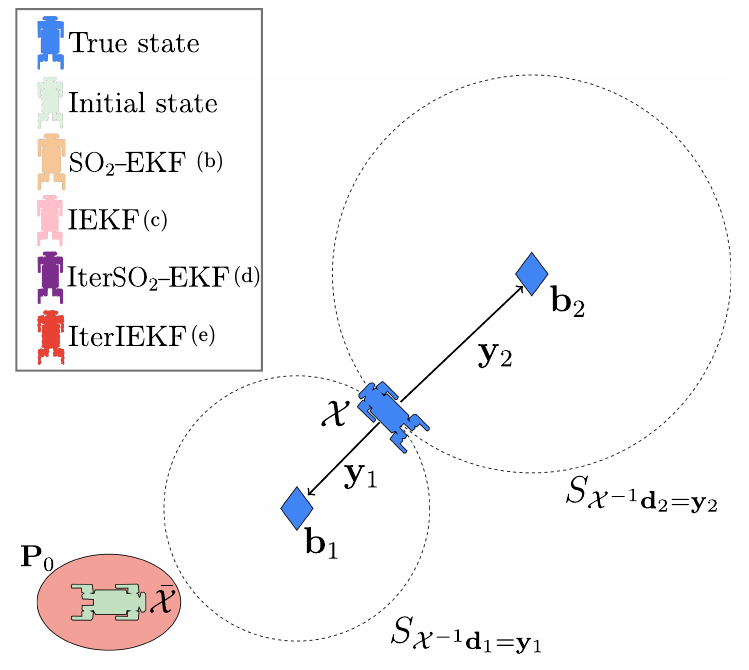}
    }
    \subfloat[\protect\label{fig:so2rn2ekf_slam}]{
       \includegraphics[scale=0.38]{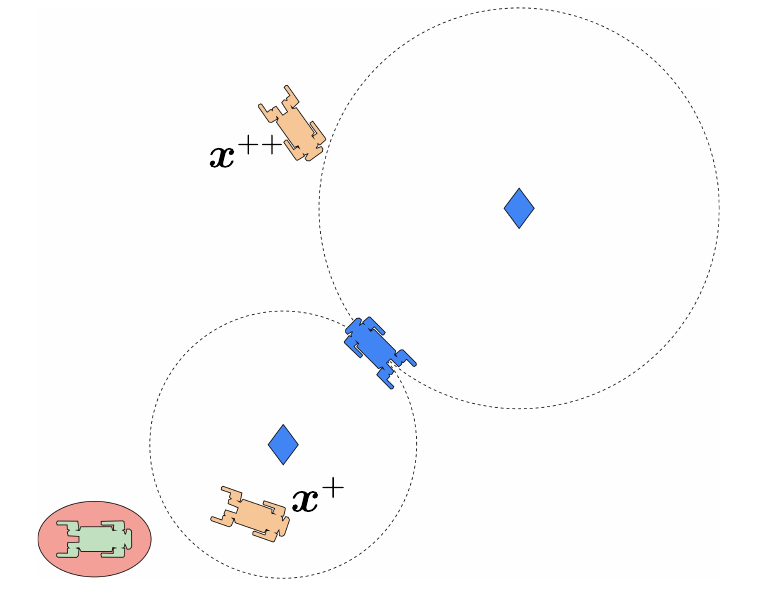}
    }
    \subfloat[\protect\label{fig:iekf_slam}]{
        \includegraphics[scale=0.38]{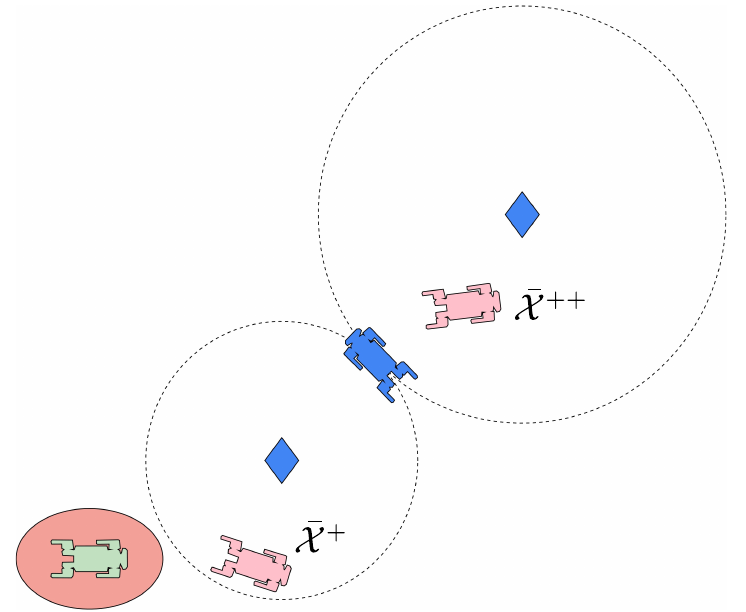}
    }
   \hfill
    \subfloat[\protect\label{fig:iterso2r2nekf}]{
        \includegraphics[scale=0.38]{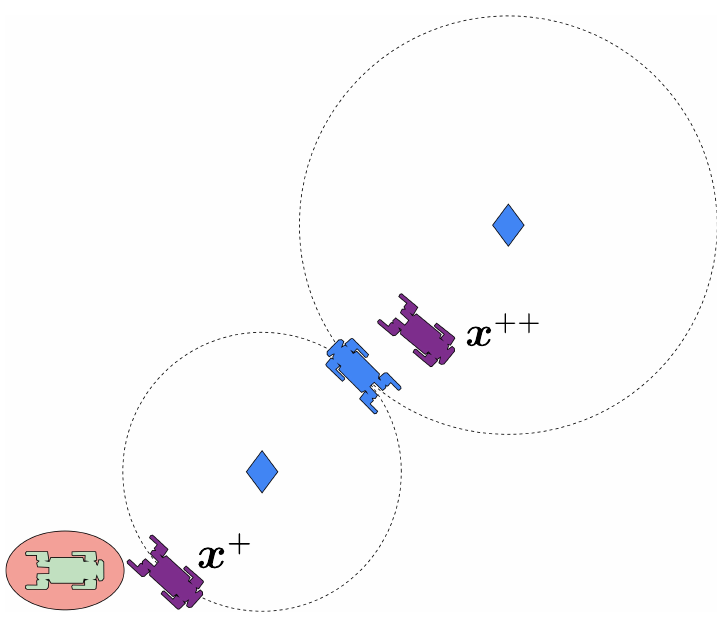}
    }
    \subfloat[\protect\label{fig:iiekf_slam}]{
        \includegraphics[scale=0.38]{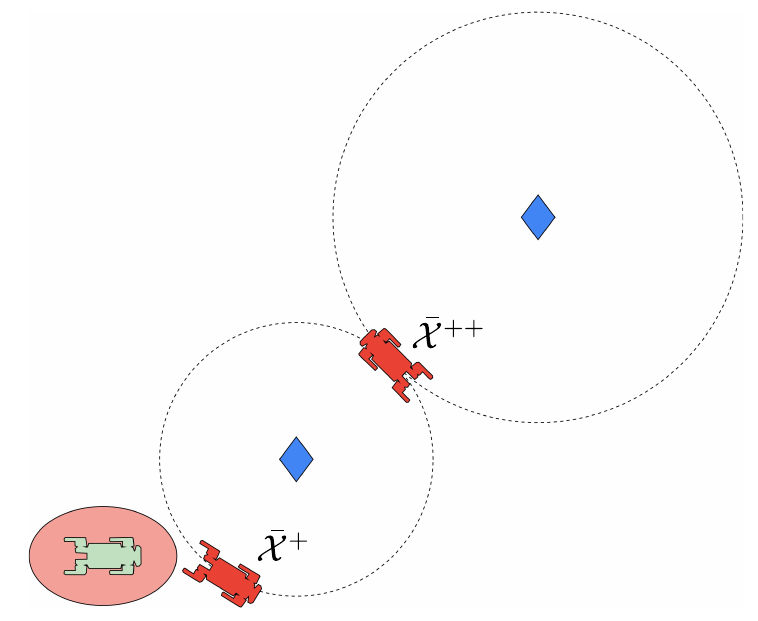}
    }
    \caption{a) Setup of a simple localization problem with two known landmarks with almost noise-free measurements. 
             b) Two consecutive updates with $\mathrm{SO}(2)$-EKF. No update lies in the observed sets.
             c) Two consecutive updates with IEKF using almost noise-free measurements. 
             d) Two consecutive updates with Iter$\mathrm{SO}(2)$-EKF. Only the first update lies in the observed sets. 
             The second update did not converge.
             e) Two consecutive updates with IterIEKF. Both updates lie in the observed sets, being the last update on their intersection.}
    \label{fig:slam}
\end{figure*}
In this section, we illustrate the differences between the four filtering approaches considered 
in this work: $\mathrm{SO}(3)$-EKF, IEKF, Iter$\mathrm{SO}(3)$-EKF and IterIEKF.
For this, we only consider the update step of a simple localization problem with known landmarks (Fig. \ref{fig:slam}).
The details of the formulation of each filter will be given in the next sections.
We consider a 2D quadruped robot moving in the plane, and the state given by the robot's orientation (yaw angle), velocity and position. We
parametrize the robot's state as an element of two, diffeomorphic by $\phi: \mathrm{SE}_2(2) \to  \mathrm{SO}_2 \times  \mathbb{R}^{4}$, Lie groups: 
$\mathrm{SE}_2(2)$ for IEKF and IterIEKF, and $\mathrm{SO}(2)\times \mathbb{R}^{4}$ for $\mathrm{SO}(2)$-EKF and Iter$\mathrm{SO}(2)$-EKF.
Let $\mathcal{X} =: (\mathbf{R}, \mathbf{v}, \mathbf{p}) \in \mathrm{SE}_2(2)$ - 
and equivalently
$\boldsymbol{x} := \phi(\mathcal{X}) \in \mathrm{SO}(2) \times \mathbb{R}^{4}$ 
- be the true state of the robot, 
$\{\mathbf{b_1}, \mathbf{b}_2\} \in  \mathbb{R}^{2}$ be two landmarks, and $\{\mathbf{y}_1, \mathbf{y}_2\} \in  \mathbb{R}^{2}$ 
denote two almost noise-free measurements of $\mathcal{X}$ with relation to the landmarks:
\begin{equation}
\begin{aligned}
\mathbf{y}_1 := 
\begin{bmatrix}
\mathbf{R}^{T} (\mathbf{b}_1 - \mathbf{p}) \\
0 \\
1
\end{bmatrix} = \\
\begin{bmatrix}
\mathbf{R}^{T} & -\mathbf{R}^{T}\mathbf{v} & -\mathbf{R}^{T} \mathbf{p} \\
\mathbf{0}_{2,1}  &  1  & 0 \\
\mathbf{0}_{2,1} & 0 &1 
\end{bmatrix}
\begin{bmatrix}
\mathbf{b}_1 \\
0 \\
1
\end{bmatrix} := \mathcal{X}^{-1}\mathbf{d}_1,
\end{aligned}
\end{equation}
\begin{equation}
\begin{aligned}
\mathbf{y}_2 := \mathcal{X}^{-1} \mathbf{d}_2 = \mathcal{X}^{-1} 
\begin{bmatrix} \mathbf{b}_{2} & 0 & 1 \end{bmatrix} ^{T}
.\end{aligned}
\end{equation}
The measurement models are in invariant form for $\mathrm{SE}_2(2)$, as defined in \cite{barrau2016}.
The observed sets for these two measurements are circumferences centered in the landmarks as shown in Fig. \ref{fig:setup_slam}.
On the other hand, the measurement $\mathbf{y} = \mathbf{R}^{T}(\mathbf{b} - \mathbf{p})$ is not in invariant form
for $\mathrm{SO}(2) \times \mathbb{R}^{2 \times 2}$. To be invariant, the measurement model should be in the 
form $\mathcal{X} \mathbf{d}$ or $\mathcal{X}^{-1} \mathbf{d}$.

Let $\bar{\mathcal{X}} \in \mathrm{SE}_2(2) \sim \mathcal{N}_R(\mathbf{0}_9, \mathbf{P})$ - and equivalently  $\bar{\boldsymbol{x}} = \phi(\bar{\mathcal{X}})$ -
be the current noisy estimate of the state. We consider no uncertainty in
the velocity error, so that $\mathbf{P} = \text{diag}((e^{R})^2,0,0,(e^{x})^2,(e^{y})^2)$, where $e = (\mathcal{X} \bar{\mathcal{X}}^{-1})^{\vee}$. 
In Fig. \ref{fig:slam} we show the results of two consecutive update steps with IEKF, $\mathrm{SO}(2)$-EKF, Iter$\mathrm{SO}(2)$-EKF, and IterIEKF.
In Fig. \ref{fig:iekf_slam} and Fig. \ref{fig:so2rn2ekf_slam} we show the updates with IEKF and $\mathrm{SO}(2)$-EKF.
It can be seen that after the update step, there is no guarantee that the state will be in the observed sets
$S_{\mathcal{X}^{-1} \mathbf{d}_1 = \mathbf{y}_1}$ and $S_{\mathcal{X}^{-1} \mathbf{d}_2=\mathbf{y}_1}$.
In Fig. \ref{fig:iterso2r2nekf} we show the updates with Iter$\mathrm{SO}(2)$-EKF. In this case, the first update lands the state
inside the observed set, but the second update did not converge.
On the other hand, with IterIEKF in Fig. \ref{fig:iiekf_slam}, both updates land the state inside the observed set, so that $\mathcal{X}^{++}$ 
lies at the intersection of the two observed sets, which corresponds to the true state. This happens because after the update step, 
the IterIEKF constrains the search space to the first observed set, 
and then the optimization process can find the convergence easier than Iter$\mathrm{SO}(2)$-EKF.

We emphasize that the IterIEKF update admits a clear geometric interpretation.  
Let $\mathbf{P}_{\mathbf{R}}$ be a covariance matrix with dominant uncertainty in rotation and negligible uncertainty in velocity and position. Similarly, let $\mathbf{P}_{\mathbf{p}}$ represent dominant uncertainty in position, with negligible uncertainty in rotation and velocity.
Figure~\ref{fig:compare_covariance} illustrates the IterIEKF update in both cases. When rotational uncertainty dominates ($\mathbf{P}_{\mathbf{R}}$), the updated state exhibits a significant change in orientation while the position remains nearly unchanged. In contrast, when positional uncertainty dominates ($\mathbf{P}_{\mathbf{p}}$), the update results in an almost pure translation, since the rotation is nearly certain.
Let $\hat{\boldsymbol{\xi}}_{\mathbf{R}}$ (and analogously $\hat{\boldsymbol{\xi}}_{\mathbf{p}}$) denote the final error estimate associated with covariance $\mathbf{P}_{\mathbf{R}}$. Consider the one-parameter subgroup
$
H = \left\{ \mathrm{Exp}\!\left(s \hat{\boldsymbol{\xi}}_{\mathbf{R}}\right) \mid s \in \mathbb{R} \right\},
$
generated by $\hat{\boldsymbol{\xi}}_{\mathbf{R}}^{\wedge} \in \mathfrak{g}$~\cite[pg.~56]{hall2013}.
The updated state can be interpreted as the intersection between the right coset $H \bar{\mathcal{X}}$ and the measurement constraint set $S_{\mathcal{X}^{-1}\mathbf{d}=\mathbf{y}}$, which is by Proposition~\ref{prop:coset} the right coset $G_{\mathbf{d}}\mathcal{X}$. When both cosets form nonlinear submanifolds of $G$, computing this intersection becomes a nonlinear problem. On the other hand, under covariance $\mathbf{P}_{\mathbf{p}}$, the right coset $\mathrm{Exp}\!\left(s \hat{\boldsymbol{\xi}}_{\mathbf{p}}\right)\bar{\mathcal{X}}$ is linear. In this case, IterIEKF reduces to the standard IEKF. In essence, IterIEKF can be interpreted as searching for the intersection of cosets of subgroups of $G$.
\begin{figure}[h]
    \centering
    \includegraphics[scale=0.45]{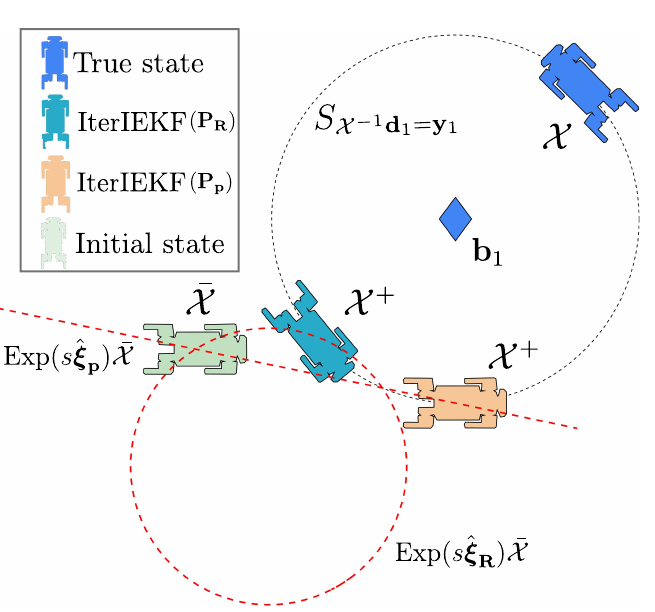}
    \caption{Comparison of update step with IterIEKF in two scenarios: first with a prevalent uncertainty in the rotational error ($\mathbf{P}_\mathbf{R}$) 
    and the second with a prevalent positional error ($\mathbf{P}_\mathbf{p}$). The dotted red curves are the right-cosets $\mathrm{Exp}(s \hat{\boldsymbol{\xi}}_{\mathbf{R}}) \bar{\mathcal{X}}$ and $\mathrm{Exp}(s \hat{\boldsymbol{\xi}}_{\mathbf{p}}) \bar{\mathcal{X}}$, $s\in \mathbb{R}$. }
    \label{fig:compare_covariance}
\end{figure}

\section{METHODOLOGY}
\subsection{Base velocity as measurement model}
\label{sec:base_velocity_as_measurement_model}

In this work, we restrict the measurement model to foot velocity during contact and exclude forward kinematics-based foot position measurements. Iterated filtering techniques are particularly sensitive to covariance mischaracterization. While the covariance of base velocity measurements can be empirically estimated using motion capture systems or GNSS in outdoor environments, the uncertainty associated with forward kinematics-based position measurements is more difficult to model accurately. It depends not only on encoder noise, but also on joint compliance, kinematic parameter errors, structural flexibility, and contact effects, which introduce state-dependent and potentially biased errors that are not easily captured by a fixed Gaussian covariance model. Since iterated filters rely heavily on accurate uncertainty characterization to ensure consistency and convergence, incorporating measurements with poorly modeled covariance may degrade performance. To avoid introducing such unmodeled uncertainties, we therefore limit our measurement model to stance foot velocity constraints. Instead of including foot contact positions in the state, in a way that is reminiscent of SLAM \cite[p. 3] {bloesch2012}, we consider foot detection as null velocity measurement (up to noise), along the lines of the velocity-update (ZUPT) of pedestrian navigation \cite{foxlin2005pedestrian}. 

The feet velocity as a measurement was first proposed for quadruped robots in \cite{bloesch2013}. It can be obtained as follows:
\begin{equation}
\label{eq:foot_velocity}
\tilde{\mathbf{v}}_{f,B}^{i} = \boldsymbol{\omega}^{\times}_{i,B} \mathbf{p}_{f,B} + J_\mathcal{K} \dot{\mathbf{q}}_{i,B},
\end{equation}
where the subscript $B$ denotes the base frame, $\mathbf{p}_f = \mathcal{K}(\mathbf{q})$ is the foot contact position obtained by forward kinematics and $J_\mathcal{K}$
is the linear Jacobian of $\mathcal{K}$. Moreover, under the static contact assumption, we have the following measurement for
the base velocity:
\begin{equation}
\label{eq:velocity_base_frame}
\tilde{\mathbf{v}}_{B,i} := -\frac{1}{|S|}\sum_{i \in S} \tilde{\mathbf{v}}_{f,B}^{i} = \mathbf{v}_B + \mathbf{w}_{f,i},
\end{equation}
where $S$ is a set with the IDs of all legs in contact and $\mathbf{w}_{f,i} \sim \mathcal{N}(\mathbf{0}_{3,1}, \mathbf{Q}_f)$. Note that the covariance matrix $\mathbf{Q}_f$ depends on the noise of three distinct sources: angular velocity measured by the IMU, 
kinematics calibration and joint encoders. Due to the complexity of its evaluation we follow the convention of 
previous works \cite{bloesch2013, i2ekflo} and assume $\mathbf{Q}_f$ as a user-prescribed parameter. To understand the structure of this covariance matrix, we 
performed a simulation on MuJoCo \cite{mujoco} with the QuadrupedPyMPC library \cite{turrisi2024}. We let the robot perform an eight-shape trajectory for $8$ seconds on flat terrain with trotting gait. We compared the ground truth base velocity 
with the one obtained from \eqref{eq:foot_velocity} and computed the covariance matrix of the error. The results are shown in Fig. \ref{fig:cov_foot_velocity}. It can be seen that the covariance matrix is not isotropic, with the largest uncertainty in the $z$ direction. The second largest covariance is in the 
forward direction.
In Fig. \ref{fig:mujoco_contact} we can see a comparison between the ground truth base velocity and the one obtained from~\eqref{eq:foot_velocity}. We observed that when the two feet are in the swing phase,~\eqref{eq:foot_velocity} is a good estimate of the base velocity (Fig. \ref{fig:velocity_swing}). In contrast,  during the touchdown event, when the feet transition from swing to stance phase, the estimation error increases, predominantly in the $z$ direction (Fig. \ref{fig:velocity_contact}).
\begin{figure}
    \centering
    \includegraphics[width=0.47\textwidth]{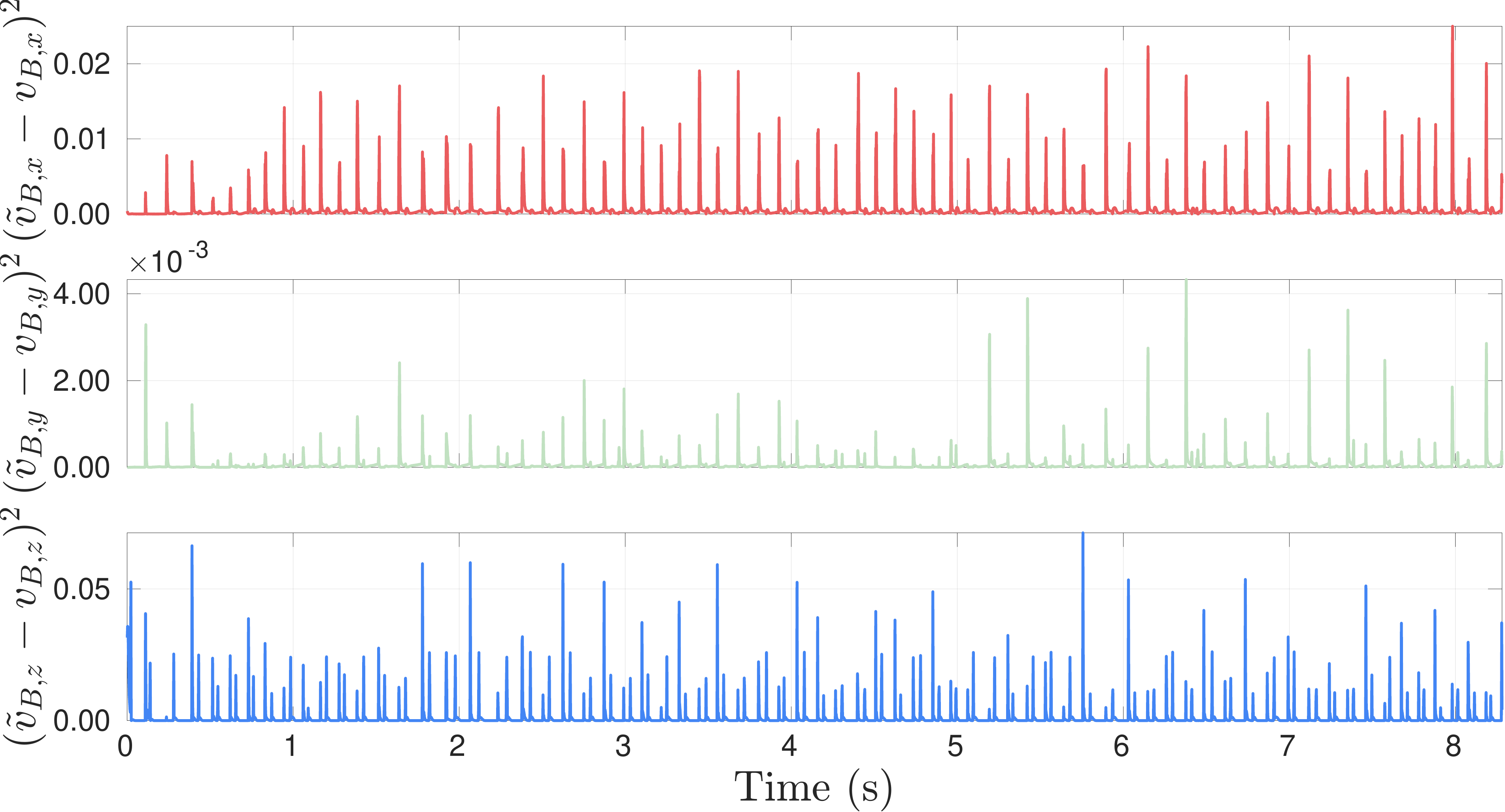}
    \caption{Covariance matrix of the base velocity obtained from foot velocities in contact on a virtual environment. The covariance matrix is not isotropic, with the largest uncertainty in the $z$ direction. The mean for each direction was given by $\mathbf{Q}_f = \text{diag}(8.02\times10^{-4},6.28 \times10^{-5},1.25 \times 10^{-3})$. The video of the simulation can be watched \href{https://youtu.be/CS5u8OvGXVM}{here}.}
    \label{fig:cov_foot_velocity}
\end{figure}
\begin{figure}[!h]
    \centering
    \subfloat[\protect\label{fig:velocity_swing}]{
        \includegraphics[width=0.2\textwidth,height=0.17\textwidth]{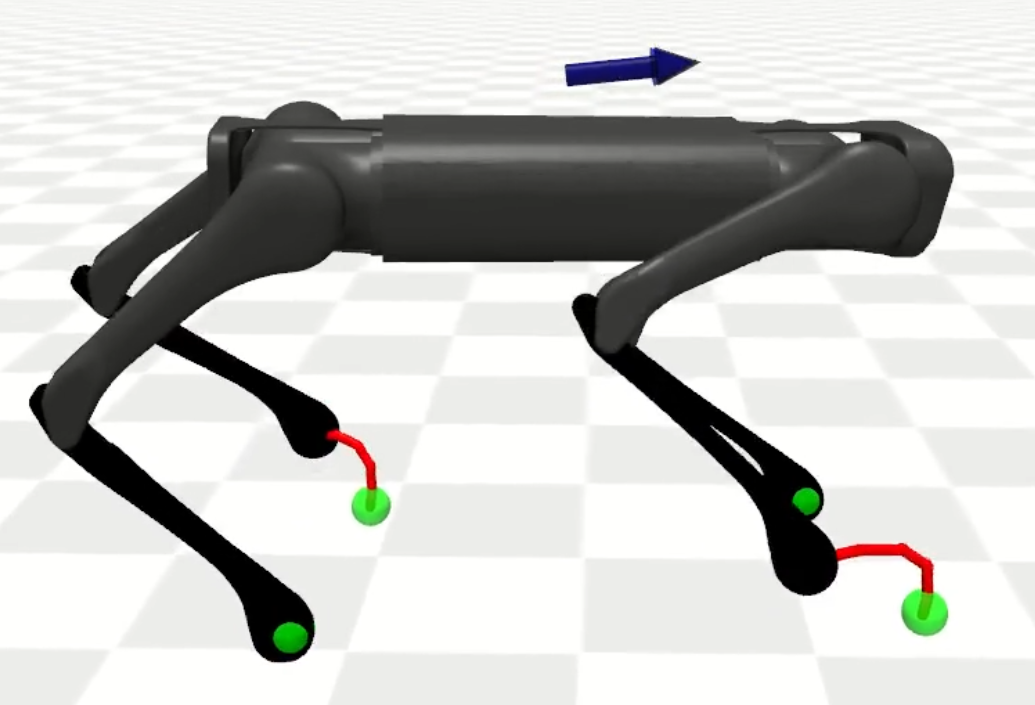}
    }
    \subfloat[\protect\label{fig:velocity_contact}]{
        \includegraphics[width=0.2\textwidth,height=0.17\textwidth]{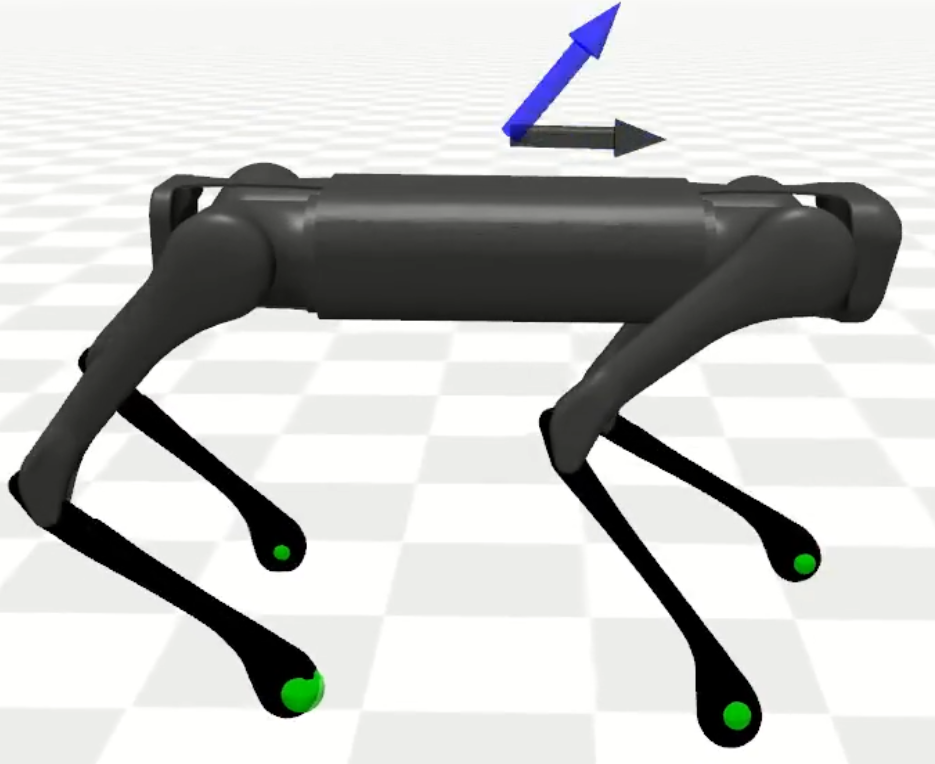}
    }
    \caption{Comparison between true base velocity in the base frame (black arrow) with the one obtained from feet velocities in contact (blue arrow).
        a) During the swing phase, the foot velocity is a good estimate of the base velocity.
    b) During the touchdown event, there is an increase in the error in the $z$ and forward directions.}
    \label{fig:mujoco_contact}
\end{figure}

Finally, we recall that we need to transform the measured velocity from the base frame 
to the IMU frame, since all filtering operations are performed there. 
Let $\{{}^{I}\mathbf{R}_{B}, {}^{B}\mathbf{r}_{BI}\}$, ${{}^{I}\mathbf{R}_{B} \in \mathrm{SO}(3)}$ and ${}^{B}\mathbf{r}_{BI} \in \mathbb{R}^{3}$, be the coordinate transformation from the base frame to the IMU frame, then we can write: 
\begin{equation}
    \tilde{\mathbf{v}}_{I,i}
=
{}^{I}\mathbf{R}_{B}
\left(
\tilde{\mathbf{v}}_{B,i}
+
\tilde{\boldsymbol{\omega}}_{B,i}
\times
{}^{B}\mathbf{r}_{BI}
\right) \approx {}^{I}\mathbf{R}_{B}\tilde{\mathbf{v}}_{B,i},
\end{equation}
where we considered the quantity $\tilde{\boldsymbol{\omega}}_{B,i}
\times
{}^{B}\mathbf{r}_{BI}$ negligible in relation to $\tilde{\mathbf{v}}_{B,i}$, because for quadruped robots the IMU frame is always close to the body frame. The new covariance will be transformed according $\mathbf{Q}_f :={}^I\mathbf{R}_{B}\mathbf{Q}_f {}^I\mathbf{R}_{B}^{T}$.

\subsection{IEKF and IterIEKF}
Following previous works \cite{bloesch2012,Hartley2019ContactaidedIE}, we adopt the Single Rigid Body assumption and approximate the robot as a floating box moving in $\mathbb{R}^3$.
Then, the dynamics of the box are governed by the linear acceleration $\tilde{\mathbf{a}}_I$ and angular 
velocity $\tilde{\boldsymbol{\omega}}_I$ measured by the IMU:
\begin{equation}
\begin{aligned}
	\tilde{\boldsymbol{\omega}}_{I,i} = \boldsymbol{\omega}_{I,i} + \mathbf{w}_{g,i}, \mathbf{w}_{g,i} 
  \sim \mathcal{N}(\mathbf{0}_{3,1}, \mathbf{Q}_g \in  \mathbb{R}^{3 \times  3}), \\
	\tilde{\mathbf{a}}_{I,i} = \mathbf{a}_{I,i} + \mathbf{w}_{a,i}, \mathbf{w}_{a,i}  
  \sim \mathcal{N}(\mathbf{0}_{3,1}, \mathbf{Q}_a \in  \mathbb{R}^{3 \times  3})
,\end{aligned}
\end{equation}
where the subscript $I$ denotes the IMU frame. We assume that the measurements as piecewise constant in $[t_i, t_i + \Delta t]$ and corrupted with white Gaussian noise. Notice that we assume, for simplicity, negligible IMU biases in this work.
We define a new variable $\mathbf{w}_i$ to accommodate all input noise:
\begin{equation}
\mathbf{w}_i := 
\begin{bmatrix} \mathbf{w}_{g,i} \\ \mathbf{w}_{a,i} \end{bmatrix} 
\sim \mathcal{N}(\mathbf{0}_{6,1}, \mathbf{Q}_i), \mathbf{Q}_{i} := 
\begin{bmatrix}
\mathbf{Q}_{g} & \mathbf{0}_{3} \\
\mathbf{0}_{3} & \mathbf{Q}_{a} \\
\end{bmatrix}
        .
\end{equation}

Santana et al. \cite{santana2024} showed that the prediction step can be cast as the natural frame dynamics defined
in \cite{barrau2022}:
\begin{equation}
\label{eq:prediction_step}
\begin{aligned}
    \overline{\mathcal{X}}_{i|i-1} := \mathcal{F}_{i-1}(\overline{\mathcal{X}}_{i-1}, 
\tilde{\boldsymbol{\omega}}_{I,i-1}, \tilde{\mathbf{a}}_{I,i-1}, \mathbf{w}_{i-1}, \Delta t) \approx \\
\begin{bmatrix}
\bar{\mathbf{R}}_{i-1}\mathrm{Exp}({\boldsymbol{\omega}}_{I,i-1}\Delta t) \mathrm{Exp}(\mathbf{G}_{i-1}^{1}\mathbf{w}_{g,i-1}) \\
\bar{\mathbf{v}}_{i-1} + (\bar{\mathbf{R}}_{i-1} \tilde{\mathbf{a}}_{I,i-1} + \mathbf{g}) \Delta t \\
\bar{\mathbf{p}}_{i-1} + \bar{\mathbf{v}}_{i-1}\Delta t +  (\bar{\mathbf{R}}_{i-1}\tilde{\mathbf{a}}_{I,i-1} + \mathbf{g})\frac{\Delta t^{2}}{2} \\
\end{bmatrix} =  \\
\begin{bmatrix}
\bar{\mathbf{R}}_{i-1} \mathrm{Exp}(\boldsymbol{\omega}_{I,i+1}\Delta t) \exp(\mathbf{G}_{i-1}^{1}\mathbf{w}_{g,i-1}) \\
\mathbf{F}_{i-1}\mathbf{x}_{i-1} + \mathbf{d}_{i-1} + \bar{\mathbf{R}}_{i-1} * \mathbf{s}_{i-1} + \bar{\mathbf{R}}_{i-1} *  \mathbf{G}^{2}_{i-1}\mathbf{w}_{a,i-1} \end{bmatrix} = \\
\mathcal{W}_{i-1}\Phi_{i-1}(\bar{\mathcal{X}}_{i-1}){\bar{\mathcal{Y}}}_{i-1}(\mathbf{w}_{i-1}),
\end{aligned}
\end{equation}
where $*$ denotes an element-wise multiplication: $\mathbf{R} * \mathbf{x} := [\mathbf{R} \mathbf{v}, \mathbf{R} \mathbf{p}]^{T}$, $\bar{(\cdot)}$ denotes the nominal state, $\mathbf{g} = [0,0,-9.81]$ is the gravity vector and
\begin{equation}
\begin{aligned}
&\mathcal{W}_{i-1} = (\mathbf{I}_3, \mathbf{d}_{i-1}), \\
&\bar{\mathcal{Y}}_{i-1} = ( \mathrm{Exp}_{\mathrm{SO}(3)}({\boldsymbol{\omega}}_{I,i-1}\Delta t) \mathrm{Exp}_{\mathrm{SO}(3)}(\mathbf{G}_{i-1}^{1} \mathbf{w}_{g,i-1}),\\ &\mathbf{s}_{i-1} + \mathbf{G}_{i-1}^{2} \mathbf{w}_{a,i-1} )  , \\
&\Phi_{i-1}(\overline{\mathcal{X}}_{i-1})= (\bar{\mathbf{R}}_{i-1}, \mathbf{F}_{i-1} \bar{\mathbf{x}}_{i-1}), \\
&\mathbf{F}_{i-1} = \begin{bmatrix}
\mathbf{I}_3 & \mathbf{0}_{3} \\
\Delta t \mathbf{I}_3 & \mathbf{I}_3  \\
\end{bmatrix}, \\
&\mathbf{d}_{i-1} = \begin{bmatrix}
\Delta t \mathbf{g} \\
\frac{\Delta t^{2}}{2} \mathbf{g}
\end{bmatrix},
\mathbf{s}_{i-1} = 
\begin{bmatrix}
 {\mathbf{a}}_{I,i-1}\Delta t \\
 {\mathbf{a}}_{I,i-1} \frac{\Delta t^{2}}{2}
\end{bmatrix}, \\
&\mathbf{G}_{i-1}^{1} = -\mathcal{J}_{r,\mathrm{SO}(3)}(\tilde{\boldsymbol{\omega}}_{I,i-1}\Delta t)  \Delta t, 
\mathbf{G}_{i-1}^{2} =  
\begin{bmatrix}
\mathbf{I}_3 \Delta t \\
\mathbf{I}_3\frac{\Delta t^2}{2} 
\end{bmatrix}.
\end{aligned}
\end{equation}
Notice that $\Phi_i:\mathrm{SE_2}(3)\to \mathrm{SE_2}(3)$ is an automorphism which ensures
that the covariance propagation of the (right) invariant error without process noise is exact and autonomous \cite{barrau2016, barrau2022} (see also Theorem 18 in \cite{barrau2019}).
In the presence of process noise, the approximate propagation was obtained in \cite{brossard2022}\cite{santana2024} as follows:
\begin{equation}
\label{eq:error_propagation}
\boldsymbol{\xi}_{i|i-1} \approx \mathbf{A}_{i-1}\boldsymbol{\xi}_{i-1} + \mathbf{B}_{i-1} \mathbf{w}_{i-1},
\end{equation}
where
\begin{equation}
\label{eq:B_noise}
\begin{aligned}
    \mathbf{A}_{i-1}:= \text{Ad}_{\mathcal{W}_{i-1}} \mathbf{M}_{i-1} ,
\mathbf{B}_{i-1} :=  \text{Ad}_{\bar{\mathcal{X}}_{i|i-1}} \mathbf{G}_{i-1},\\
\mathbf{M}_{i-1} =\begin{bmatrix}
\mathbf{I}_{3} & \mathbf{0}_{3,9} \\
\mathbf{0}_{9,3} & \mathbf{F}_{i-1} \\
\end{bmatrix},
\mathbf{G}_{i-1} = \begin{bmatrix}
\mathbf{G}_{i-1}^{1} & \mathbf{0}_{3} \\
\mathbf{0}_{6,3}  & \mathbf{G}_{i-1}^{\mathbf{*}2}
\end{bmatrix}, \\
\mathbf{G}_{i-1}^{\mathbf{*}2} =  
-\begin{bmatrix} 
\mathrm{Exp}_{\mathrm{SO_3}}(-\tilde{\boldsymbol{\omega}}_{I,i-1} \Delta t ) \Delta t \\
\mathrm{Exp}_{\mathrm{SO_3}}(-\tilde{\boldsymbol{\omega}}_{I,i-1} \Delta t ) \frac{\Delta t^{2}}{2}
\end{bmatrix}.
\end{aligned}
\end{equation}
The covariance propagation is given by:
\begin{equation}
    \mathbf{P}_{i|i-1} \approx \mathbf{A}_{i-1} \mathbf{P}_{i-1} \mathbf{A}_{i-1}^{T} 
    + \mathbf{B}_{i-1} \mathbf{Q}_{i-1} \mathbf{B}_{i-1}^{T}.
\end{equation}
In the following, we outline the modifications introduced in the measurement update of the IterIEKF proposed in \cite{goffin2025} with respect to the classical IEKF update step.

If we embed the measurement into $\mathbb{R}^{5}$ then, it can be obtained as a noisy action of the true state $\mathcal{X}_i$:
\begin{equation}
\begin{aligned}
\mathbf{y}_i := \begin{bmatrix} \tilde{\mathbf{v}}_{I,i} \\ -1 \\ 0 \end{bmatrix} = 
\begin{bmatrix} \mathbf{R}^{T}_i \mathbf{v}_i + \mathbf{w}_{f,i} \\ -1 \\ 0 \end{bmatrix} = \\
\begin{bmatrix} \mathbf{R}^{T}_i & -\mathbf{R}^{T}_i \mathbf{v}_i & -\mathbf{R}^{T}_i \mathbf{p}_i  \\
\mathbf{0}_{3,1} & 1 & 0 \\
0 & 0 & 1 \end{bmatrix}\begin{bmatrix}
\mathbf{0}_{3,1} \\
-1 \\
0
\end{bmatrix} + \begin{bmatrix}
\mathbf{w}_{f,i} \\
0 \\
0
\end{bmatrix} = \\
\mathcal{X}^{-1}_{i} \mathbf{d} + \mathbf{n}_i,
\end{aligned}
\end{equation}
where
\begin{equation}
\begin{aligned}
\mathbf{w}_{f,i} \sim \mathcal{N}(\mathbf{0}_{3,1}, \mathbf{Q}_f), \\
\mathbf{n}_i \sim \mathcal{N}\Big(\mathbf{0}_{5,1}, \mathbf{N}_{i} := 
    \begin{bmatrix}
        \mathbf{I}_{3} \\
        \mathbf{0}_{2,3}
    \end{bmatrix}
    \mathbf{Q}_{f}
    \begin{bmatrix}
        \mathbf{I}_{3} & \mathbf{0}_{3,2} 
\end{bmatrix} \Big),
\end{aligned}
\end{equation}
is white Gaussian noise.

For the right-invariant error, the innovation is defined as~\cite{barrau2022}:
\begin{equation}
    \label{eq:innov}
    \begin{aligned}
        \mathbf{z}_{i} := \bar{\mathcal{X}}_{i|i-1} \mathbf{y}_{i} - \mathbf{d} = \\
        \bar{\mathcal{X}}_{i|i-1} \mathcal{X}^{-1}_{i} \mathbf{d} - \mathbf{d} + \bar{\mathcal{X}}_{i|i-1} \mathbf{n} = \\
        \delta\mathcal{X}_{i}^{-1} \mathbf{d} - \mathbf{d} + \bar{\mathcal{X}}_{i|i-1} \mathbf{n} = \\
        \mathrm{Exp}(\boldsymbol{\xi}_{i})^{-1} \mathbf{d} - \mathbf{d} + \bar{\mathcal{X}}_{i|i-1} \mathbf{n},
    \end{aligned}
\end{equation}
where $\delta\mathcal{X}_{i} :=  \mathcal{X}_{i} \bar{\mathcal{X}}_{i|i-1}^{-1} :=\mathrm{Exp}(\boldsymbol{\xi}_{i})$ represents the current right-invariant error in the Lie Group. In the IEKF, the error $\boldsymbol{\xi}_{i}$ is assumed to be small, and the following
approximation is applied:
\begin{equation}
    \label{eq:iekf}
    \begin{aligned}
        \delta \mathcal{X}_{i}^{-1} \mathbf{d} - \mathbf{d} =  
        \mathrm{Exp}(\boldsymbol{\xi}_{i})^{-1}\mathbf{d} - \mathbf{d} \approx \\
        (\mathbf{I}_{9} - \boldsymbol{\xi}_{i}^{\wedge}) \mathbf{d} - \mathbf{d} =  \\
        -\boldsymbol{\xi}_{i}^{\wedge}\mathbf{d} = \\
        -\begin{bmatrix}
            (\boldsymbol{\xi}_{i}^{R})^{\wedge} (\mathbf{0}_{3,1}) + \boldsymbol{\xi}_{i}^{\mathbf{v}} (-1)  + \boldsymbol{\xi}_{i}^{\mathbf{p}}(0) \\
            \mathbf{0}_{1,3}(\mathbf{0}_{3,1}) + 0(-1) + 0(0) \\
            \mathbf{0}_{1,3}(\mathbf{0}_{3,1}) + 0(-1) + 0(0)
        \end{bmatrix} = \\
        \begin{bmatrix}
            \mathbf{0}_{3,1}^{\wedge}  & \mathbf{I}_{3}  & \mathbf{0}_{3} \\
            \mathbf{0}_{1,3} & \mathbf{0}_{1,3} & \mathbf{0}_{1,3} \\
            \mathbf{0}_{1,3} & \mathbf{0}_{1,3} &  \mathbf{0}_{1,3}
        \end{bmatrix}
        \begin{bmatrix}
            \boldsymbol{\xi}_{i}^{R} \\
            \boldsymbol{\xi}_{i}^{\mathbf{v}} \\
            \boldsymbol{\xi}_{i}^{\mathbf{p}}
        \end{bmatrix} = \mathbf{H} \boldsymbol{\xi}_{i},
    \end{aligned} 
\end{equation}
where $\mathrm{Exp}(\boldsymbol{\xi}_i) \approx (\mathbf{I}_9 + \boldsymbol{\xi}^{\wedge}_i)$. Then, it follows that:
\begin{equation}
    \mathbf{z}_{i} \approx \mathbf{H}  \boldsymbol{\xi}_{i} + \hat{\mathbf{n}}_i, 
\end{equation}
where $\hat{\mathbf{n}}_i \sim \mathcal{N}(\mathbf{0}_{5,1},\hat{\mathbf{N}}_i := \bar{\mathcal{X}}_{i|i-1}\mathbf{N}_i \bar{\mathcal{X}}_{i|i-1}^{T})$
and the estimate of the error $\hat{\boldsymbol{\xi}}_{i}$ is obtained via EKF equations at the tangent space $\mathbb{R}^{5}$.
Observing that the last two lines of $\mathbf{z}_{i}$ are zeros, we can reduce its dimension to $3$ as follows:
\begin{equation}
    \mathbf{\tilde z}_{i}:=[\mathbf{z}_{i}]_{1:3},
\end{equation}
and then we have:
\begin{equation}
    \mathbf{\tilde z}_{i} \approx \mathbf{\tilde H}_i  \boldsymbol{\xi}_{i} + \hat{\tilde{\mathbf{n}}}_i, 
\end{equation}
where
\begin{equation}
    \label{eq:tilde_H}
    \mathbf{\tilde H}_{i}:=
    \begin{bmatrix}
        \mathbf{0}_{3} & \mathbf{I}_{3} & \mathbf{0}_{3}
    \end{bmatrix} , 
\end{equation}
and $\hat{\tilde{\mathbf{n}}}_i \sim \mathcal{N}(\mathbf{0}_{3,1}, \hat{\tilde{\mathbf{N}}}_i := \bar{\mathbf{R}}_{i|i-1} \mathbf{Q}_f  \bar{\mathbf{R}}_{i|i-1}^{T})$.

In this work, we follow \cite{goffin2025} and define the estimate $\hat{\boldsymbol{\xi}}_{i}$ as 
the mode of the posterior distribution:
\begin{equation}
    \label{eq:minimization}
    \begin{aligned}
        \hat{\boldsymbol{\xi}}_{i} := 
        \argmax_{\boldsymbol{\xi}_i}  p(\boldsymbol{\xi}_i| \mathbf{y}_{i}) =  
        \argmax_{\boldsymbol{\xi}_i}  p(\boldsymbol{\xi}_i| \mathbf{z}_{i}) = \\ 
        \argmax_{\boldsymbol{\xi}_i} p(\boldsymbol{\xi}_i) p(\mathbf{z}_{i}| \boldsymbol{\xi}_i) =  \\
        \argmax_{\boldsymbol{\xi}_i} \mathcal{N}(\boldsymbol{\xi}_i; \mathbf{0}_{9,1}, \mathbf{P}_{i|i-1}) 
        \mathcal{N}(\mathbf{z}_{i}; \mathrm{Exp}(\boldsymbol{\xi}_i)^{-1} \mathbf{d} - \mathbf{d}, \hat{\mathbf{N}}_i) = \\
        \argmin_{\boldsymbol{\xi}_i} 
        \frac{1}{2}\| \boldsymbol{\xi}_i \|_{\mathbf{P}_{i|i-1}}^2 + 
        \frac{1}{2}\| \mathbf{z}_{i} - \mathrm{Exp}(\boldsymbol{\xi}_i)^{-1} \mathbf{d} + \mathbf{d}   \|_{\hat{\mathbf{N}}_{i}}^2,
    \end{aligned}
\end{equation}
where, in the last equality, the negative of its logarithm was applied. The equality
$p(\boldsymbol{\xi}_i \mid \mathbf{y}_i) = p(\boldsymbol{\xi}_i \mid \mathbf{z}_i)$
holds because $\mathbf{y}_i\mapsto \mathbf{z}_i$ is a bijection (see~\eqref{eq:innov}). The justification for the form of the conditional distribution $p(\mathbf{z}_i \mid \boldsymbol{\xi}_i)$ can be found in~\cite[p. 45]{shalom2002}. The mean of the prior error is taken as zero after each update~\cite{santana2024}, \cite[p. 210]{farrell2008}. Finally, $\| \mathbf{x} \|_{\mathbf{A}}^2 = \mathbf{x}^{T} \mathbf{A}^{-1} \mathbf{x}$ denotes the Mahalanobis distance. 

In~\cite{goffin2025}, two important results were demonstrated. First,~\eqref{eq:minimization} can be solved using the Gauss-Newton method, provided a linearization of $\mathrm{Exp}(\boldsymbol{\xi}_{i})^{-1}\mathbf{d}$ is available. Second, when~\eqref{eq:minimization} is used as the update step with highly precise measurements, that is, $\hat{\mathbf{N}}_{i}\to 0$, the resulting filter exhibits properties similar to those of the linear Kalman Filter that both the standard EKF and standard IEKF lack.

To obtain the linearization, we first recall \cite{sola2018}
\begin{equation}
    \label{eq:inv}
    \mathrm{Exp}(\boldsymbol{\xi}_i + \boldsymbol{\delta})^{-1} = \mathrm{Exp}(-\boldsymbol{\xi}_i - \boldsymbol{\delta}),
\end{equation}
then, following Lemma 2 of \cite{goffin2025}:
\begin{equation}
    \begin{aligned}
        \mathrm{Exp}(\boldsymbol{\xi}_i + \boldsymbol{\delta})^{-1} \mathbf{d} - \mathbf{d} = \mathrm{Exp}(-\boldsymbol{\xi}_i - \boldsymbol{\delta})\mathbf{d} - \mathbf{d} 
        \stackrel{\eqref{eq:right_jacobian}}{\approx} \\
        \mathrm{Exp}(-\boldsymbol{\xi}_i)\mathrm{Exp}(-\mathcal{J}_{r}(-\boldsymbol{\xi}_i) \boldsymbol{\delta} ) \mathbf{d} - \mathbf{d}
        \stackrel{\eqref{eq:inv}}{=}  \\
        \mathrm{Exp}(-\boldsymbol{\xi}_i)\mathrm{Exp}(\mathcal{J}_{r}(-\boldsymbol{\xi}_i)\boldsymbol{\delta}) ^{-1} \mathbf{d}  - \mathbf{d}
        \stackrel{\eqref{eq:iekf}}{\approx}  \\
        \mathrm{Exp}(-\boldsymbol{\xi}_i) (\mathbf{d} + \mathbf{H}_{i} \mathcal{J}_r(-\boldsymbol{\xi}_i) \boldsymbol{\delta}) - \mathbf{d}
        = 
        \\ \mathrm{Exp}(-\boldsymbol{\xi}_i) \mathbf{d} +  \mathrm{Exp}(-\boldsymbol{\xi}_i) \mathbf{H}_{i} \mathcal{J}_{r}(-\boldsymbol{\xi}_i) \boldsymbol{\delta} - \mathbf{d} \stackrel{\eqref{eq:tilde_H}}{=} 
        \\ [\mathrm{Exp}(-\boldsymbol{\xi}_i) \mathbf{d} - \mathbf{d}]_{1:3} +  \mathrm{Exp}_{\mathrm{SO}(3)}(-[\boldsymbol{\xi}_i]_{1:3}) \tilde{\mathbf{H}}_{i} \mathcal{J}_{r}(-\boldsymbol{\xi}_i) \boldsymbol{\delta} 
    .\end{aligned}
\end{equation}

From Proposition 2 in \cite{goffin2025}, or, more generally, from the methodology of the iterated EKF \cite{bell1994iterated}, follows the Gauss-Newton sequence for 
the optimization problem in~\eqref{eq:minimization}:
\begin{equation}
    \label{eq:next_H}
    \tilde{\mathbf{H}}_{i}^{j+1} = \mathrm{Exp}_{\mathrm{SO}(3)}(-[\boldsymbol{\xi}_{i}^{j}]_{1:3}) 
    \tilde{\mathbf{H}}_{i} \mathcal{J}_{r}(-\boldsymbol{\xi}_{i}^{j}),
\end{equation}
\begin{equation}
    \mathbf{S}_i^{j+1} = \tilde{\mathbf{H}}_{i}^{j+1} \mathbf{P}_{i|i-1} (\tilde{\mathbf{H}}_{i}^{j+1})^{T} + \hat{\tilde{\mathbf{N}}}_{i},
\end{equation}
\begin{equation}
    \mathbf{K}^{j+1}_{i} = \mathbf{P}_{i|i-1}(\tilde{\mathbf{H}}_{i}^{j+1})^{T}
    (\mathbf{S}_i^{j+1})^{-1},
\end{equation}
\begin{equation}
    \boldsymbol{\xi}_{i}^{j+1} = 
    \mathbf{K}_{i}^{j+1} (\tilde{\mathbf{z}}_{i} - [\mathrm{Exp}(-\boldsymbol{\xi}_{i}^{j})\mathbf{d} - \mathbf{d}]_{1:3} + 
    \tilde{\mathbf{H}}_{i}^{j+1} \boldsymbol{\xi}_{i}^{j}),
\end{equation}
\begin{equation}
  \boldsymbol{\xi}_{i}^{0} := \mathbf{0}_{9,1}.
\end{equation}
Finally, let $k$ denote the last iteration, then the update step for the right-invariant error IterIEKF
is given as follows:
\begin{equation}
    \begin{aligned}
        \hat{\boldsymbol{\xi}}_{i} := \boldsymbol{\xi}_{i}^{k}, \\
    \end{aligned}
\end{equation}
\begin{equation}
    \bar{\mathcal{X}}_{i} := \bar{\mathcal{X}}_{i|i} := \hat{\boldsymbol{\xi}}_{i} \oplus   \bar{\mathcal{X}}_{i|i-1},
\end{equation}
\begin{equation}
    \label{eq:covariance_update}
    \mathbf{P}_{i|i} = (\mathbf{I}_9 - \mathbf{K}_{i}^{0} \mathbf{H}_{i}^{0}) \mathbf{P}_{i|i-1},
\end{equation}
where we just need to update the covariance matrix once since $\mathbf{H}_i$ does not depend on 
$\boldsymbol{\xi}_i$ \cite{goffin2025}. To obtain the last iteration, we implemented as stopping
criterion the condition $\| \boldsymbol{\xi}^{j-1} - \boldsymbol{\xi}^j \| < \delta$, 
where $\delta = 10^{-4}$. We also implemented a second optional stopping criterion based on the 
computation of the loss in~\eqref{eq:minimization}. If it increases, the algorithm stops.

\subsection{${SO}(3)$-EKF and Iter${SO}(3)$-EKF}

In this section we develop the $\mathrm{SO}(3)$-EKF and its iterated version to serve as a 
baseline for the proposed method, although the later has never been formulated in the context of legged robot state estimation before, to our knowledge, and thus can be considered as a secondary contribution, as mentioned in Sec. \ref{sec:contributions}.
The $\mathrm{SO}(3)$-EKF formulation is similar to the quaternion-based one proposed in \cite{bloesch2013}. The main differences are:
(a) a world-centric view instead of robot-centric, (b) we do not consider IMU biases, and (c) we implement an error-state Extended Kalman Filter instead of an Unscented Kalman Filter.
The true state was defined in~\eqref{so3_true_state}, 
while the nominal state is $\bar{\boldsymbol{x}}_i := \{\bar{\mathbf{R}}_i, \bar{\mathbf{v}}_i, \bar{\mathbf{p}}_i\}$.
We define the right error-state at each iteration as:
\begin{equation}
    \begin{aligned}
        \delta \boldsymbol{x}_i := \{\delta \boldsymbol{\phi}_i, \delta \mathbf{v}_i , \delta \mathbf{p}_i \}  := \\
        \{\mathrm{Log}_{\mathrm{SO}(3)}(\mathbf{R}_i \bar{\mathbf{R}}_i^{T}), \mathbf{v}_i - \bar{\mathbf{v}}_i, \mathbf{p}_i - \bar{\mathbf{p}}_i\},\\
        \delta \boldsymbol{x}_i \sim \mathcal{N}({\mathbf{0}_9, \mathbf{\Sigma}
        _i}),
    \end{aligned}
\end{equation}
where $\mathbf{\Sigma}_i \in \mathbb{R}^{9 \times  9}$, represents the (right) error covariance. While prediction step follows~\eqref{eq:prediction_step}, the evolution of the error-state is given approximately by (\cite[p. 57]{sola2017},\cite[p. 15]{forster2017}): 
\begin{equation}
    \delta \boldsymbol{\phi}_{i} \approx \delta \boldsymbol{\phi}_{i-1} + 
    \mathrm{Adj}_{\bar{\mathbf{R}}_{i|i-1}} \mathbf{G}^{1}_{i-1} \mathbf{w}_{g, i-1},
\end{equation}
\begin{equation}
    \delta \mathbf{v}_{i} \approx 
    \delta \mathbf{v}_{i-1} - (\bar{\mathbf{R}}_{i-1}\tilde{\mathbf{a}}_{I,i-1})^{\wedge} \delta \boldsymbol{\phi}_{i-1} \Delta t - 
    \bar{\mathbf{R}}_{i-1} \mathbf{w}_{a,i-1} \Delta t,
\end{equation}
\begin{equation}
    \begin{aligned}
        \delta \mathbf{p}_{i} \approx 
        \delta \mathbf{p}_{i-1} + \delta \mathbf{v}_{i-1} \Delta t 
        - (\bar{\mathbf{R}}_{i-1}\tilde{\mathbf{a}}_{I,i-1})^{\wedge} \delta \boldsymbol{\phi}_{i-1} \frac{\Delta t^{2}}{2} \\
        -  \bar{\mathbf{R}}_{i-1} \mathbf{w}_{a,{i-1}} \frac{\Delta t^{2}}{2}
    .\end{aligned}
\end{equation}
The covariance propagation is given approximately by:
\begin{equation}
    \mathbf{\Sigma}_{i|i-1} \approx \bar{\mathbf{A}}_{i-1} \mathbf{\Sigma}_{i-1} \bar{\mathbf{A}}_{i-1}^{T} 
    + \bar{\mathbf{B}}_{i-1} \mathbf{Q}_{i-1} \bar{\mathbf{B}}_{i-1}^{T},
\end{equation}
where
\begin{equation}
    \begin{aligned}
        \bar{\mathbf{A}}_{i-1} = \begin{bmatrix}
            \mathbf{I}_3 & \mathbf{0}_{3} & \mathbf{0}_{3} \\
            -(\bar{\mathbf{R}}_{i-1}\tilde{\mathbf{a}}_{I,i-1})^{\wedge}\Delta t & \mathbf{I}_{3} & \mathbf{0}_{3} \\
            -(\bar{\mathbf{R}}_{i-1} \tilde{\mathbf{a}}_{I,i-1})^{\wedge} \frac{\Delta t^{2}}{2} &  \mathbf{I}_3 \Delta t & \mathbf{I}_{3}
        \end{bmatrix}, \\
        \bar{\mathbf{B}}_{i-1} = 
        \begin{bmatrix}
            \mathrm{Ad}_{\bar{\mathbf{R}}_{i|i-1}}\mathbf{G}_{i-1}^{1} & \mathbf{0}_{3} \\
            \mathbf{0}_{3} & -\bar{\mathbf{R}}_{i-1}\Delta t \\
            \mathbf{0}_{3} & -\bar{\mathbf{R}}_{i-1} \frac{\Delta t^{2}}{2}
        \end{bmatrix}
    .\end{aligned}
\end{equation}
Although $\mathrm{SE}_2(2)$ and $\mathrm{SO}(3) \times \mathbb{R}^{6}$ are diffeomorphic as smooth manifolds, they do not share the same Lie group structure. As a result, the map $\Phi$ in~\eqref{eq:prediction_step} is not a group automorphism of $\mathrm{SO}(3) \times \mathbb{R}^{6}$. Consequently, the corresponding noise-free error dynamics depend on the current state and are not autonomous (see Theorem~18 in \cite{barrau2019}).

For the $\mathrm{SO}(3)$-EKF, the innovation is defined as \cite{bloesch2012}:
\begin{equation}
    \begin{aligned}
        \delta \mathbf{z}_{i} := \tilde{\mathbf{v}}_{B,i} - \bar{\mathbf{R}}_i^{T} \bar{\mathbf{v}}_i = \\
        (\delta \boldsymbol{\phi}_i \oplus \bar{\mathbf{R}}_{i} )^{T}(\bar{\mathbf{v}}_i + \delta \mathbf{v}_i) 
        - \bar{\mathbf{R}}^{T}_i \bar{\mathbf{v}}_i + \mathbf{w}_{f,i} \approx \\
        \bar{\mathbf{R}}_i^{T} \bar{\mathbf{v}}_i^{\wedge} \delta \boldsymbol{\phi}_i +
        \bar{\mathbf{R}}_i^{T} \delta \mathbf{v}_i + 
        \mathbf{w}_{f,i},
    \end{aligned}
\end{equation}
then we define 
\begin{equation}
    \bar{\mathbf{H}}_{i}(\bar{\boldsymbol{x}}_i) := \bar{\mathbf{R}}_i^{T}
    \begin{bmatrix}
        \bar{\mathbf{v}}_{i}^{\wedge} & \mathbf{I}_3 & \mathbf{0}_{3}
    \end{bmatrix},
\end{equation}
and we consider $\mathrm{Exp}(\delta \boldsymbol{\phi}_i) \approx (\mathbf{I}_3 + \delta \boldsymbol{\phi}_i^{\wedge})$ and 
$\delta \boldsymbol{\phi}_i^{\wedge} \delta \mathbf{v}_i \approx \mathbf{0}_3$. Notice that the measurement model depends on the
nominal state $\bar{\boldsymbol{x}}_i$ through $\bar{\mathbf{H}}_i$, because the measurement function is not 
an invariant observer for the group $\mathrm{SO}(3) \times \mathbb{R}^{6}$.
Finally, the update step is obtained with 
the well-known EKF equations~\cite{bloesch2012}:
\begin{equation}
    \bar{\mathbf{S}}_i = \bar{\mathbf{H}}_{i} \mathbf{\Sigma}_{i|i-1} \bar{\mathbf{H}}_{i}^{T} + \mathbf{Q}_f,
\end{equation}
\begin{equation}
    \bar{\mathbf{K}}_i = \mathbf{\Sigma}_{i|i-1} \bar{\mathbf{H}}_{i}^{T} \bar{\mathbf{S}}_i^{-1},
\end{equation}
\begin{equation}
    \delta \hat{\boldsymbol{x}}_{i} = \bar{\mathbf{K}}_i \delta \mathbf{z}_{i},
\end{equation}
\begin{equation}
    \mathbf{\Sigma}_{i|i} = (\mathbf{I}_9 - \bar{\mathbf{K}}_i \bar{\mathbf{H}}_{i}) \mathbf{\Sigma}_{i|i-1}.
\end{equation}

The Iter$\mathrm{SO}(3)$-EKF can be derived by considering the minimization problem analogous to~\eqref{eq:minimization}:
\begin{equation}
    \label{eq:minimization_so3}
    \begin{aligned}
&\argmin_{\delta \boldsymbol{x}} 
\frac{1}{2}\|\delta \boldsymbol{x} \|_{\mathbf{\Sigma}_{i|i-1}}^2 + \\  
&\frac{1}{2}\| \tilde{\mathbf{v}}_{B,i} - \underbrace{(\delta \boldsymbol{\phi} \oplus \bar{\mathbf{R}}_{i|i-1})^{T} 
    (\bar{\mathbf{v}}_{i|i-1} + 
\delta \mathbf{v})}_{\mathbf{f}(\delta \boldsymbol{x})} \|^2_{\mathbf{Q}_f}.
        \end{aligned}
    \end{equation}
Analogously to the IterIEKF, we can solve~\eqref{eq:minimization_so3} using the Gauss-Newton method. We just
need to provide a linearization of $\mathbf{f}(\delta \boldsymbol{x})$:
\begin{equation}
    \begin{aligned}
        (\mathrm{Exp}(\delta \boldsymbol{\phi} + \boldsymbol{\epsilon}_{r})\bar{\mathbf{R}}_{i})^{T}
        (\bar{\mathbf{v}}_{i} + \delta \mathbf{v} + \boldsymbol{\epsilon}_{v}) 
        \approx \\
        [(\mathrm{Exp}(\delta \boldsymbol{\phi})
        \mathrm{Exp}(\mathcal{J}_{r, \mathrm{SO}(3)}(\delta \boldsymbol{\phi})\boldsymbol{\epsilon}_{r}) \bar{\mathbf{R}}_{i}]^{T} 
        (\bar{\mathbf{v}}_{i} + \delta \mathbf{v} + \boldsymbol{\epsilon}_{v}) 
        \approx \\
        [\mathrm{Exp}(\delta \boldsymbol{\phi})(\mathbf{I}_{3} + [\mathcal{J}_{r,\mathrm{SO}(3)}(\delta \boldsymbol{\phi}) \boldsymbol{\epsilon}_{r}]^{\wedge}) \bar{\mathbf{R}}_{i}]^{T}(\bar{\mathbf{v}}_{i} + \delta \mathbf{v} + \boldsymbol{\epsilon}_{v}) 
        \approx \\ 
        [\mathrm{Exp}(\delta \boldsymbol{\phi}) \bar{\mathbf{R}}_{i}]^{T} (\bar{\mathbf{v}}_{i} + \delta \mathbf{v}) 
        + \\ \bar{\mathbf{R}}_{i}^{T} \mathrm{Exp}(-\delta \boldsymbol{\phi})
        (\bar{\mathbf{v}}_{i} + \delta \mathbf{v})^{\wedge} 
        \mathcal{J}_{r, \mathrm{SO}(3)}(-\delta \boldsymbol{\phi})  \boldsymbol{\epsilon}_{r} + \\ 
        \bar{\mathbf{R}}_{i}^{T} \mathrm{Exp}(-\delta \boldsymbol{\phi}) \boldsymbol{\epsilon}_{v},
    \end{aligned}
\end{equation}
where in the last approximation we considered $\boldsymbol{\epsilon}_r^{\wedge} \boldsymbol{\epsilon}_v \approx \mathbf{0}_3$. Then, 
the Gauss-Newton sequence for Iter$\mathrm{SO}(3)$-EKF is given as follows:
\begin{equation}
    \begin{aligned}
        \bar{\mathbf{H}}_{i}^{j+1} =
        (\delta \boldsymbol{\phi}_i^{j} \oplus \bar{\mathbf{R}}_{i})^{T}\\
        \begin{bmatrix}
            (\bar{\mathbf{v}}_{i} + \delta \mathbf{v}_i^{j})^{\wedge} 
            \mathcal{J}_{r, \mathrm{SO}(3)}(-\delta \boldsymbol{\phi}_i^{j}) &
            \mathbf{I}_3 &
            \mathbf{0}_3
        \end{bmatrix},
    \end{aligned}
\end{equation}
\begin{equation}
    \bar{\mathbf{S}}_i^{j+1} = \bar{\mathbf{H}}_{i}^{j+1} \mathbf{\Sigma}_{i|i-1} (\bar{\mathbf{H}}_{i}^{j+1})^{T} + \mathbf{Q}_f,
\end{equation}
\begin{equation}
    \bar{\mathbf{K}}^{j+1}_{i} = \mathbf{\Sigma}_{i|i-1}(\bar{\mathbf{H}}_{i}^{j+1})^{T}
    (\bar{\mathbf{S}}_i^{j+1})^{-1},
\end{equation}
\begin{equation}
    \delta \boldsymbol{x}_{i}^{j+1} = 
    \bar{\mathbf{K}}_{i}^{j+1} (\tilde{\mathbf{v}}_{B,i} - \mathbf{f}(\delta \boldsymbol{x}_i^{j}) + 
    \bar{\mathbf{H}}_{i}^{j+1} \delta \boldsymbol{x}_{i}^{j}),
\end{equation}
\begin{equation}
    \bar{\mathbf{H}}_{i}^{0} := \bar{\mathbf{H}}_{i}(\mathbf{0}_{9,1}),\delta \boldsymbol{x}_{i}^{0} := \mathbf{0}_{9,1}.
\end{equation}
Let $k$ denote the last iteration, then the update step for the Iter$\mathrm{SO}(3)$-EKF is given as follows:
\begin{equation}
    \delta \hat{\boldsymbol{x}}_{i} := \delta \boldsymbol{x}_{i}^{k},
\end{equation}
\begin{equation}
    \bar{\boldsymbol{x}}_i := \bar{\boldsymbol{x}}_{i|i} =\delta \hat{\boldsymbol{x}}_{i} \oplus \bar{\boldsymbol{x}}_{i|i-1},
\end{equation}
\begin{equation}
    \mathbf{\Sigma}_{i|i} = (\mathbf{I}_9 - \bar{\mathbf{K}}_{i}^{k} \bar{\mathbf{H}}_{i}^{k}) \mathbf{\Sigma}_{i|i-1}.
\end{equation}
\begin{figure}
    \centering
    \includegraphics[width=0.4\textwidth]{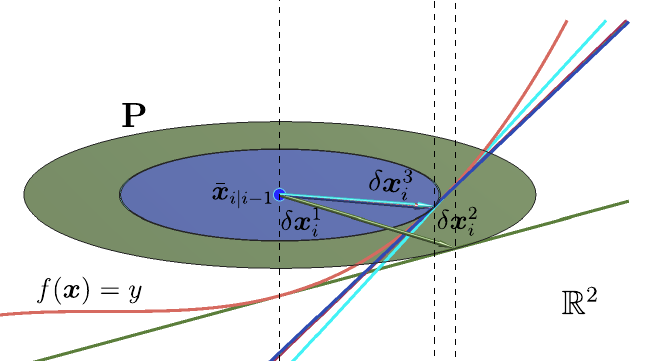}
    \caption{Geometrical interpretation of the Iterated Kalman Filter. The red curve is the level set of the  
        non-linear measurement  ${f}(\boldsymbol{x}) = y$. At each iteration $j$, 
        the measurement function is linearized around the state  $\bar{\boldsymbol{x}}_{i|i-1} + \delta \boldsymbol{x}^{j-1}_i$, and the first contact point 
        of the ellipsoid (prior) with the tangent plane gives an estimate of the error $\delta \boldsymbol{x}_i$.
    }
    \label{fig:iterated_kalman_filter}
\end{figure}
The update step of Iter$\mathrm{SO}(3)$-EKF can be seen as a natural extension of the Iterated Kalman Filter presented in \cite{bell1994iterated}.
The main difference is that the optimization is being performed on a Lie group instead of $\mathbb{R}^{n}$, as the state variable encoding the orientation does not live in a vector space. For completeness, 
we give a geometrical interpretation of the Iterated Kalman Filter in Fig. \ref{fig:iterated_kalman_filter}. This also helps to justify why we use the 
last Jacobian $\bar{\mathbf{H}}_{i}^{k}$ to update the covariance matrix. 
We can see in Fig. \ref{fig:iterated_kalman_filter} that at each iteration $j$, the measurement function is linearized around the current estimate of the state, 
and the first contact point of the ellipsoid (prior) with the tangent plane gives an estimate of the error $\delta \boldsymbol{x}_i$. Therefore, at the last iteration $k$,
the covariance matrix should be updated with the last Jacobian $\bar{\mathbf{H}}_{i}^{k}$. 

\subsection{Observability analysis}
For the observability analysis, we consider the dynamics of the right-invariant error without process noise and measurement noise:
\begin{equation}
    \begin{aligned}
        \boldsymbol{\xi}_{i}  = \mathbf{A}_{i} \boldsymbol{\xi}_{i-1} =  
        \begin{bmatrix}
            \mathbf{I}_{3} & \mathbf{0}_{3} & \mathbf{0}_{3} \\
            \Delta t\mathbf{g}_{\times}  & \mathbf{I}_{3} & \mathbf{0}_{3}  \\
            \frac{\Delta t^{2}}{2} \mathbf{g}_{\times}  & \mathbf{I}_{3} & \mathbf{I}_{3}
        \end{bmatrix}  \boldsymbol{\xi}_{i-1},
        \\
        \mathbf{z}_{i} = \mathrm{Exp}(\boldsymbol{\xi}_i)^{-1} \mathbf{d} - \mathbf{d}.
    \end{aligned}
\end{equation}
Following \cite[p. 21]{Hartley2019ContactaidedIE}, we built the observability matrix:
\begin{equation}
    \label{eq:observability}
    \begin{aligned}
        \begin{bmatrix}
            \mathbf{z}_{i} \\
            \mathbf{z}_{i+1} \\
            \mathbf{z}_{i+2}
        \end{bmatrix} = 
        \begin{bmatrix}
            \mathrm{Exp}(\boldsymbol{\xi}_{i})^{-1} \mathbf{d} - \mathbf{d} \\
            \mathrm{Exp}(\mathbf{A}_{i} \boldsymbol{\xi}_{i})^{-1} \mathbf{d} - \mathbf{d}  \\
            \mathrm{Exp}(\mathbf{A}_{i}^{2} \boldsymbol{\xi}_{i})^{-1} \mathbf{d} - \mathbf{d}
        \end{bmatrix} \approx \\
        \begin{bmatrix}
            \mathbf{H} \\
            \mathbf{H} \mathbf{A}_{i}\\
            \mathbf{H}\mathbf{A}_{i}^{2}
        \end{bmatrix} \boldsymbol{\xi}_{i} =  
        \underbrace{ \begin{bmatrix}
                \mathbf{0}_{3} & \mathbf{I}_{3} & \mathbf{0}_{3} \\
                \Delta t \mathbf{g}_{\times} & \mathbf{I}_{3} & \mathbf{0}_{3} \\
                2 \Delta t \mathbf{g}_{\times} &  \mathbf{I}_{3} & \mathbf{0}_{3}
        \end{bmatrix} }_{ \mathcal{O} }\boldsymbol{\xi}_{i}
    .\end{aligned}
\end{equation}
By inspecting $\mathcal{O}$ in~\eqref{eq:observability}, we see that the position is not observable. Moreover, as the third column of $\mathbf{g}_{\times}$ is composed of zeros, we have that the yaw angle is also not observable. This conclusion was the same as the one obtained in \cite{bloesch2013},
although our approach required far less computation as it leverages natural $\mathrm{SE}_2(3)$ Lie group coordinates, in which the observablity matrix is independent of the state, contrary to a more standard analysis \cite{bloesch2013}. Notice that the fact that the velocity part of the right-invariant error is observable means that the 
velocity in the IMU frame is observable. This is shown in the following calculation \cite[p. 5]{santana2024}:
\begin{equation}
    \begin{aligned}
        e^{\mathbf{v}} :=  
        \mathbf{v} - \mathbf{R}\bar{\mathbf{R}}^{T} \bar{\mathbf{v}}  =
        \mathcal{J}_{l,\mathrm{SO}(3)}(\boldsymbol{\xi}^{R}) \underbrace{\boldsymbol{\xi}^{\mathbf{v}}}_{\to \mathbf{0}_3} \to  \mathbf{0}_3 \implies \\
        \bar{\mathbf{R}}^{T}\bar{\mathbf{v}} \to \mathbf{R}^{T} \mathbf{v}.
    \end{aligned}
\end{equation}
As the estimation error converges to zero, the velocity expressed in the base frame also converges, demonstrating its observability. For this reason, the velocity is presented in the base frame in the Results section.

Analogously, we have that any rotation is observable except for the one around the gravity vector. 
Mathematically, this means that $\xi_x^{R} \to  0$ and $\xi_y^{R} \to  0$ and, in the limit, $\boldsymbol{\xi}^{\mathbf{R}} \to \xi_z^{R} \mathbf{e}_3$, $\mathbf{e}_3 := [0 \,0\,1]^T$. 
Moreover, this implies that only the gravity direction in the base frame, $\mathbf{u}:= \mathbf{R}^{T} \mathbf{e}_3$, is observable: 
\begin{equation}
    \label{eq:vertical_observability}
    \begin{aligned}
        \mathbf{u} = \mathbf{R}^{T} \mathbf{e}_{3} =  \\
        \bar{\mathbf{R}}^{T} \mathrm{Exp}_{\mathrm{SO}(3)}(-\boldsymbol{\xi}^{\mathbf{R}}) =  \\
        \bar{\mathbf{R}}^{T}[\mathbf{I}_{3} - (\alpha \mathbf{I}_{3} - \beta (\boldsymbol{\xi}^{\mathbf{R}})^{\wedge}) (\boldsymbol{\xi}^{\mathbf{R}})^{\wedge}] \mathbf{e}_{3} \stackrel{\boldsymbol{\xi}^{\mathbf{R}} \to \xi_z^{R} \mathbf{e}_3}{\implies}  \\
        \bar{\mathbf{R}}^{T}[\mathbf{e}_{3} -
        \xi_{z}^{R}(\alpha \mathbf{I}_{3} - \beta \xi_{z}^{R} \mathbf{e}_{3}^{\wedge})\underbrace{ \mathbf{e}_{3}^{\wedge} \mathbf{e}_{3} }_{ \mathbf{0}_{3}})] = 
        \bar{\mathbf{R}}^{T} \mathbf{e}_{3} := \bar{\mathbf{u}},
    \end{aligned}
\end{equation}
where $\alpha$ and $\beta$ are coefficients of the Rodrigues' rotation formula \cite{sola2018}. 
We denote by roll and 
pitch angles the following components of $\mathbf{u}$: $\theta := \mathrm{atan2}(u_y, u_z)$ and $\phi := \mathrm{asin}(u_x)$ 
\cite[p. 114]{corke2023}.

\subsection{Geometric Interpretation of Invariant Iterated Kalman Filter in the face of noise-free measurements}
\label{section:geometric-intuition}

\begin{figure*}[t]
    \centering
    \subfloat[\protect\label{fig:oblique_projection}]{
\def\svgwidth{\columnwidth}%
\resizebox{0.40\textwidth}{!}{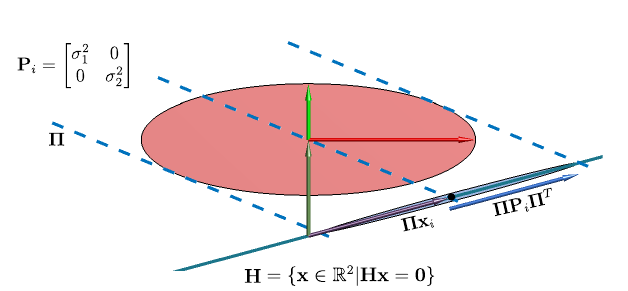}%

    }
    \subfloat[\protect\label{fig:iiekf_geom}]{
\def\svgwidth{\columnwidth}%
\resizebox{0.30\textwidth}{!}{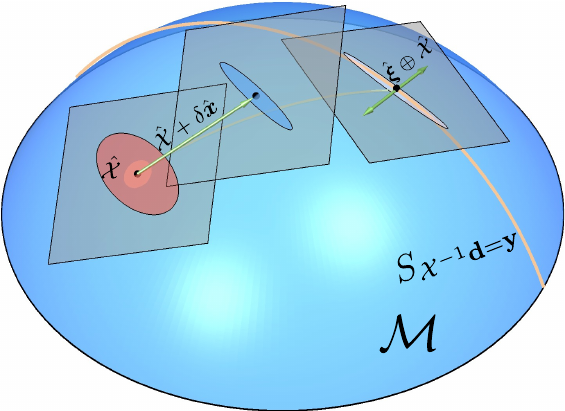}%

    }
    \caption{a) The Kalman Filter update can be seen as an oblique projection on the observed set. 
        b) Update step with EKF ($\hat{\mathcal{X}} + \delta \hat{\boldsymbol{x}}$) and IterIEKF ($\hat{\boldsymbol{\xi}} \oplus \hat{\mathcal{X}}$). While after the update with IterIEKF the state is guaranteed to belong to the 
    constraint and there is no uncertainty orthogonal to it, neither of these properties can be guaranteed by the EKF.}
\end{figure*}

First, recall that in the face of noise-free measurements, the linear Kalman Filter update is given by 
\cite[p. 4]{goffin2025}:
\begin{equation}
    \label{eq:lin_kalman}
    \begin{aligned}
        \mathbf{x}_{i + 1} = \mathbf{x}_i + \mathbf{P}_i\mathbf{H}^{T} (\mathbf{H} \mathbf{P}_i \mathbf{H}^{T})^{-1} 
        (\mathbf{y}_i - \mathbf{H} \mathbf{x}_i) = \\ \mathbf{x}_i + \mathbf{K}(\mathbf{y}_i - \mathbf{H} \mathbf{x}_i)
    .\end{aligned}
\end{equation}
If we consider $\mathbf{y}_i = \mathbf{0}$, then~\eqref{eq:lin_kalman} can be rewritten as:
\begin{equation}
    \label{eq:oblique_projection}
    \mathbf{x}_{i + 1} = (\mathbf{I} - \mathbf{K} \mathbf{H}) \mathbf{x}_i = \mathbf{\Pi} \mathbf{x}_i.
\end{equation}
It happens that~\eqref{eq:oblique_projection} has a simple geometric interpretation: it is the first contact of the distribution $\mathcal{N}(\mathbf{x}_i, \mathbf{P}_i)$
with the set $\text{Ker } \mathbf{H}$. This point of contact can be obtained by the oblique projection (defined by $\mathbf{\Pi}$) of $\mathbf{x}_i$ on $\text{Ker } \mathbf{H}$, i.e. $\mathbf{\Pi} \mathbf{x}_i$. This idea is illustrated in 
Figure \ref{fig:oblique_projection} for a 2D
example. It can be observed that the current covariance is collapsed in the direction of $\mathbf{\Pi}$ and the covariance of the new state, $\mathbf{P}_{i+1} = \mathbf{\Pi} \mathbf{P}_i \mathbf{\Pi}^{T}$,
is contained in $\text{Ker } \mathbf{H}$. 
This implies that there is no uncertainty along $\text{Im }\mathbf{H}$, i.e. $\mathbf{H} \mathbf{P}_{i+1} \mathbf{H}^{T} = \mathbf{0}$.
The same reasoning was used in \cite[p. 4]{ko2005} for dealing with the exact state's constraint. 

The main breakthrough of \cite{barrau2019,goffin2025} was showing that a natural extension to this methodology exists
when $\mathcal{X}$ belongs to a particular nonlinear manifold $\mathcal{M}$ that arises in robot state estimation. 
Let $S_{\mathcal{X}_i^{-1}\mathbf{d}=\mathbf{y}_i}$ denote the observed set associated to a noise-free measurement $\mathbf{y}_i$ (see Definition \ref{def:submanifold}). Then, it can be
shown that, after the update with IterIEKF, two properties are ensured for $\hat{\mathcal{X}}_{i}$ and $\mathbf{P}_{i|i}$: 
\begin{enumerate}

\item
The state will belong to the observed set. We prove this property as follows. As discussed in the previous section, 
if the Gauss-Newton descent properly converges, say to $\hat{\boldsymbol{\xi}}_{i}$, then the 
constraint $\mathbf{z}_i - \mathrm{Exp}(\hat{\boldsymbol{\xi}}_{i})^{-1} \mathbf{d} + \mathbf{d} = \mathbf{0}_3$ is satisfied, which implies:
\begin{equation}
\begin{aligned}
    \mathbf{z}_i - \mathrm{Exp}(-\hat{\boldsymbol{\xi}}_{i}) \mathbf{d} - \mathbf{d} = \mathbf{0}_3 \implies \\
    \mathrm{Exp}(-\hat{\boldsymbol{\xi}}_{i}) \mathbf{d} - \mathbf{d} = \bar{\mathcal{X}}_i\mathbf{y}_i - \mathbf{d} \implies \\
    \mathrm{Exp}(-\hat{\boldsymbol{\xi}}_{i}) \mathbf{d} = 
    \bar{\mathcal{X}}_i \mathbf{y}_i \implies \\
    (\mathrm{Exp}(\hat{\boldsymbol{\xi}}_{i})\bar{\mathcal{X}}_i)^{-1} \mathbf{d} = \mathbf{y}_i,
\end{aligned}
\end{equation}
then, after the update step, 
$\bar{\mathcal{X}}_{i} = \hat{\boldsymbol{\xi}}_{i} \oplus \bar{\mathcal{X}}_{i|i-i} \in  S_{\mathcal{X}^{-1}_i\mathbf{d} = \mathbf{y}_i}$, and the velocity in the base frame 
$\hat{\mathbf{R}}_{i}^{T}\hat{\mathbf{v}}_{i}$ will be identical to the true value in face of a noise-free measurement.

\item There will be no uncertainty transversal to the set, i.e., $\text{Im } \mathbf{P}_{i} = T_{\hat{\mathcal{X}}} S$, where the tangent space is expressed in the Lie algebra associated to right translations on the group (see Theorem 2 in \cite{goffin2025}). 

\end{enumerate}

On the other hand, no counterpart of those properties
exists for the vanilla EKF, and only the first property is ensured for the iterated EKF on Lie group (represented in this work by Iter$\mathrm{SO}(3)$-EKF). This is illustrated in Fig. \ref{fig:iiekf_geom}.

In the following, 
we describe the topology of the observed set for the proposed measurement model. 
For simplicity, we choose to illustrate in two dimensions using the group $\mathrm{SE}_2(2)$, with $\mathbf{d} = [\mathbf{0}_{2,1}, 1,0]^{T}$.
For this problem, the isotropy group $G_\mathbf{d}$ is given by:
\begin{equation}
    G_\mathbf{d} = \{(\mathbf{R}, \{\mathbf{v}, \mathbf{p} \} ) \in  \mathrm{SE_2(2)} | \mathbf{R} \in \mathrm{SO}(2), \mathbf{v} = \mathbf{0}_2, \mathbf{p} \in  \mathbb{R}^{2} \},
\end{equation}
and the observed set is $G_\mathbf{d} \mathcal{X}$, where $\mathcal{X}$ is a fixed true state. 
Notice that, different from the observed set discussed in Section \ref{sec:motivating_example}, 
the rotation-translation component in this case is not bounded. This reflects the fact that position and yaw angle are unobservable.

Now, we describe the influence of the covariance matrix in the update state found by IterIEKF. Let
$\mathbf{P} = \mathrm{diag}(P_\theta, P_x^{v}, P_y^{v}, P_x^{x}, P_y^{x})$ and the error state ${\boldsymbol{\xi}}_{i|i-1} = \{\xi_\theta, \xi_x^{v}, \xi_y^{v}, \xi_x^{p}, \xi_y^{p}\} \sim \mathcal{N}(\mathbf{0}, \mathbf{P})$. 
As shown in Fig. \ref{fig:compare_covariance}, the estimated error $\hat{\boldsymbol{\xi}}_{i}$ is dependent on the computed innovation $\mathbf{z}_i$
and current covariance $\mathbf{P}$. Similarly, we now demonstrate how its components impact the proposed update step. In Fig. \ref{fig:comparece_covariances_vel}, 
we considered three initial states, spaced by 1m along the $y$-axis, $\bar{\mathcal{X}} = \{i(\pi/3), i+1 , i+1 , 0, i\}$ with $i \in \{0,1,2\}$, and a true state 
$\mathcal{X} = \{\pi/3, 1, 1, 4, 2\}$. We simulated an update with IterIEKF using an almost noise-free true base velocity.
First, note that if $P_x^{v} = P_y^{v} = 0$, then trivially $\hat{\boldsymbol{\xi}}_{i} = \mathbf{0}_5$, since there is no uncertainty in the velocity error. 
Moreover, if $P_x^{v} = P_y^{v} = c$, then the solution is $\hat{\boldsymbol{\xi}}_i = \{0,\xi_x,\xi_y,0,0\} = \{0,\mathbf{R}^{T} \mathbf{v} - \bar{\mathbf{v}}_i\}$, 
which is a linear problem that can be solved exactly with IEKF. 
The situation becomes more interesting when $P_x^{v} = 0$ or $P_y^{v} = 0$. In this case, there is only one non-zero component in the velocity error, $\xi_x$ or $\xi_y$. Let us say that $P_y^{v} = 0$, then the only way to appear a component $y$ in the error velocity (in the base frame) is by means of a rotational component $\xi_\theta$ in the error. 
Analogously, when $P_x^v = 0$, $\xi_{\theta}$ is also required in order to appear a component $x$ in the error velocity.
This makes the problem nonlinear and unsolvable by IEKF. However, IterIEKF can solve it exactly. The results show that even in highly nonlinear cases, the update with IterIEKF successfully recovered the true base velocity. The number of iterations required for convergence in $i=2$ was $5$ for $P_y^{v} = 0$ and $20$ for $P_x^{v} = 0$. 
We also considered an intermediate case where $P_y^{v} / P_x^{v} = 2$, which required $6$ iterations to converge. As expected, the position and yaw angle remain unobservable.
\begin{figure}[t]
    \centering
    \subfloat[\protect\label{fig:comparece_covariances_vel}]{
        \includegraphics[scale=0.35]{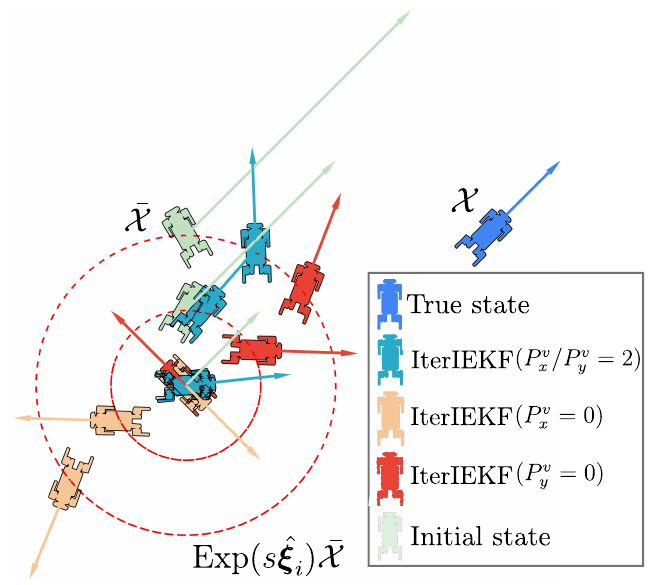}
    }
    \hfill
    \subfloat[\protect\label{fig:compare_update_vel}]{
         \includegraphics[scale=0.3]{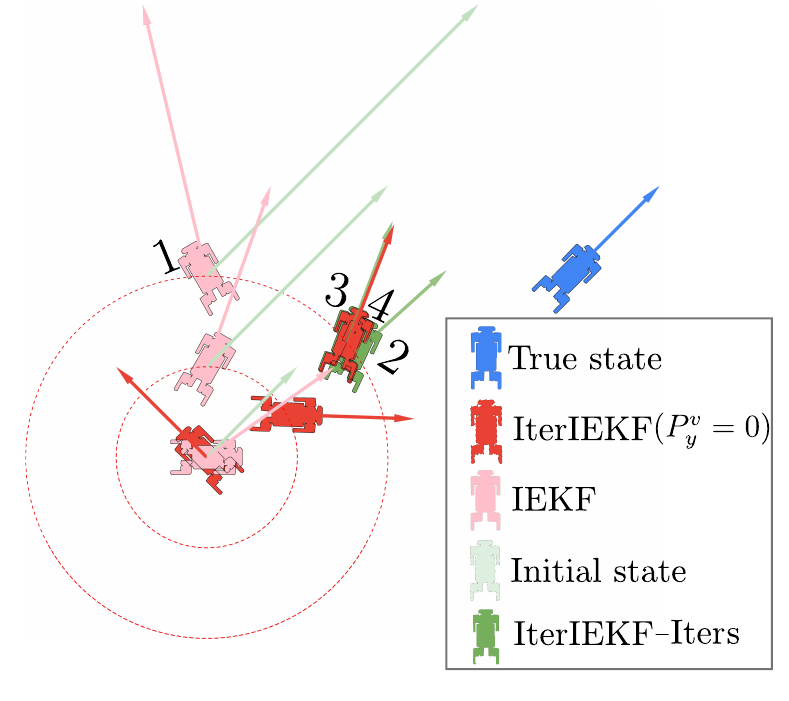}
    }
    \caption{Comparison of the update step using IterIEKF a) for different values of $\mathbf{P}$ b) and IEKF with almost noise-free velocity measurements in the base frame.}
    
\end{figure}

To illustrate the difference between IterIEKF and IEKF, we also performed an update step with IEKF under the same conditions as before. 
The result is shown in Fig. \ref{fig:compare_update_vel}, where the iterations of IterIEKF are also highlighted. We considered the case $P_y^{v} = 0$ to enforce a nonlinear problem. 
We can observe that IEKF is unable to perform a rotation in order to match the measured base velocity, it is only able to correct the velocity linearly.

\subsection{On the modification of the update step to avoid abrupt jumps}
As discussed in Section~\ref{section:geometric-intuition}, IterIEKF applies at each iteration a corrective rotation about the $z$-axis in order to enforce the linear velocity measurement in the base frame. 
In simulation, however, we observed that this mechanism may introduce noticeable jumps in the estimated yaw angle, as illustrated in Fig.~\ref{fig:without_modification}. 
It is important to emphasize that these yaw variations do not affect the quantities constrained by the measurement model: the linear velocity expressed in the base frame, as well as roll and pitch angles, remain unchanged. 
This behavior reflects the fact that yaw is non-observable under the considered sensing configuration. Consequently, the filter is free to adjust the yaw angle without violating the measurement constraints, which can lead to abrupt but dynamically consistent changes.
While this does not compromise the estimator consistency, such discontinuities may be undesirable in applications that rely directly on the yaw estimate, e.g., for navigation or high-level planning. 
To improve practical usability, we therefore introduce an optional post-update correction step that removes the spurious $z$-axis rotation while preserving all measurement-consistent quantities. 
Specifically, after the IterIEKF update, we apply
\begin{equation}
    \label{eq:modification}
    \hat{\mathcal{X}}_{i|i} := 
    \begin{bmatrix} 
        \mathbf{0}_{1,2} & -\hat{\xi}_{z,i}^{R} & \mathbf{0}_{1,6} 
    \end{bmatrix}^{T}
    \oplus 
    \hat{\mathcal{X}}_{i|i},
\end{equation}
which cancels the unnecessary yaw component introduced during the iterative correction. 
The same simulation with this optional modification is shown in Fig.~\ref{fig:with_modification}.
\begin{figure}[t]
    \centering
    \subfloat[\protect\label{fig:without_modification}]{
        \includegraphics[scale=0.05]{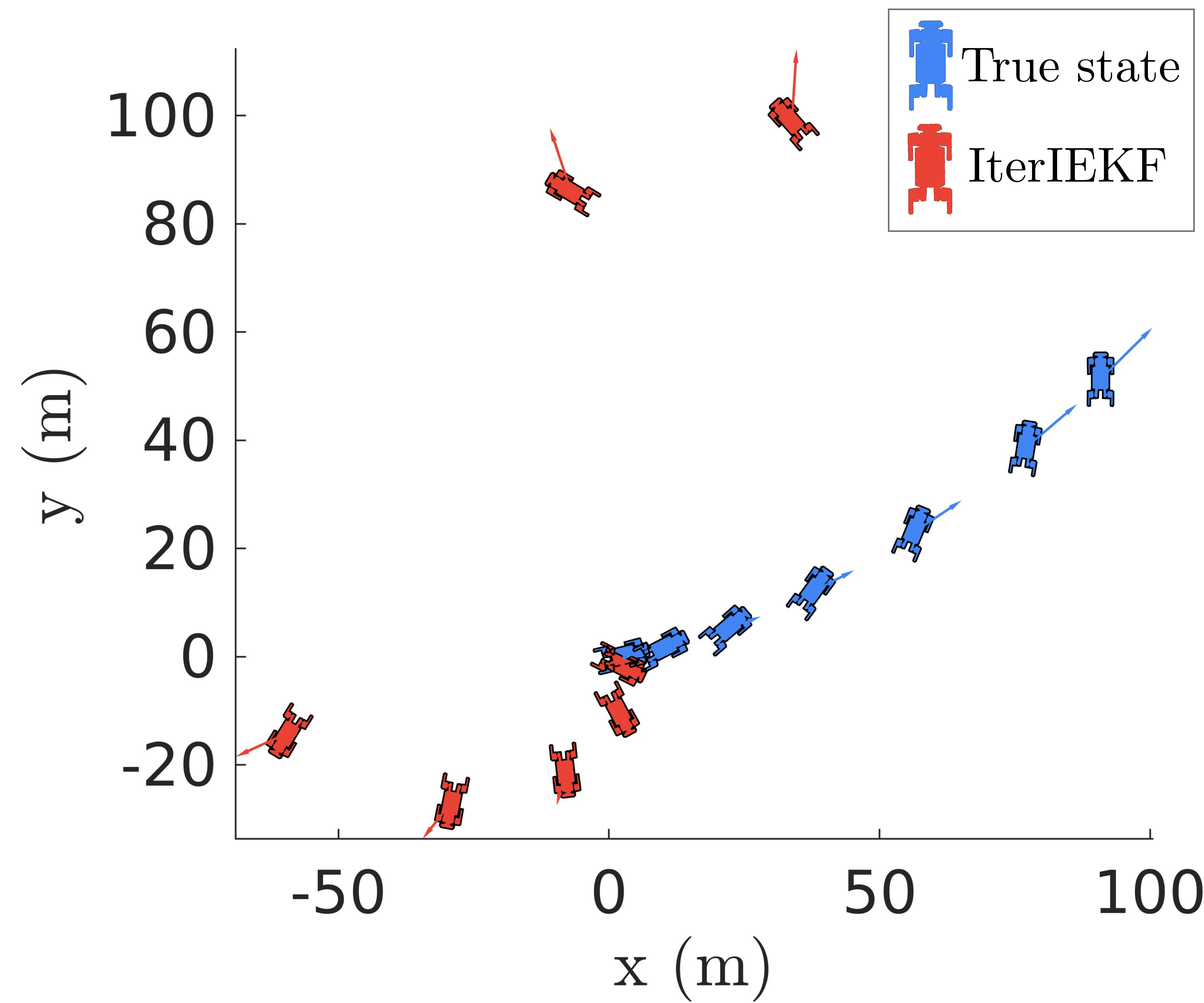}
    }
    \subfloat[\protect\label{fig:with_modification}]{
        \includegraphics[scale=0.035]{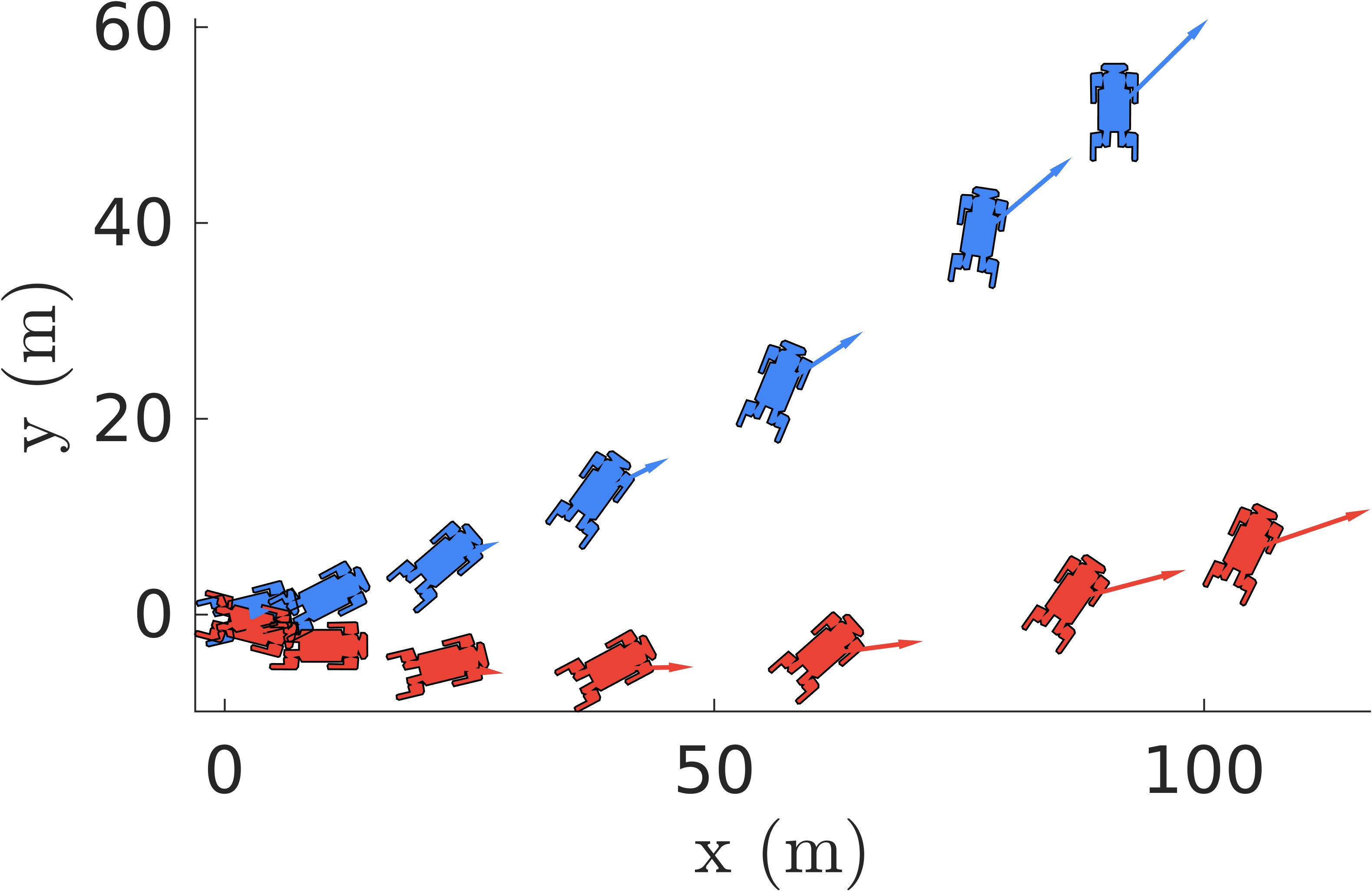}
    }
    \caption{a) The IterIEKF update (in red) causes abrupt jumps in the position estimate. 
    b) Mitigation of the abrupt changes caused by the update step, applying the modification proposed in proposed in~\eqref{eq:modification}.}
\end{figure}

\section{RESULTS}
In the following sections, we compare the IEKF, IterIEKF, $\mathrm{SO}(3)$-EKF, and Iter$\mathrm{SO}(3)$-EKF
across four experimental settings. The first evaluates the convergence time in a MuJoCo simulation, 
the second consists of a Monte Carlo study with synthetic data, 
the third is a Monte Carlo study using data from a quadruped robot simulator, 
and the fourth presents results using a real-world dataset.

Monte Carlo simulations are performed by first defining an initial true state $\mathcal{X}_0$, white-noise models for the IMU measurements and base velocity, and an initial error covariance $\mathbf{P}_0$. At the beginning of each realization $k$, an error vector $\boldsymbol{\xi}_0^{k}$ is sampled from $\mathcal{N}(\mathbf{0}_{9,1}, \mathbf{P}_0)$. Then, for the invariant filters, the simulation is initialized as 
$\bar{\mathcal{X}}_0 = (- \boldsymbol{\xi}_0^{k}) \oplus \mathcal{X}_0 = (\bar{\mathbf{R}}_0, \bar{\mathbf{v}}_0, \bar{\mathbf{x}}_0)$.
For the non-invariant filters, first we find a linear relation between $\boldsymbol{\xi}_0^{k}$ and $\delta \boldsymbol{x}_0^{k}$ which is given by \cite[pg. 39]{Hartley2019ContactaidedIE}:
\begin{equation}
\delta \boldsymbol{x}_{0}^{k} \approx  
\begin{bmatrix}
 \mathbf{I}_{3} & \mathbf{0}_{3} & \mathbf{0}_{3} \\
- \bar{\mathbf{v}}_{0}^{\wedge} & \mathbf{I}_{3} & \mathbf{0}_{3} \\
- \bar{\mathbf{p}}_{0}^{\wedge} & \mathbf{0}_{3} & \mathbf{I}_{3} 
\end{bmatrix}
\boldsymbol{\xi}_{0}^{k} 
= \mathbf{J}^{k} \boldsymbol{\xi}_0^{k},
\end{equation}
thus it follows the covariance initialization $\mathbf{\Sigma}_0^{k} = \mathbf{J}^{k} \mathbf{P}_0 \mathbf{J}^{k,T}$ and the nominal state is taken 
as $\bar{\boldsymbol{x}}_0 = \delta \boldsymbol{x}_0^{k} \oplus \boldsymbol{x}_0$.
At each time step, the estimation error with respect to the ground truth is computed and used to evaluate the average Normalized Estimation Error Squared (NEES) as a measure of filter consistency
\cite[p. 165]{shalom2002}:
\begin{equation}
    \label{eq:nees}
    \begin{aligned}
        \mathrm{NEES}_i^{\mathrm{SE_2}(3)} = \frac{1}{K} \sum_{k=1}^{K} \boldsymbol{\xi}_{i}^{k,T} (\mathbf{P}_{i}^{k})^{-1} 
        \boldsymbol{\xi}_{i}^{k}, \\
        \mathrm{NEES}_i^{\mathrm{SO}(3) \times \mathbb{R}^{6}} = \frac{1}{K} \sum_{k=1}^{K} 
        {\delta \boldsymbol{x}}_{i}^{k,T} (\mathbf{\Sigma}_{i}^{k})^{-1} {\delta  \boldsymbol{x}}_{i}^{k}, \\
        \boldsymbol{\xi}_{i}^{k} = \mathrm{Log}_{\mathrm{SE}_{2}(3)} ({\mathcal{X}}_{i} (\bar{\mathcal{X}}_{i|i})^{-1}), \\
        \delta \boldsymbol{x}_{i}^{k} = 
        \mathrm{Log}_{\mathrm{SO_3 \times \mathbb{R}^{6}}} 
        (\boldsymbol{x}_{i} (\bar{\boldsymbol{x}}_{i|i})^{-1}),
    \end{aligned}
\end{equation}
where $K$ is the total number of realizations.

In this work, we considerer two metrics, Mean Absolute Error (MAE), and  Root Mean Square Error (RMSE), of the observable state, composed of base velocity in the base frame 
($\bar{\mathbf{R}}_{i|i}^{T} \bar{\mathbf{v}}_{i|i}$), gravity direction ($\bar{\mathbf{u}}_{i|i}$), pitch angle 
($\bar{\phi}_{i|i}$), and roll angle ($\bar{\mathbf{\theta}}_{i|i}$). We use MAE for cases in which a large initial error was introduced, and RMSE for cases in which no such error was present. They are given as follows:
{\small
\begin{equation}
	\mathrm{MAE}_{\mathbf{x}} = \frac{1}{n} \sum_{i=1}^{n}  e(\bar{\mathbf{x}}_{i|i}, \mathbf{x}_{i}),\;
	\mathrm{RMSE}_{\mathbf{x}} := \sqrt{\frac{1}{n} \sum_{i=1}^{n} e(\bar{\mathbf{x}}_{i|i},\mathbf{x}_{i})^{2}}, 
\end{equation}
}
\noindent where $n$ is the number of timestamps, equal to the frequency of the proprioceptive sensors multiplied by the duration of the trajectory, 
and $e(\mathbf{x}_i, \bar{\mathbf{x}}_{i|i})$ is a metric that measures the error between the estimate $\bar{\mathbf{x}}_{i|i}$ and the 
corresponding interpolated reference $\mathbf{x}_i$ at time step $i$. For most estimates, this metric is the Euclidean norm, 
$e := \| \mathbf{x}_i - \bar{\mathbf{x}}_{i|i} \|$. For the gravity direction, however, we use 
$e := \mathrm{acos}(\bar{\mathbf{u}}_{i|i} \cdot \mathbf{u}_i)$, which provides a more natural interpretation of the error.
\subsection{State Estimation in MuJoCo Simulation under large initial error}
In this experiment, we evaluate the performance of the four filters on the eight-shape trajectory described in Section 
\ref{sec:base_velocity_as_measurement_model}. To increase the level of difficulty, 
a ramp is introduced at the start of the trajectory, and the initial state is perturbed with a large error, resulting in the following 
nominal initial state:
\begin{equation}
\begin{aligned}
\bar{\mathcal{X}}_{0} = \begin{bmatrix}
\bar{\mathbf{R}}_{0} & \bar{\mathbf{v}}_{0} & \bar{\mathbf{p}}_{0} \\
\mathbf{0}_{3,1} & 1 & 0 \\
\mathbf{0}_{3,1} & 0 & 1
\end{bmatrix}, \;
\boldsymbol{x}_{0} = \{ \bar{\mathbf{R}}_{0}, \bar{\mathbf{v}}_{0}, \bar{\mathbf{p}}_{0} \},  \\
\bar{\mathbf{R}}_{0} = 
\begin{bmatrix}
0 & 0 & -1 \\
0 & 1 & 0 \\
1 & 0 &  0
\end{bmatrix}, 
\bar{\mathbf{v}}_{0} = \begin{bmatrix}
2.5 \\
2.0 \\
3.0
\end{bmatrix},
\bar{\mathbf{p}}_{0} = 
\begin{bmatrix}
-0.5 \\
0.0 \\
0.2
\end{bmatrix}.
\end{aligned}
\end{equation}
IMU measurements are corrupted with covariance $\mathbf{Q}_i = 10^{-2}\mathbf{I}_6$, while the base velocity measurement covariance
is estimated from a preliminary run and given by
\begin{equation}
	\mathbf{w}_{f,i} \sim \mathcal{N}(\mathbf{0}_{3,1}, \mathrm{diag}(7.84, 4.00, 25.00) \times 10^{-4}).
\end{equation}
Performance is assessed in terms of convergence time to reach predefined error thresholds of $0.05\, \mathrm{m/s}$ for base velocity and
$0.015\, \mathrm{rad}$ for the gravity direction. As shown in Fig.~\ref{fig:mujoco_simulation}, IterIEKF converges $2.5$ times faster than IEKF, 
while Iter$\mathrm{SO}(3)$-EKF achieves a similar convergence rate to IEKF. This demonstrates that the iterated filtering approach significantly improves convergence speed in the presence of large initial errors.

\subsection{Monte Carlo Simulation with synthetic data}
In this experiment, we assume that the IMU is measuring a constant linear acceleration and angular velocity.  Moreover,
we consider as measurement covariance the matrix $\mathrm{diag}(10^{-4}, 10^{-5}, 10^{-3})$ which is a typical 
covariance matrix for a simulated quadruped robot walking on a flat terrain 
(see Section \ref{sec:base_velocity_as_measurement_model}).
Once fixing the IMU measurements, we apply the prediction and update steps for $m$ times in a row. 
We repeat this procedure for $K$ realizations.
\begin{figure*}[t]
    \centering
    \subfloat[\protect\label{fig:exp_1_error_vel}]{
        \includegraphics[width=0.32\textwidth]{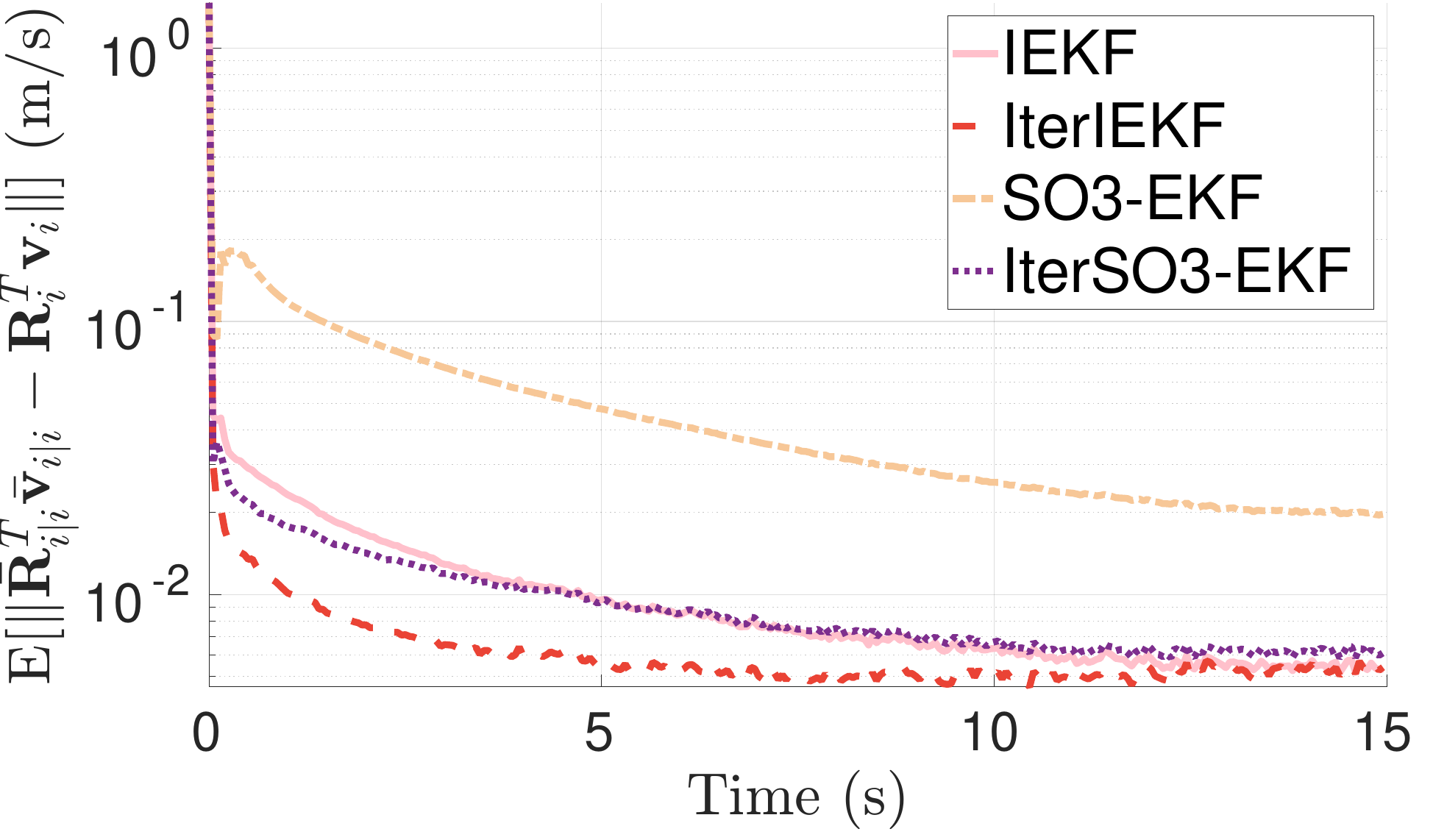}
    }
    \subfloat[\protect\label{fig:exp_1_error_u}]{
        \includegraphics[width=0.32\textwidth]{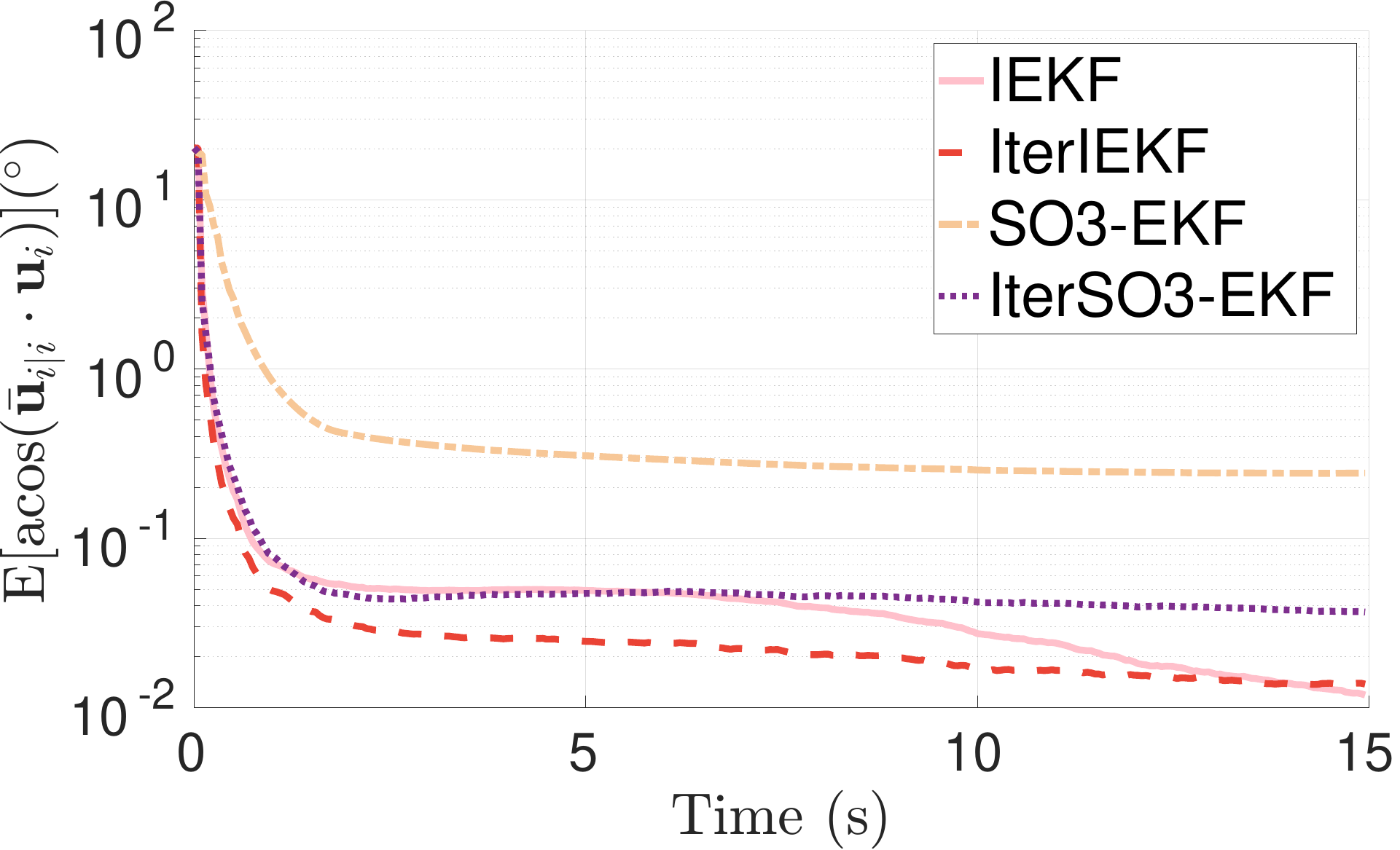}
    }
    \subfloat[\protect\label{fig:exp_1_nees}]{
        \includegraphics[width=0.32\textwidth]{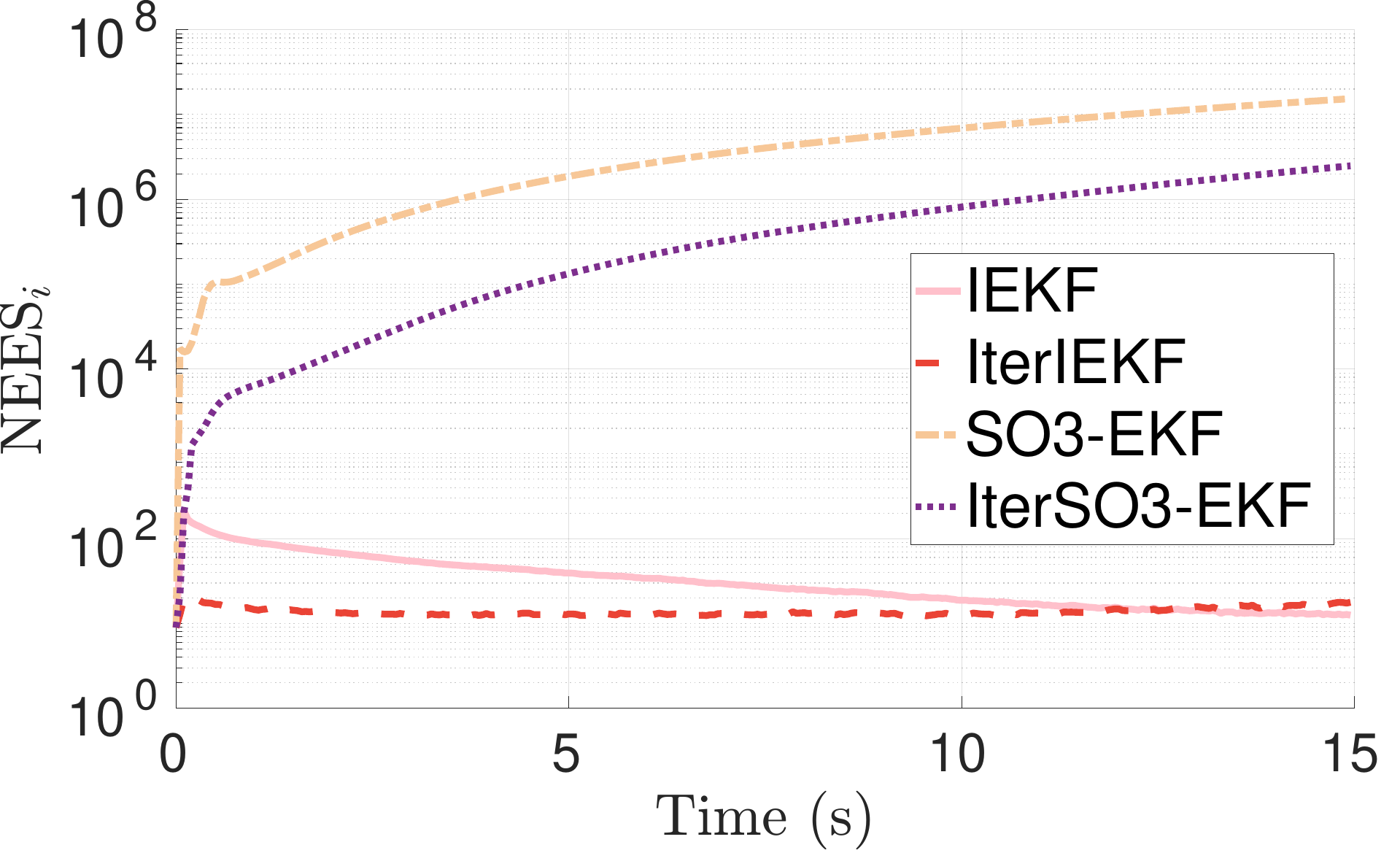}
    }
    \hfill
    \subfloat[\protect\label{fig:exp_1_iterations}]{
        \includegraphics[scale=0.22]{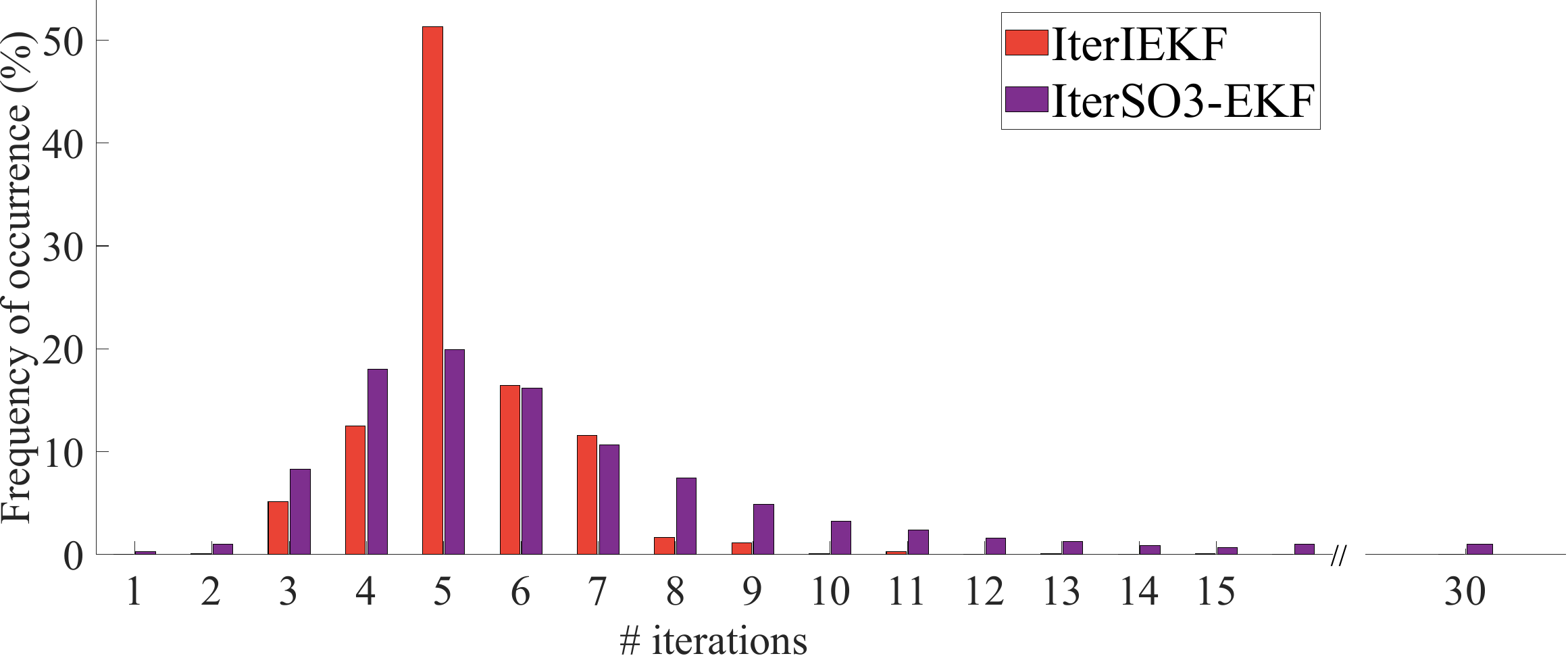}
    }
    \subfloat[\protect\label{fig:exp_1_realization}]{
        \includegraphics[scale=0.05]{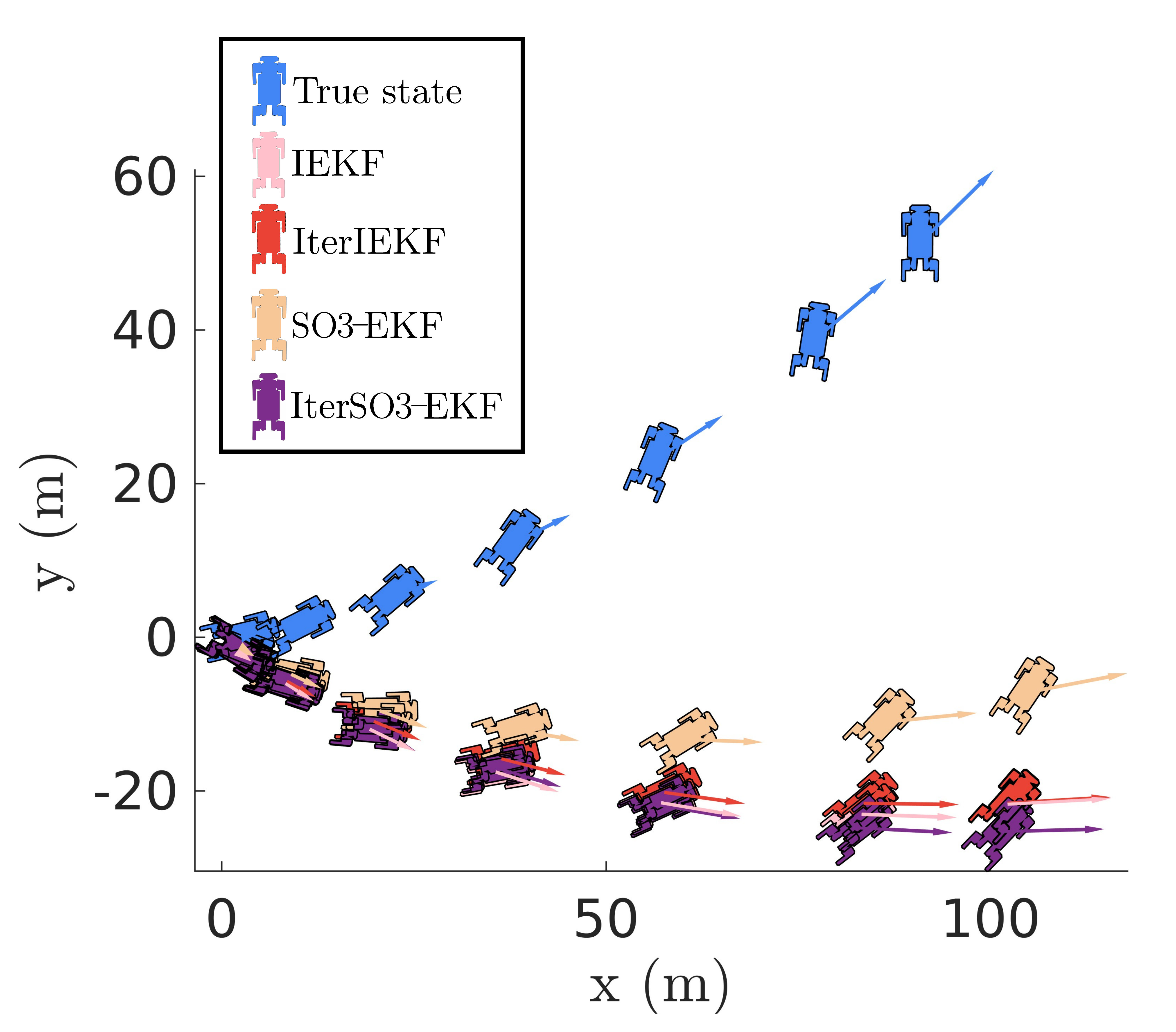}
    }
    \caption{Results of the numerical Monte Carlo experiment with synthetic data. 
        a) Average normalized error for the velocity in the base frame. 
        b) Average normalized error for the gravity direction.
        c) Average NEES. 
        d) Histogram for the number of iterations required for IterIEKF and Iter$\mathrm{SO}(3)$-EKF convergence.
        e) One realization of the Monte Carlo simulation. 
    }
    \label{fig:experiment_1}
\end{figure*}

We also assume white noise acceleration in all directions. For the motion to remain in the $x-y$ plane, we only assume gyroscope white noise in the $z$-direction (yaw angle).
All parameters are described as follows: 
\begin{equation}
    \label{eq:simulation_noise}
    \begin{aligned}
        \mathbf{P}_{0} = \text{diag}\left(\left(\frac{\pi}{12}\right)^{2} \mathbf{I}_3, (0.1)^2\mathbf{I}_3, (0.1)^2\mathbf{I}_3 \right), \\
        \mathbf{w}_{a,i} \sim \mathcal{N}(\mathbf{0}_{3,1}, 5\times 10^{-3}\mathbf{I}_{3}), \\
        \mathbf{w}_{g,i} \sim \mathcal{N}(\mathbf{0}_{3,1}, \text{diag}(0, 0, 3 \times  10^{-2})), \\
        \mathbf{w}_{f,i} \sim \mathcal{N}(\mathbf{0}_{3,1},  \mathrm{diag}(10^{-4}, 10^{-5}, 10^{-3}), \\
        \mathbf{a}_{I,i} = \begin{bmatrix}
            1 \\
            0 \\
            9.81
        \end{bmatrix}, 
        \boldsymbol{\omega}_{I,i} = 
        \begin{bmatrix} 0 \\ 0 \\ \frac{\pi}{30}  \end{bmatrix} , \\ 
        m =300, \Delta t = 0.05, K = 100 .
    \end{aligned}
\end{equation}
We present the main simulation results in Fig.~\ref{fig:experiment_1}. Figures~\ref{fig:exp_1_error_vel} and \ref{fig:exp_1_error_u} show the average norm of the velocity error and the average of gravity direction error, respectively. Both IterIEKF and Iter$\mathrm{SO}(3)$-EKF converged faster than their classical counterparts. 
More specifically, IterIEKF required $40 \%$ less time to achieve the same error as IEKF. To measure accuracy, we computed the MAE over the first $5 \, \mathrm{s}$ 
of the average error. IterIEKF reduced the error in the velocity by $27 \%$ and in the gravity direction by $10 \%$ with respect to IEKF.
Figure~\ref{fig:exp_1_nees} reports the average NEES, which, for this problem, should be close to $9$. Here, we observe the false observability issue that is common in the $\mathrm{SO}(3)$-EKF methodology: the filter becomes overconfident because the observability matrix for this problem is incorrectly modeled as fully observable. 
This issue is not resolved by IterSO(3)-EKF. In contrast, IterIEKF remains close to 9 from the start, indicating consistent uncertainty estimates. Fig.~\ref{fig:exp_1_iterations} compares the number of iterations required during the update step: both methods behave similarly, typically converging in about six iterations. Finally, Fig.~\ref{fig:exp_1_realization} shows a Monte Carlo realization, highlighting that position and yaw are unobservable, while body-frame velocity is accurately estimated.
\looseness=-1
\vspace{-0.2cm}

\subsection{Monte Carlo Simulation with a simulated quadruped robot on an irregular terrain}
In this section, we perform a Monte Carlo simulation of a quadruped robot walking along the forward direction on 
a simulated terrain of non-constant curvature. The terrain was chosen to introduce significant pitch variations. It is composed of three bumps, 
which present a variability of the robot's pitch angle. We started the state estimation 
methods when the robot is already in the middle of the first bump, as shown in Fig. \ref{fig:terain}. 
\begin{figure}[b]
		\centering
    \includegraphics[scale=0.15]{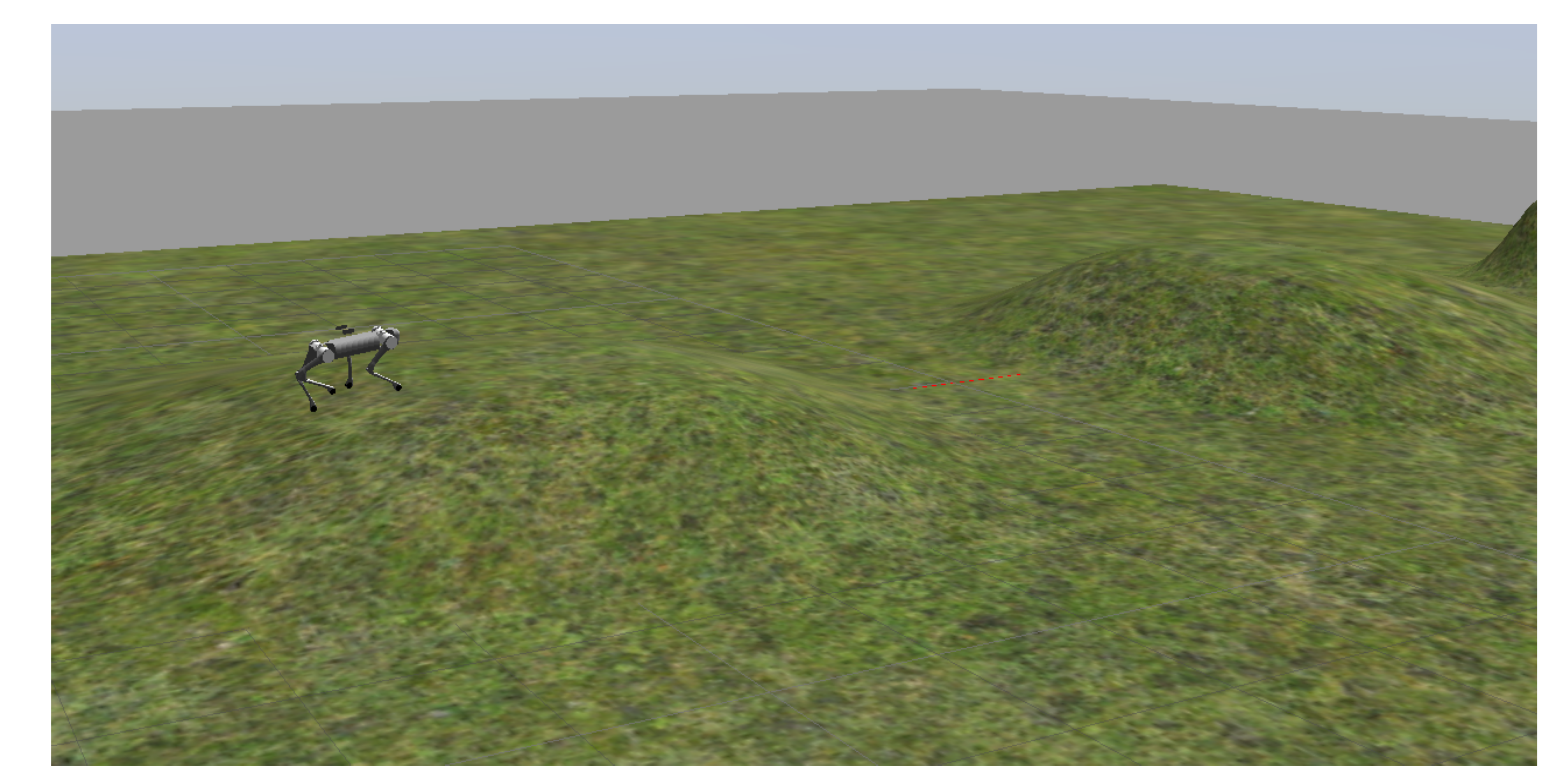}
    \caption{Simulated quadruped robot on a challenging terrain using Gazebo. 
    We start the estimation as soon the robot enters the first bump. The whole 
    trajectory can be seen \href{https://youtu.be/AyD8_3kJHw4}{here}.
}
		\label{fig:terain}
\end{figure}
\begin{figure*}[t!]
		\centering
    \subfloat[\protect\label{fig:exp_2_error_vel}]{
        \includegraphics[width=0.32\textwidth]{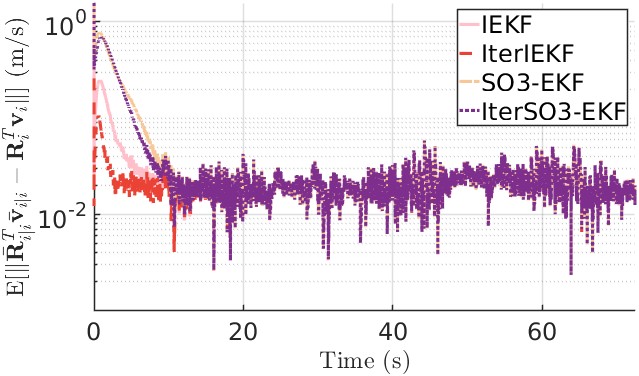}
    }
		\subfloat[\protect\label{fig:exp_2_error_u}]{
			\includegraphics[width=0.32\textwidth]{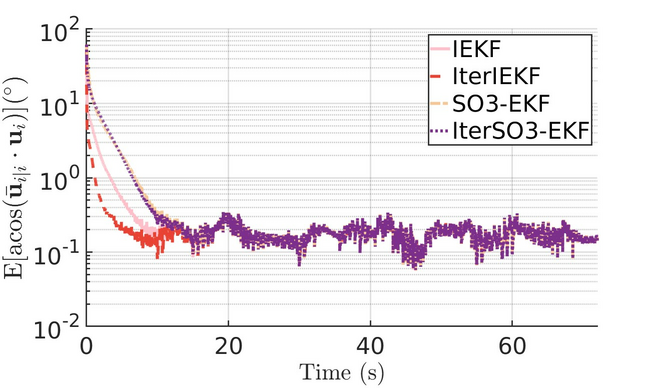}
    }
    \subfloat[\protect\label{fig:exp_2_nees}]{
        \includegraphics[width=0.32\textwidth]{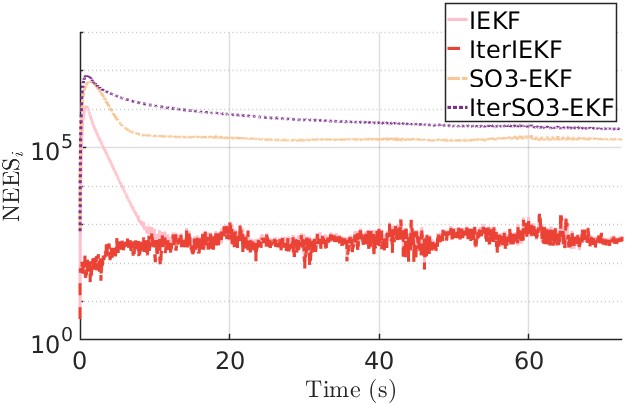}
    }
   \hfill
	 \subfloat[\protect\label{fig:exp_2_iterations}]{
		 \includegraphics[scale=0.055]{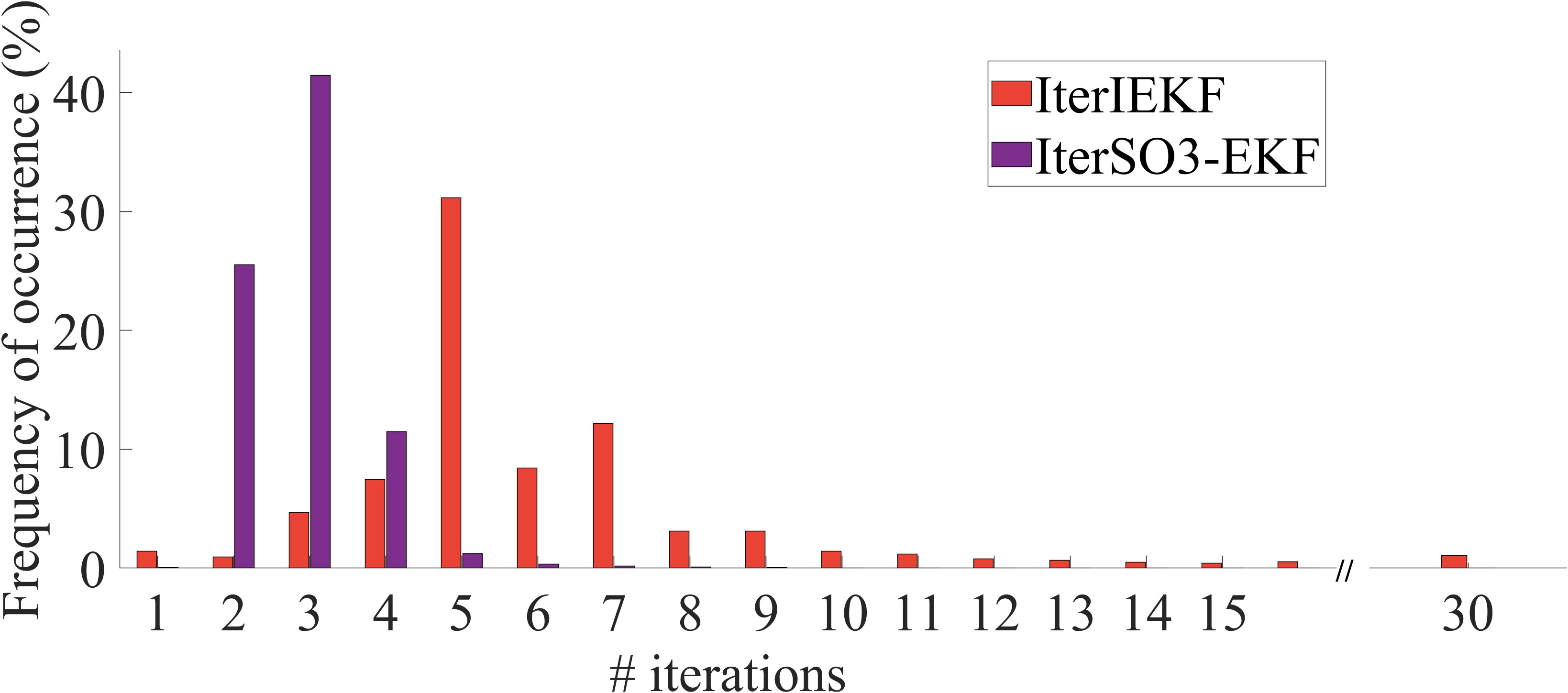}
    }
    \subfloat[\protect\label{fig:exp_2_pitch}]{
        \includegraphics[scale=0.175]{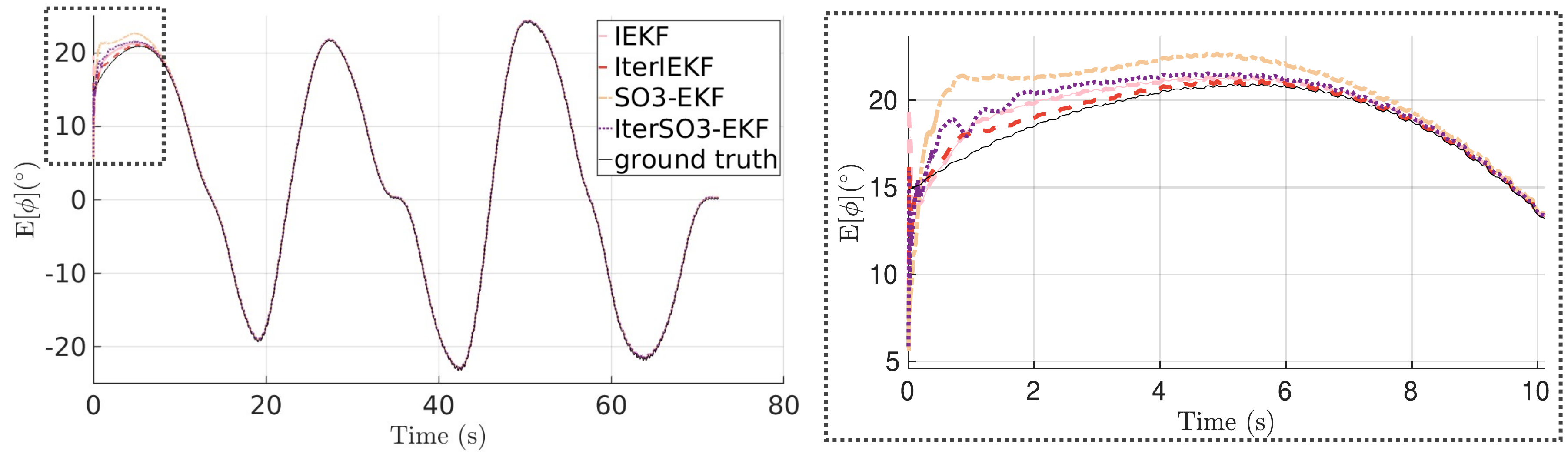}
    }
    \caption{Results of the numerical Monte Carlo experiment with a simulated quadruped robot on a challenging terrain. 
        a) Average normalized error for the velocity in the base frame. 
        b) Average normalized error for the gravity direction.
        c) Average NEES. 
        d) Histogram for the number of iterations required for IterIEKF and Iter$\mathrm{SO}(3)$-EKF convergence.
        e) Average estimated pitch angle for each method. 
    }
    \label{fig:experiment_2}
\end{figure*}
The initial covariance, IMU's measurements covariance, and feet velocity covariance were defined as follows:
\begin{equation}
 \label{eq:simulation_noise}
 \begin{aligned}
 \mathbf{P}_{0} = \text{diag}\left(\left(\frac{\pi}{4}\right)^2 \mathbf{I}_3, (1.0)^{2} \mathbf{I}_3, (0.1)^{2} \mathbf{I}_3 \right), \\
 \mathbf{w}_{a,i} \sim \mathcal{N}(\mathbf{0}_{3,1}, 4.0 \times 10^{-2}\mathbf{I}_{3}), \\
 \mathbf{w}_{g,i} \sim \mathcal{N}(\mathbf{0}_{3,1}, 2.0 \times 10^{-3} \mathbf{I}_3), \\
 \mathbf{w}_{f,i} \sim \mathcal{N}(\mathbf{0}_{3,1}, 
 \mathrm{diag}(0.502, 0.740, 5.149) \times 10^{-4}), \\ 
  K = 50.
 \end{aligned}
\end{equation}
The measurement covariance $\mathbf{w}_{f,i}$ was obtained by a first run of the simulation. 
We computed the sample covariance matrix based on the history of the velocities in the base frame 
obtained with~\eqref{eq:velocity_base_frame}. We assumed that a foot
is in contact if the $z$  component of its ground reaction force was above a fixed threshold of $30\, \mathrm{N}$. The calculation was corrupted by the gyroscope noise
$\mathbf{w}_{g,i}$ defined in~\eqref{eq:simulation_noise}.

The result is shown in Fig.~\ref{fig:experiment_2}. Figures~\ref{fig:exp_2_error_vel}
and~\ref{fig:exp_2_error_u} reports the average normalized error of the velocity (expressed in the base frame)
and of the gravity vector, respectively. We observe that IterIEKF required $60 \%$ less time to achieve the same error as IEKF.
In terms of accuracy, IterIEKF reduced the MAE over the first $5 \, \mathrm{s}$ in the velocity by $63 \%$ and in the gravity direction by $60 \%$ with respect to IEKF.
Furthermore, the other iterated approach, Iter$\mathrm{SO}(3)$-EKF, only improves upon $\mathrm{SO}(3)$-EKF
and performs worse than the IEKF. This behavior is due to its lack of consistency.
As indicated by the average NEES in Fig.~\ref{fig:exp_2_nees}, Iter$\mathrm{SO}(3)$-EKF exhibits the poorest
consistency among all methods, whereas IterIEKF shows the best. Figure~\ref{fig:exp_2_iterations} also reports the
number of iterations required for convergence over $50$ Monte Carlo realizations. We can see that IterIEKF
requires more iterations (most of the time $5$), while Iter$\mathrm{SO}(3)$-EKF converges most of the time 
in $3$ iterations.
Finally, Fig.~\ref{fig:exp_2_pitch} shows the average estimated pitch angle $\phi$. IterIEKF achieves a very
small error relative to the ground truth starting approximately $3\,\mathrm{s}$ after the simulation begins,
whereas the other methods require more time to reach a comparable accuracy.
\looseness=-1

\subsection{Real-world dataset evaluation}

In this section, we evaluate the performance of the proposed filter using a public dataset that contains a real-world experiment conducted with the Anymal D robot \cite{hutter2016anymal}. 
The dataset employed in this evaluation is 
\texttt{Pilatus-Hike2}\footnote{\url{https://grand-tour.leggedrobotics.com/dataset\#mission-19-on}}, 
which is part of the GrandTour dataset \cite{frey2026grandtour}. The recorded trajectory follows a zigzag trail commonly used by hikers to ascend 
Mount Pilatus in Switzerland. The robot first traverses the trail downhill, descending approximately $30\, \mathrm{m}$ in altitude, and then repeats 
the same path in the opposite direction, ascending approximately $22\, \mathrm{m}$. The duration of the trajectory is approximately $400\,\mathrm{s}$, 
and the total distance traveled by the robot is around $250\,\mathrm{m}$. Some parts of the trajectory are illustrated in 
Fig.~\ref{fig:samples_pilatus}. A key characteristic of this experiment is the significant variation in terrain inclination along the trajectory. 
For instance, at the beginning of the run, the robot descends a staircase with an inclination of approximately $30^\circ$, as shown in 
Fig.~\ref{fig:inclination}. Although the dataset includes exteroceptive information, we use only proprioceptive measurements. Specifically, 
we rely on the IMU, which provides accelerometer and gyroscope data; joint-state measurements, including joint positions and velocities; 
and foot force sensors that measure Ground Reaction Forces (GRFs). All of these sensors have an acquisition frequency of $400\, \mathrm{Hz}$. A robot foot is assumed to be in contact with the ground when the $z$-component of the GRF exceeds a fixed threshold of $60\,\mathrm{N}$.

\begin{figure}[h]

    \centering
    \subfloat[\protect\label{fig:samples_pilatus}]{
        \includegraphics[scale=0.49]{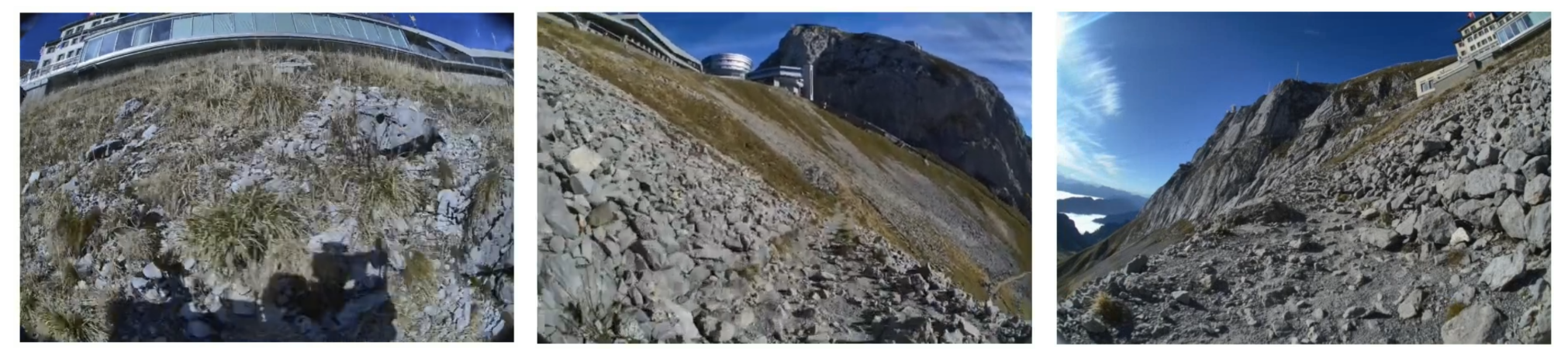}
    }
    \hfill
    \subfloat[\protect\label{fig:inclination}]{
        \includegraphics[scale=0.51]{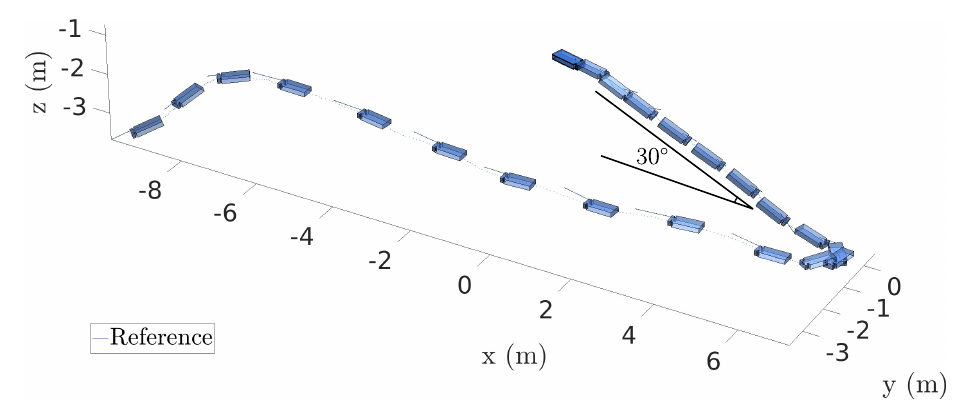}
    }
    \caption{a) Samples of the trajectory followed by the AnymalD robot in the \texttt{Pilatus-Hike2} dataset.
             b) Poses of the reference trajectory during its beginning, where it descends a staircase with an inclination of approx $30^\circ$.}
\end{figure}
\vspace{1cm}

The reference pose trajectory used for the evaluation (also shown in Fig.~\ref{fig:inclination}) was obtained with a $200\,\mathrm{Hz}$ 
CPT7 sensor\footnote{\url{https://novatel.com/products/gnss-inertial-navigation-systems/combined-systems/cpt7}} mounted on the robot. This sensor combines a high-end IMU with a dual-antenna GNSS receiver and provides accurate pose estimates 
in the global frame. To obtain reference velocity estimates, we applied a Savitzky--Golay filter to the reference pose trajectory. 
The filter parameters were chosen as a window length of $25$ and a polynomial order of $5$.  Since the estimation error is evaluated at every estimator timestamp, the reference pose must be interpolated to match the $400\,\mathrm{Hz}$ estimation rate.

The initial parameters for the filters are described as follows. First, we fixed the maximum number of iterations to $4$ for the iterated filters. 
This number was chosen to combine the results of the previous experiments presented in earlier sections with a small value that does not 
introduce any relevant increase in computational time. The other parameters are:
\begin{equation}
 \label{eq:simulation_noise_hike2}
 \begin{aligned}
 \mathbf{P}_{0} = \text{diag}\left(\left(\frac{\pi}{20}\right)^2 \mathbf{I}_3, (0.1)^{2} \mathbf{I}_3, (0.1)^{2} \mathbf{I}_3 \right), \\
 \mathbf{w}_{a,i} \sim \mathcal{N}(\mathbf{0}_{3,1}, \mathrm{diag}(1.0, 1.0, 2.25) ), \\
 \mathbf{w}_{g,i} \sim \mathcal{N}(\mathbf{0}_{3,1}, 2.5 \times 10^{-3}  \mathbf{I}_3), \\
 \mathbf{w}_{f,i} \sim \mathcal{N}(\mathbf{0}_{3,1}, 
 \mathrm{diag}(0.280, 0.009, 0.018)), \\ 
 \end{aligned}
\end{equation}
where the covariance of $\mathbf{w}_{f,i}$ was obtained in a first run by comparing the reference velocity with the nominal velocity obtained 
using~\eqref{eq:velocity_base_frame}. To reduce the amount of noise, we also employed a simple averaging between the current velocity and 
the one obtained in the previous step. Although this slightly reduced the uncertainty, the measurement noise remains significantly higher 
than the one obtained in simulation (see~\eqref{eq:simulation_noise}). We therefore expect this amount of noise to reduce the superconvergent 
behavior of the IterIEKF observed in the previous experiments. In Fig.~\ref{fig:real_measurement}, we compare the $z$-component of the velocity 
in the base frame obtained from the reference (CPT7 sensor) and the nominal velocity obtained using~\eqref{eq:velocity_base_frame}. 
Besides the higher noise level, we observe large intermittent peaks that occur during foot contact (see Fig.~\ref{fig:mujoco_contact}) and 
a change in variance over time, which suggests heteroscedasticity. These observations indicate that the measurement noise is non-Gaussian.
\begin{figure}[t!]
    \centering
    \includegraphics[width=0.35\textwidth]{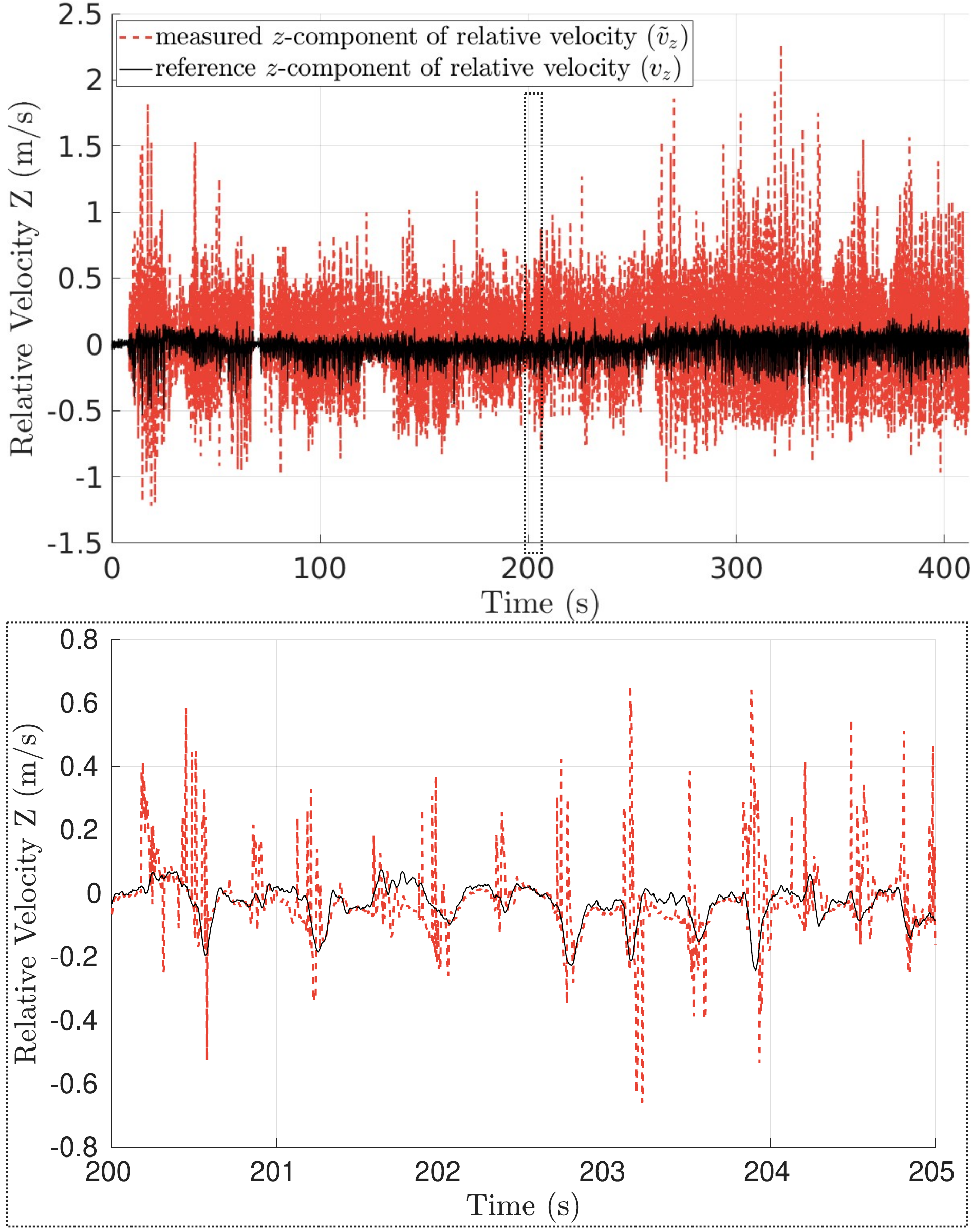}
    \caption{ Comparison of the $z$-component relative velocity in the base frame, obtained with the reference trajectory, and the nominal velocity obtained with~\ref{eq:velocity_base_frame}. The measurement noise is significantly higher than that obtained in the simulation.
    }
    \label{fig:real_measurement}
\end{figure}

We analyze the estimation performance of the four filters, IEKF, IterIEKF, $\mathrm{SO}(3)$-EKF, and Iter$\mathrm{SO}(3)$-EKF, under three distinct scenarios:

\begin{itemize}
    \item \textbf{Scenario 1:} Estimation of the complete trajectory using the parameters in~\eqref{eq:simulation_noise_hike2} and the same initial state as the reference trajectory.
    
    \item \textbf{Scenario 2:} Same parameters as in Scenario~1, but with a large error introduced in the initial pose.
    
    \item \textbf{Scenario 3:} Same configuration as Scenario~2, except that the nominal velocity is replaced by the reference velocity corrupted with additive white noise,
    $\mathbf{w}_{f,i} \sim \mathcal{N}(\mathbf{0}_{3 \times 1}, 10^{-4}\mathbf{I}_3)$.
\end{itemize}

The RMSE results for Scenario 1 are shown in Table~\ref{tab:scenario_x_1}, while the corresponding time evolution of the estimates is shown 
in Fig.~\ref{fig:result_scenario_i}. Overall, in this scenario, all filters performed similarly, with IterIEKF achieving slightly lower errors 
in all observables. From Fig.~\ref{fig:result_scenario_i}, we observe that the velocity estimates 
(Figs.~\ref{fig:vel_x_scenario_i}--\ref{fig:vel_z_scneraio_i}) are less accurate than the pitch 
(Fig.~\ref{fig:pitch_scenario_i}) and roll (Fig.~\ref{fig:roll_scenario_i}) angle estimates. More specifically, while the angular errors are 
less than half a degree, the velocity error is about $15\,\%$ of the average velocity of $0.625\,\mathrm{m/s}$. This may be due to the 
non-Gaussian nature of the measurement noise. The large intermittent peaks suggest the use of heavy-tailed distributions, such as Huber or 
Tukey losses, to model the measurement noise, as also suggested in \cite{santana2024}. However, identifying the best parameters for these 
distributions remains an active area of research. Despite the RMSE values, we emphasize that the filtered velocities significantly reduce 
the error compared to the non-filtered measurements, as can be seen by comparing Fig.~\ref{fig:real_measurement} and 
Fig.~\ref{fig:vel_z_scneraio_i}. Finally, Fig.~\ref{fig:traj_scenario_i} shows the estimated trajectories for all filters, where the invariant 
filters outperform the non-invariant ones. This is because the non-invariant filters are affected by the false observability issue, which 
causes them to be overconfident about the unobservable states \cite{barrau2015}. 
\begin{table}[h]
\centering
\caption{Comparison of RMSE (and standard deviation) of the observable state between different filters in Scenario 1.
}
\label{tab:scenario_x_1}
\scalebox{0.92}{
\begin{tabular}{lcccc}
\hline
 & RMSE$_{\text{vel}}(\text{m/s})$ & RMSE$_{\mathbf{u}}(^\circ)$ & RMSE$_{\phi}(^\circ)$ & RMSE$_{\theta}(^\circ)$ \\
\hline
IterIEKF
 & \textbf{0.097769} & \textbf{0.44342} & \textbf{0.34482} & \textbf{0.28518} \\
 & (0.0514) & (0.1336) & (0.1468) & (0.1347) \\

IEKF
 & \underline{0.097771} & \underline{0.44359} & \underline{0.34483} & \underline{0.28544} \\
 & (0.0514) & (0.1336) & (0.1469) & (0.1348) \\

$\mathrm{SO}_3$-EKF
 & 0.097819 & 0.46437 & 0.36711 & 0.29098 \\
 & (0.0513) & (0.1326) & (0.1496) & (0.1386) \\

Iter$\mathrm{SO}_3$-EKF
 & 0.097823 & 0.46307 & 0.36599 & 0.29029 \\
 & (0.0513) & (0.1325) & (0.1495) & (0.1383) \\
\hline
\multicolumn{5}{l}{\footnotesize 
Smallest RMSE is marked in bold, while the second smallest is highlighted.}
\end{tabular}
}
\end{table}
\begin{table}[h]
\centering
\caption{Comparison of MAE over the first $5\, \mathrm{s}$ of the observable state for different filters in Scenario 2. 
Percentages indicate increase relative to the lowest error in each column.}
\label{tab:scenario_x_2}
\scalebox{0.92}{
\begin{tabular}{lcccc}
\hline
 & MAE$_{\text{vel}}(\text{m/s})$ & MAE$_{\mathbf{u}}(^\circ)$ & MAE$_{\phi}(^\circ)$ & MAE$_{\theta}(^\circ)$ \\
\hline
IterIEKF
 & \underline{0.088708} & \underline{1.2244} & \textbf{0.68245} & \underline{1.0423} \\
 & (+0.43\%) & (+2.93\%) & (0\%) & (+13.22\%) \\

IEKF
 & \textbf{0.088332} & 1.2884 & \underline{0.76842} & 1.1141 \\
 & (0\%) & (+8.31\%) & (+12.59\%) & (+20.99\%) \\

$\mathrm{SO}_3$-EKF
 & 0.089225 & 1.4766 & 0.94817 & 1.198 \\
 & (+1.01\%) & (+24.12\%) & (+38.90\%) & (+30.15\%) \\

Iter$\mathrm{SO}_3$-EKF
 & 0.088866 & \textbf{1.1896} & 0.86155 & \textbf{0.9206} \\
 & (+0.60\%) & (0\%) & (+26.26\%) & (0\%) \\
\hline
\multicolumn{5}{l}{\footnotesize 
Smallest MAE is marked in bold, while the second smallest is underlined.}
\end{tabular}
}
\end{table}
\begin{table}[h]
\centering
\caption{Comparison of MAE over the first $5 \, \mathrm{s}$ of the observable state for different filters in Scenario 3. Percentages indicate increase relative to the lowest error in each column.}
\label{tab:scenario_x_3}
\scalebox{0.92}{
\begin{tabular}{lcccc}
\hline
 & RMSE$_{\text{vel}}(\text{m/s})$ & RMSE$_{\mathbf{u}}(^\circ)$ & RMSE$_{\phi}(^\circ)$ & RMSE$_{\theta}(^\circ)$ \\
\hline
IterIEKF
 & \textbf{0.015924} & \textbf{0.85114} & \textbf{0.56495} & \textbf{0.62606} \\
 & (0\%) & (0\%) & (0\%) & (0\%) \\

IEKF
 & \underline{0.016019} & \underline{0.9857} & \underline{0.7303} & \underline{0.65386} \\
 & (+0.60\%) & (+15.81\%) & (+29.27\%) & (+4.44\%) \\

$\mathrm{SO}_3$-EKF
 & 0.020007 & 6.954 & 2.8188 & 6.7823 \\
 & (+25.64\%) & (+716.89\%) & (+399.00\%) & (+983.00\%) \\

Iter$\mathrm{SO}_3$-EKF
 & 0.016519 & 1.7165 & 1.5552 & 0.70186 \\
 & (+3.74\%) & (+101.67\%) & (+175.33\%) & (+12.11\%) \\
\hline
\multicolumn{5}{l}{\footnotesize 
Smallest MAE is marked in bold, while the second smallest is underlined.}
\end{tabular}
}
\end{table}
\begin{figure*}[t!]
		\centering
        \includegraphics[scale=0.375]{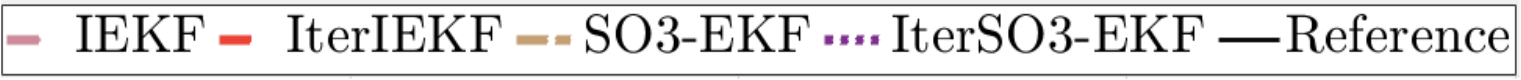}
    \subfloat[\protect\label{fig:vel_x_scenario_i}]{
        \includegraphics[scale=0.17]{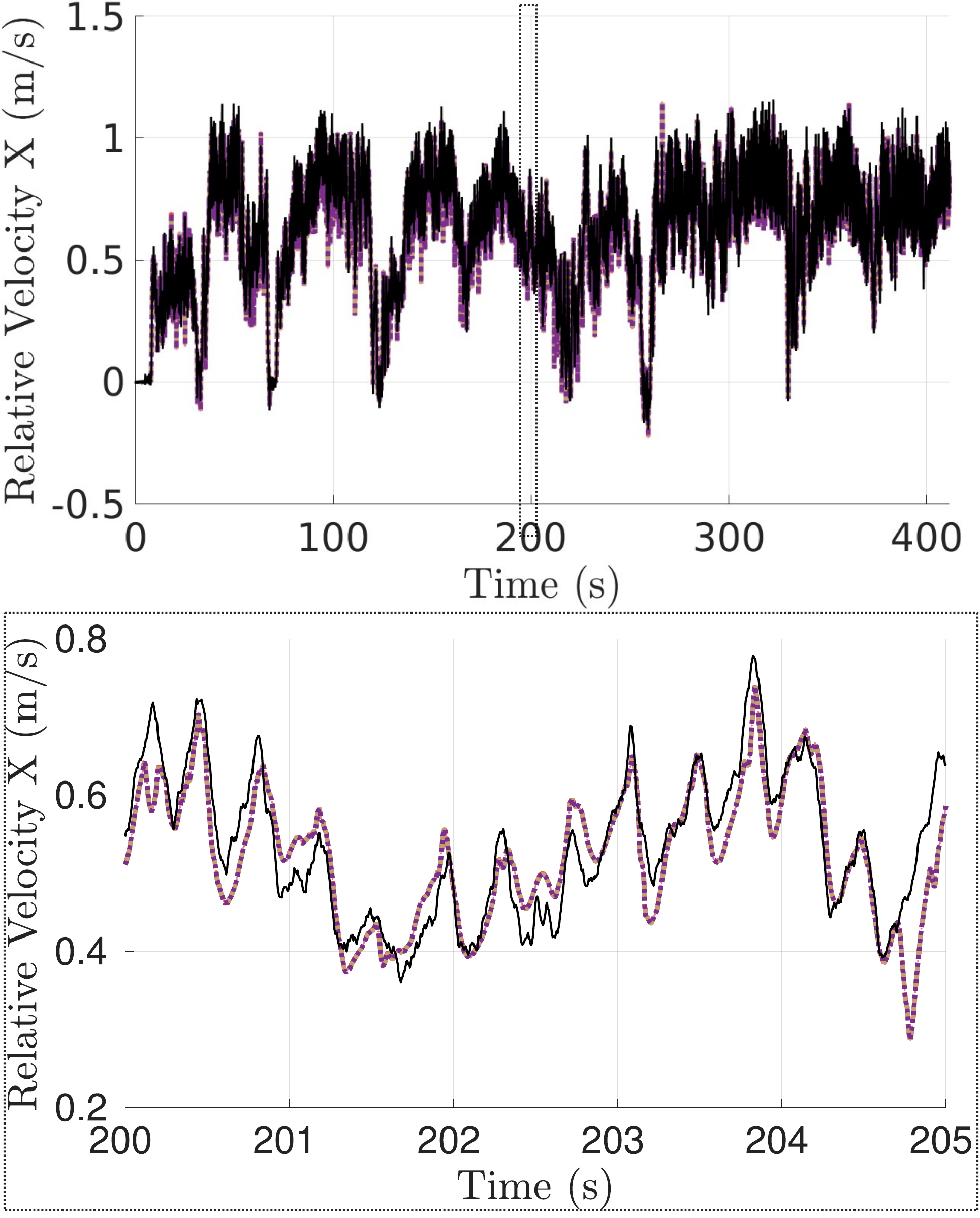}
    }
    \subfloat[\protect\label{fig:vel_y_scenario_i}]{
        \includegraphics[scale=0.17]{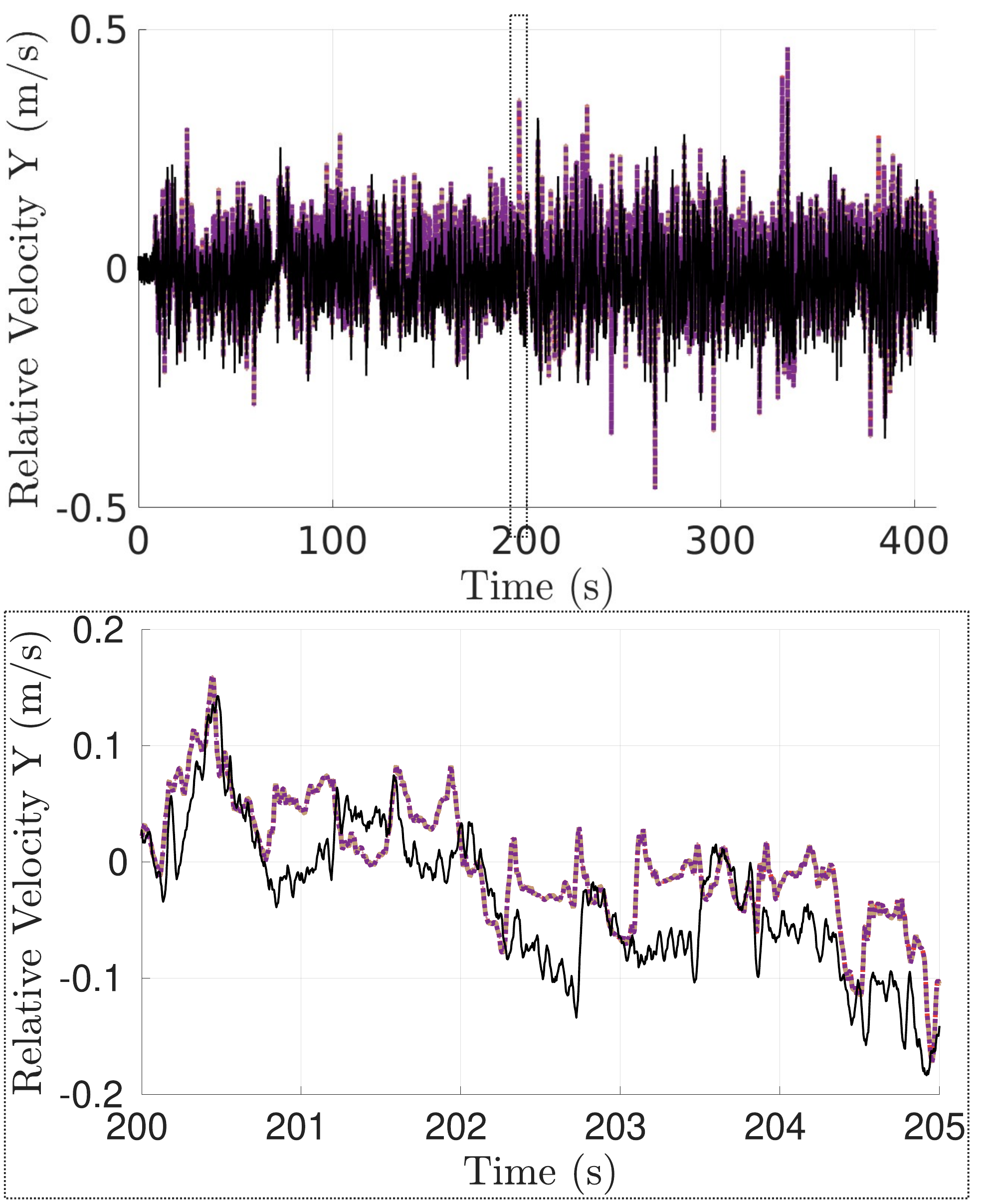}
    }
    \subfloat[\protect\label{fig:vel_z_scneraio_i}]{
        \includegraphics[scale=0.17]{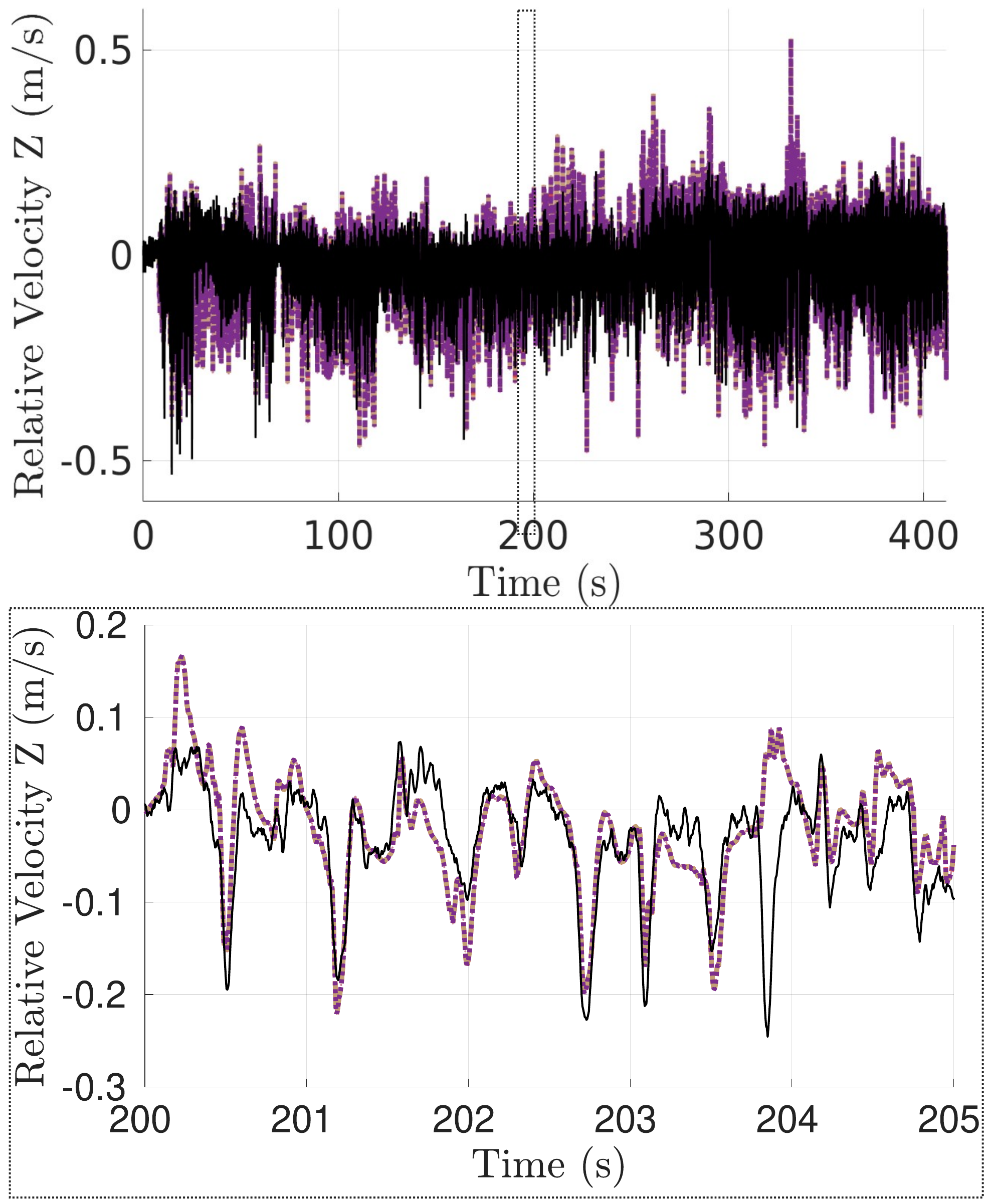}
    }
   \hfill
   \subfloat[\protect\label{fig:pitch_scenario_i}]{
       \includegraphics[scale=0.175]{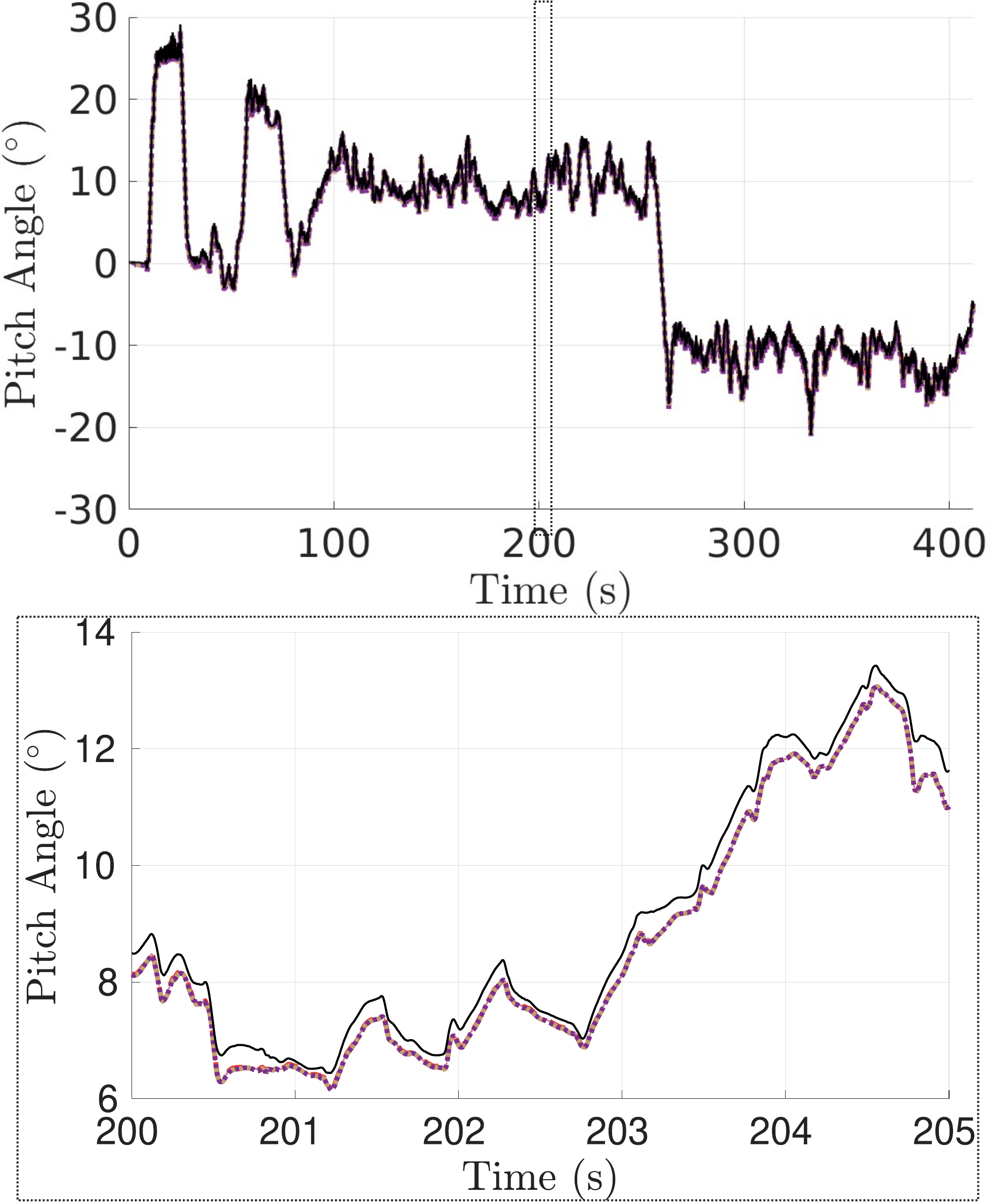}
    }
    \subfloat[\protect\label{fig:roll_scenario_i}]{
        \includegraphics[scale=0.175]{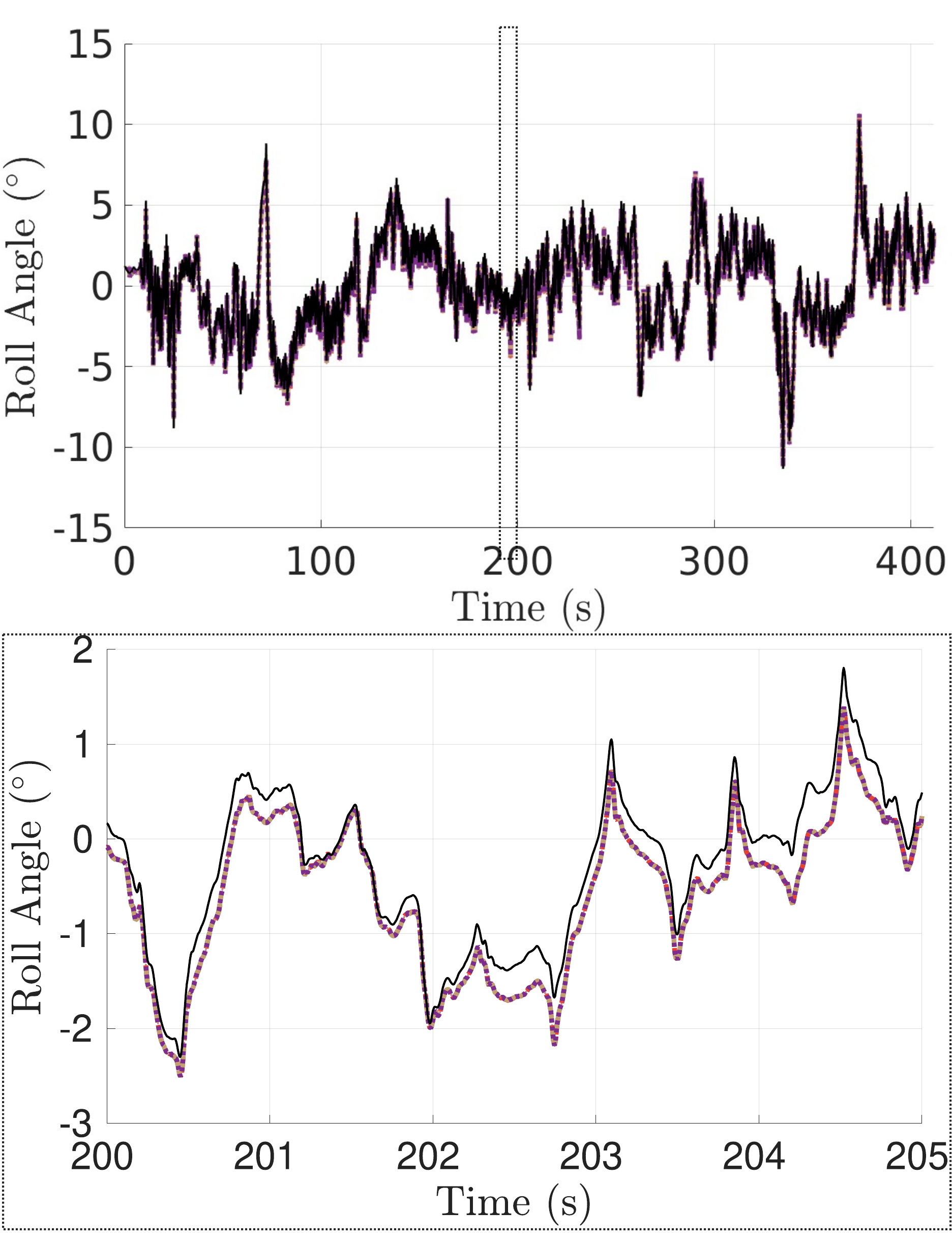}
    }
    \subfloat[\protect\label{fig:traj_scenario_i}]{
        \includegraphics[scale=0.23]{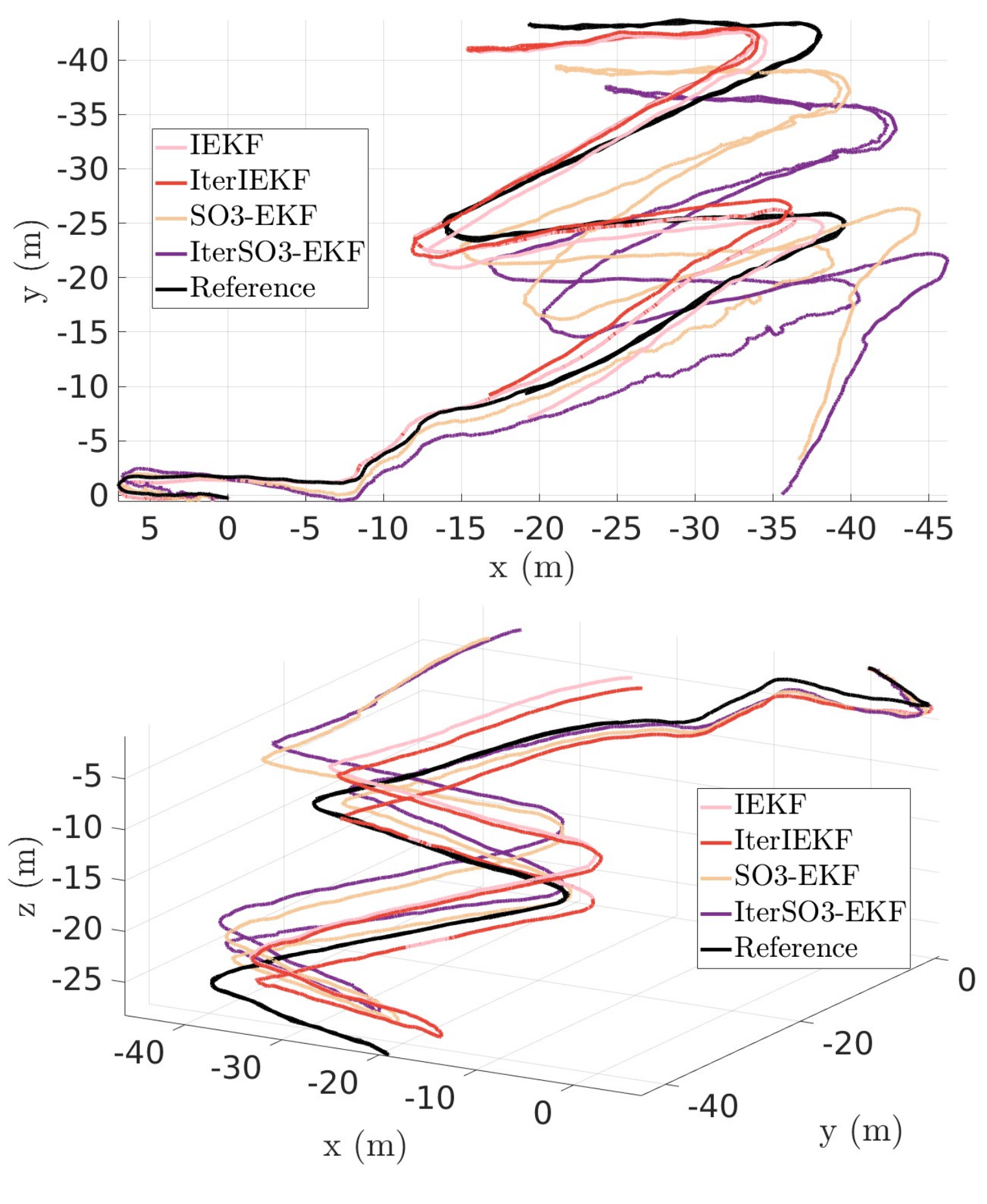}
    }
    \caption{
        Estimates obtained with different filters on the \texttt{Pilatus-Hike2} dataset under Scenario 1 conditions. 
            a) $x$-component,  b) $y$-component, c) $z$-component of the relative velocity $\bar{\mathbf{R}}_{i|i} \bar{\mathbf{v}}_{i|i}$. 
            d) Estimate of the pitch angle $\bar{\phi}_{i|i}$. e) Estimate of the roll angle $\bar{\theta }_{i|i}$. f) Estimate of the trajectory 
            $\bar{\mathbf{p}}_{i|i}$.
    }
    \label{fig:result_scenario_i}
\end{figure*}

In Table~\ref{tab:scenario_x_2} and Fig.~\ref{fig:result_scenario_ii}, we present the MAE results over the first $5\, \mathrm{s}$ and the estimates for Scenario~2. 
In this scenario, the MAE was adopted due to the large error introduced in the initial state. The estimation begins $17\,\mathrm{s}$ after the start of the trajectory 
in order to analyze the transient behavior of the filters. For the velocity components 
(Figs.~\ref{fig:vel_x_scenario_ii}--\ref{fig:vel_z_scenario_ii}), all filters performed similarly, with IEKF achieving the smallest error. 
Iter$\mathrm{SO}(3)$-EKF performed slightly better than $\mathrm{SO}(3)$-EKF, while IterIEKF performed slightly worse than IEKF, 
possibly due to the high level of measurement noise. For the pitch angle (Fig.~\ref{fig:pitch_scenario_ii}), IterIEKF achieved the smallest 
error, $12 \, \%$ less than IEKF. It also converged faster than all filters. Similarly, Iter$\mathrm{SO}(3)$-EKF converged faster than $\mathrm{SO}(3)$-EKF. 
For the roll angle (Fig.~\ref{fig:roll_scenario_ii}), Iter$\mathrm{SO}(3)$-EKF performed best, with $20 \, \%$ less error than IEKF.
For the gravity direction in Fig.~\ref{fig:gravity_dir_scneario_ii} we also observe a faster convergence rate of IterIEKF in relation to IEKF.

Finally, Fig.~\ref{fig:result_scenario_iii} and Table~\ref{tab:scenario_x_3} present the results for Scenario~3. This scenario is similar 
to Scenario~2, but the measurement function was artificially replaced by the reference measurement corrupted with white noise of small 
covariance, $\mathbf{w}_{f,i} \sim \mathcal{N}(\mathbf{0}_{3,1}, 10^{-4}\mathbf{I}_3)$. The MAE values in the velocities 
decrease considerably for all filters, 
highlighting the impact of non-Gaussian noise in the previous scenarios. For the velocity components 
(Figs.~\ref{fig:vel_x_scenario_iii}--\ref{fig:vel_z_scenario_iii}), all filters, but $\mathrm{SO}(3)$-EKF, continue to perform similarly, with IterIEKF achieving the 
lowest error. For the remaining states, the iterated invariant filter shows clear improvements. In particular, for the pitch angle and gravity direction 
(Figs.~\ref{fig:pitch_scenario_iii} and \ref{fig:gravity_dir_scneario_iii}), IterIEKF demonstrates the superconvergent behavior observed in 
previous experiments. This experiment confirms that when the measurement noise is truly white and has small covariance, IterIEKF outperforms the classical filters.

\begin{figure*}[t!]
		\centering
    \subfloat[\protect\label{fig:vel_x_scenario_ii}]{
        \includegraphics[scale=0.17]{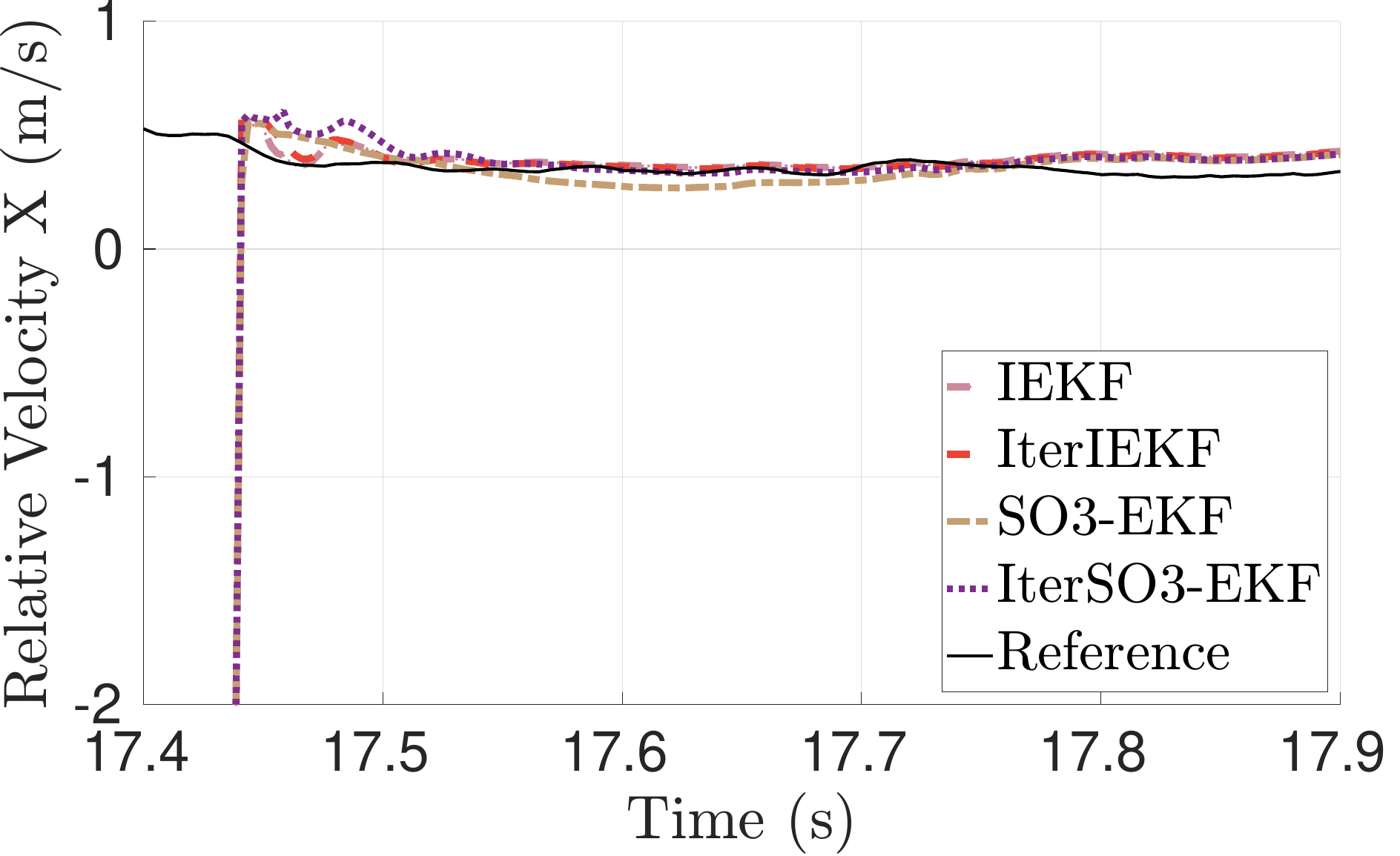}
    }
    \subfloat[\protect\label{fig:vel_y_scenario_ii}]{
        \includegraphics[scale=0.17]{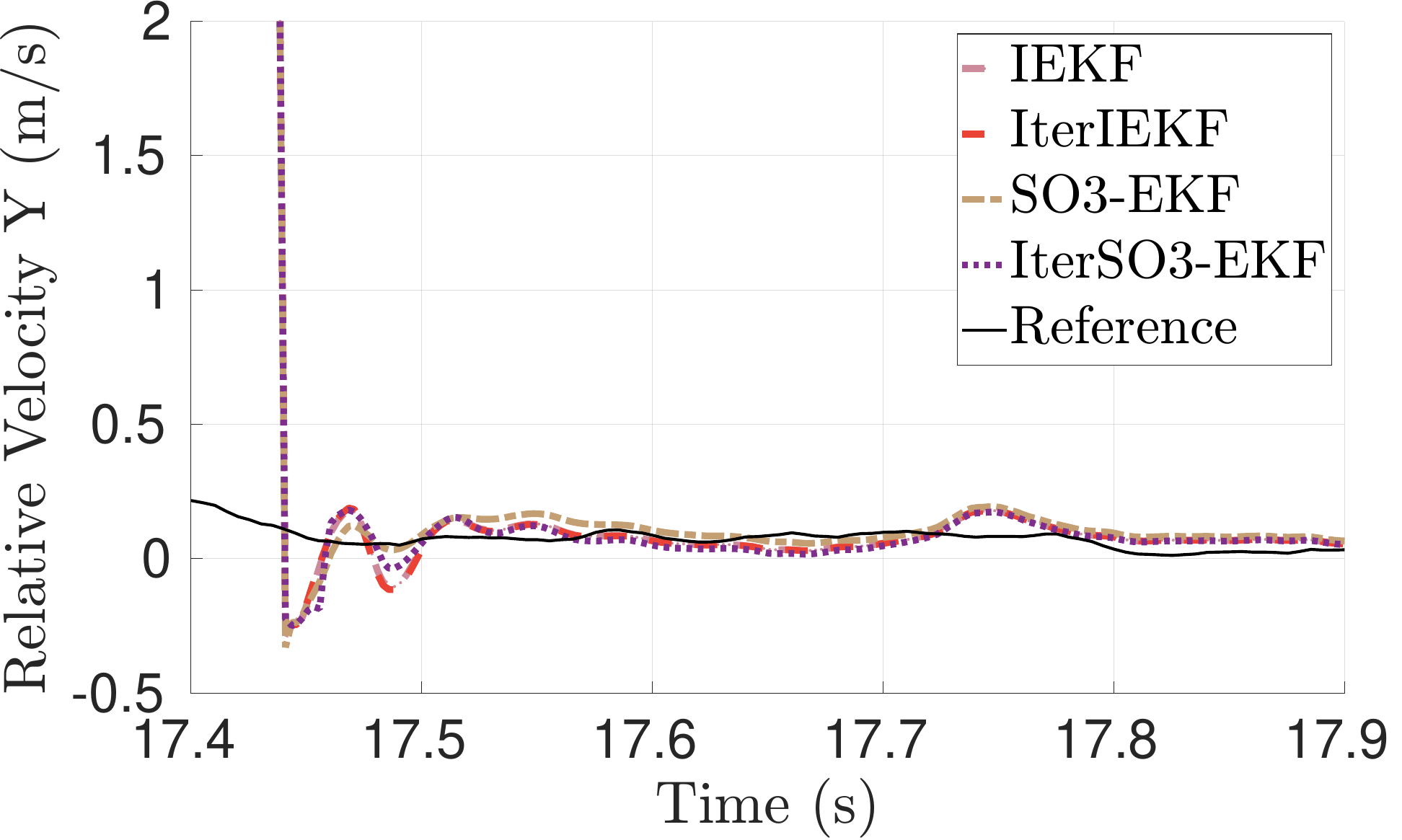}
    }
    \subfloat[\protect\label{fig:vel_z_scenario_ii}]{
        \includegraphics[scale=0.17]{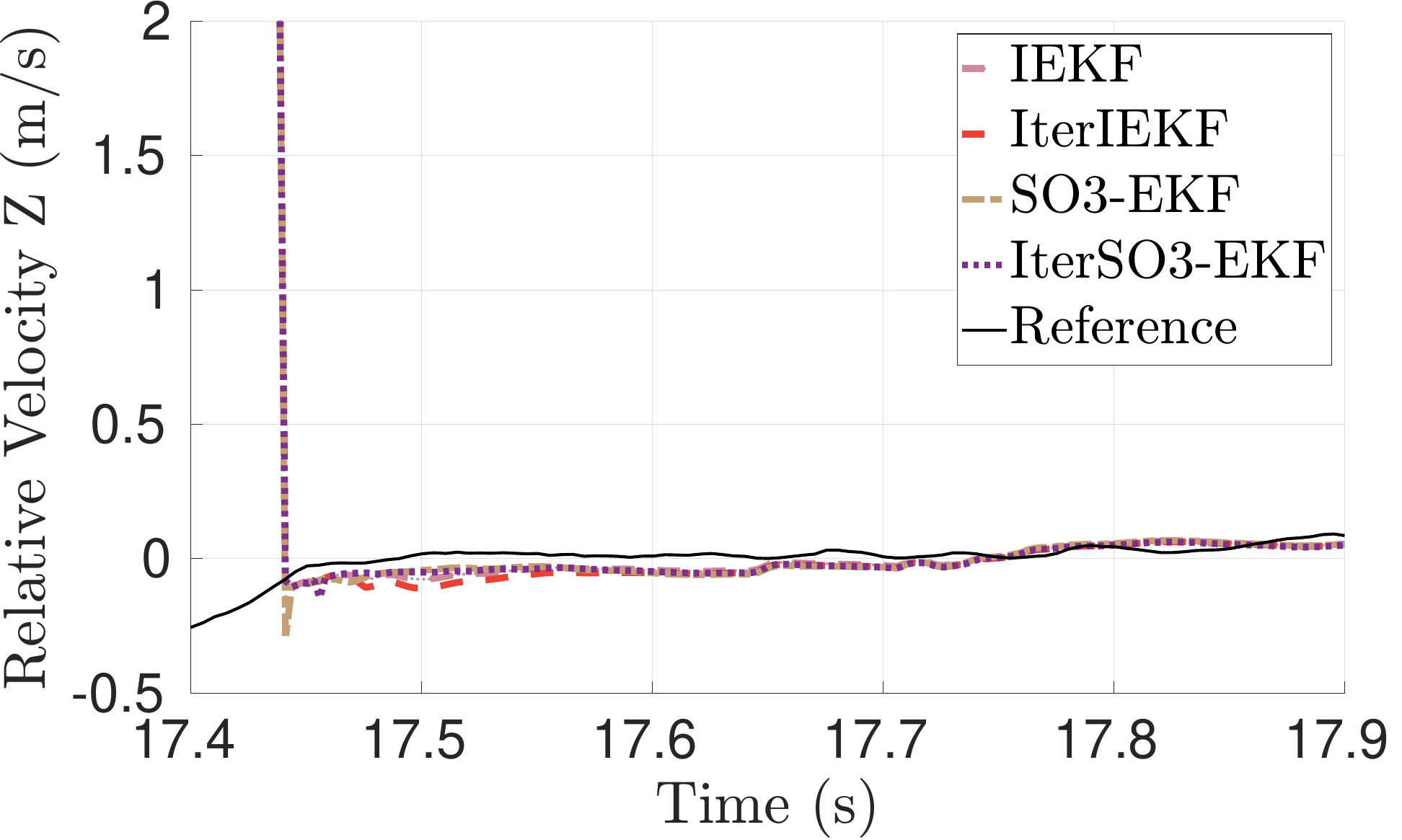}
    }
   \hfill
   \centering
   \subfloat[\protect\label{fig:pitch_scenario_ii}]{
       \includegraphics[scale=0.17]{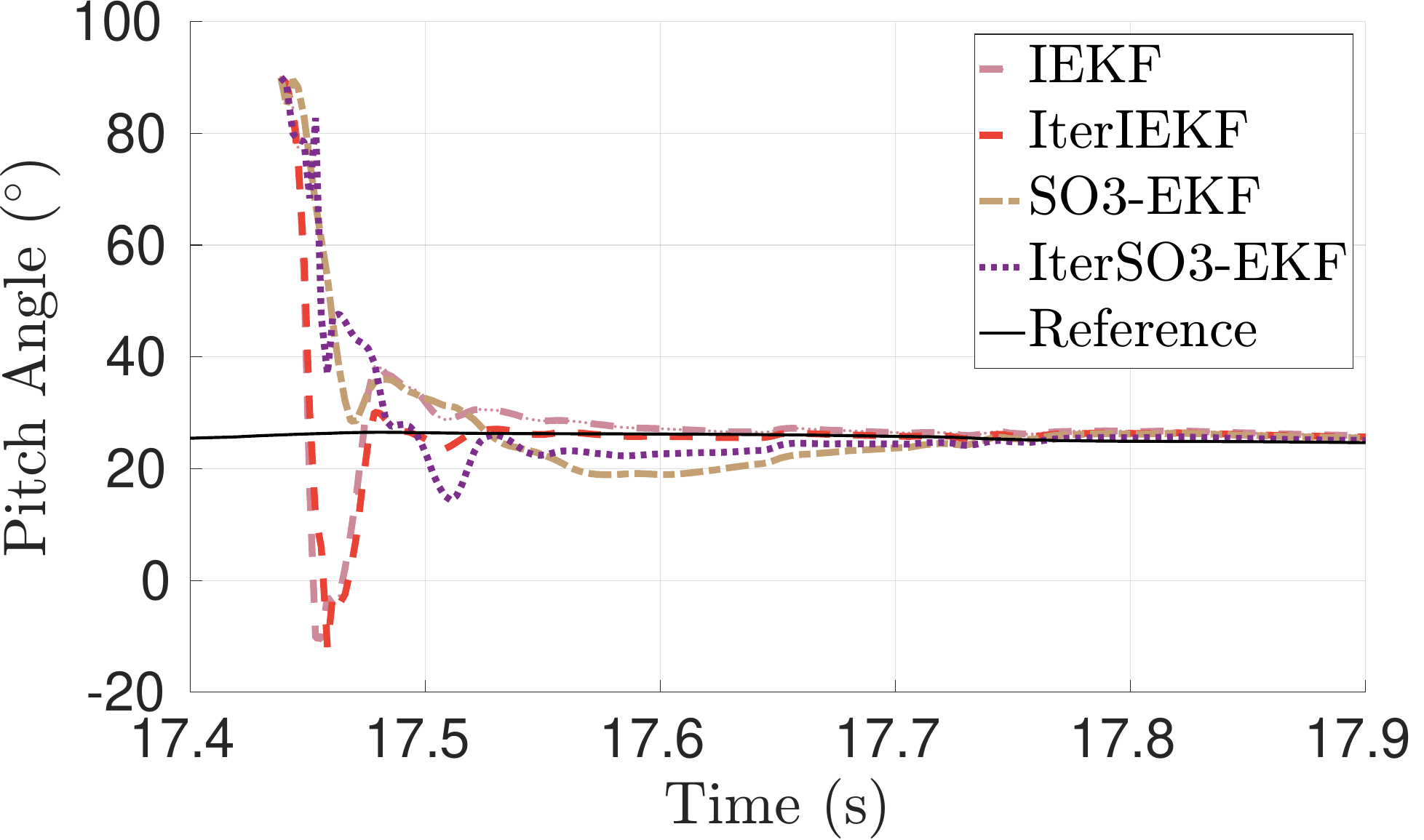}
    }
    \subfloat[\protect\label{fig:roll_scenario_ii}]{
        \includegraphics[scale=0.17]{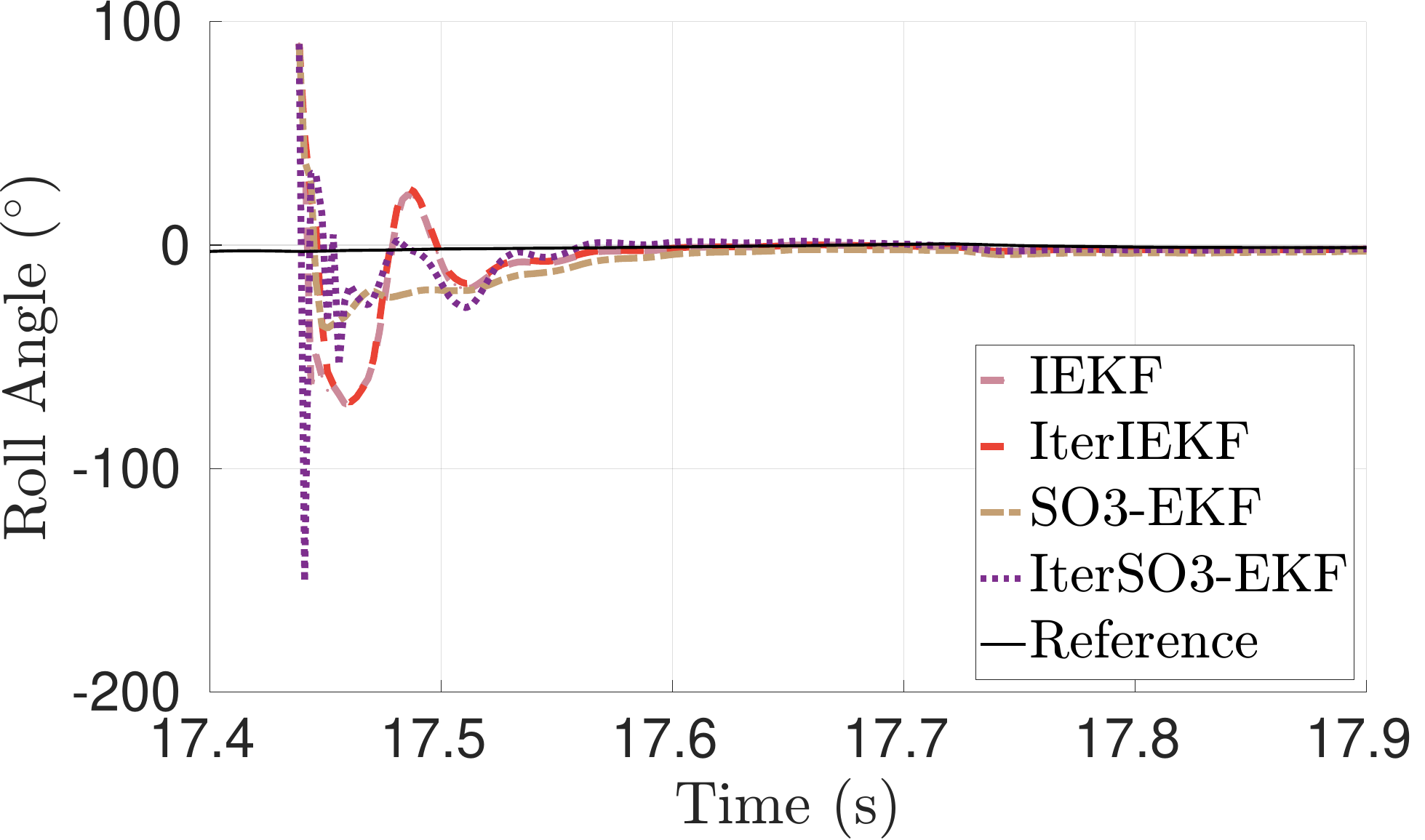}
    }
    \subfloat[\protect\label{fig:gravity_dir_scneario_ii}]{
        \includegraphics[scale=0.17]{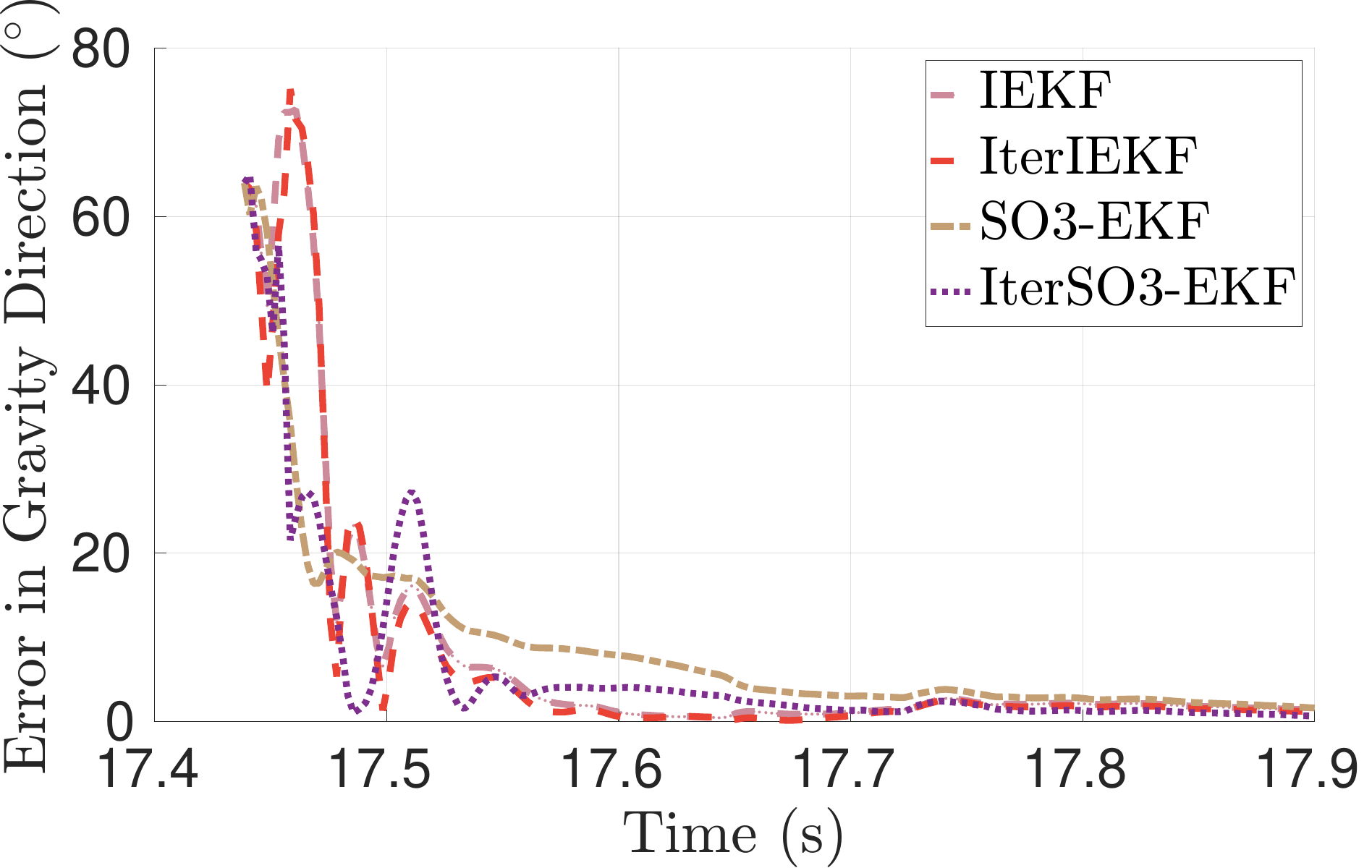}
    }
    \caption{
        Estimates obtained with different filters on the \texttt{Pilatus-Hike2} dataset under Scenario 2 conditions. 
            a) $x$-component,  b) $y$-component, c) $z$-component of the relative velocity $\bar{\mathbf{R}}_{i|i} \bar{\mathbf{v}}_{i|i}$. 
            d) Estimate of the pitch angle $\bar{\phi}_{i|i}$. e) Estimate of the roll angle $\bar{\theta }_{i|i}$. 
            f) Error in the gravity direction $\mathrm{acos}(\bar{\mathbf{u}}_{i|i} \cdot \mathbf{u}_{i})$.
    }
    \label{fig:result_scenario_ii}
\end{figure*}
\begin{figure*}[t!]
		\centering
    \subfloat[\protect\label{fig:vel_x_scenario_iii}]{
        \includegraphics[scale=0.172]{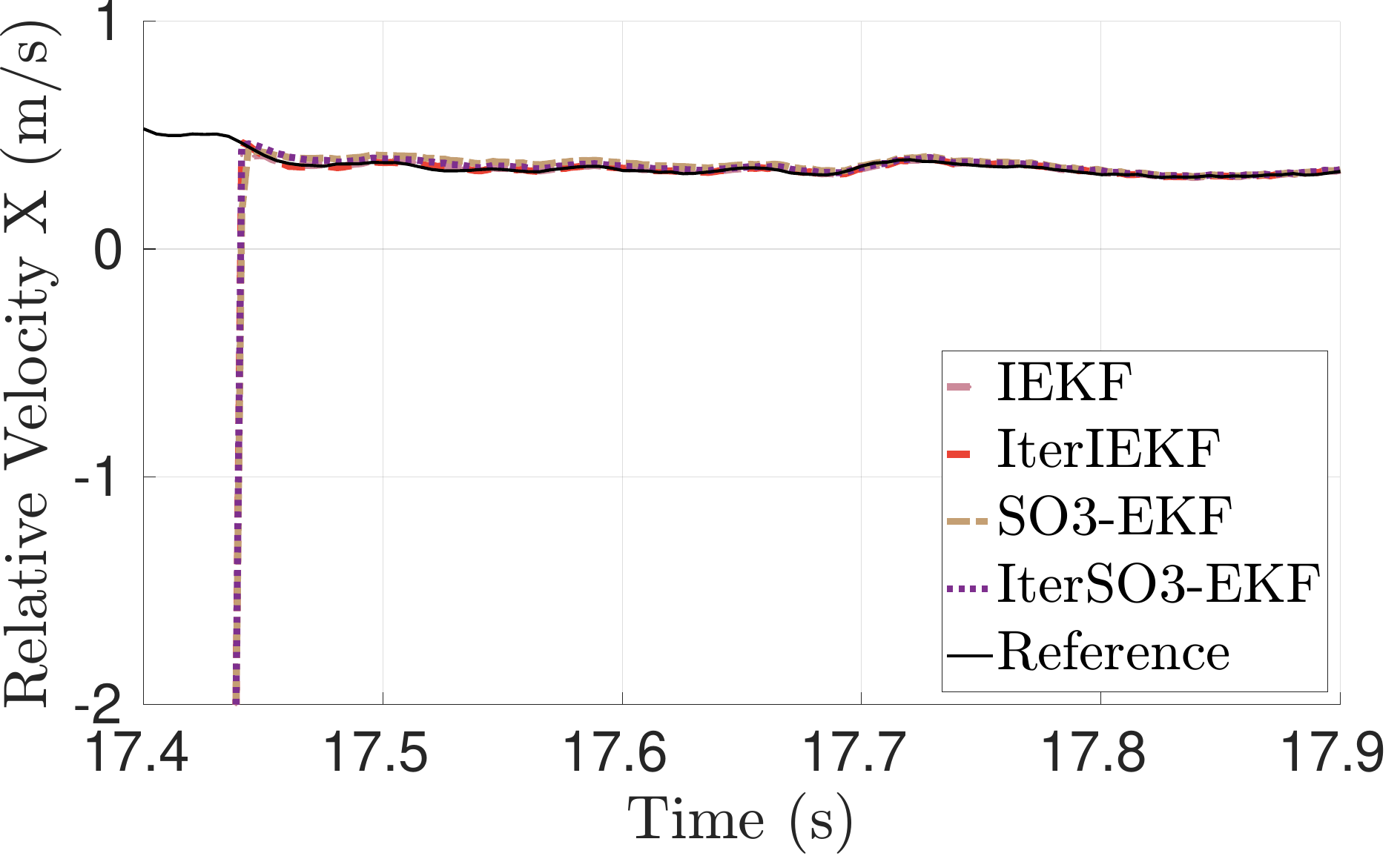}
    }
    \subfloat[\protect\label{fig:vel_y_scenario_iii}]{
        \includegraphics[scale=0.172]{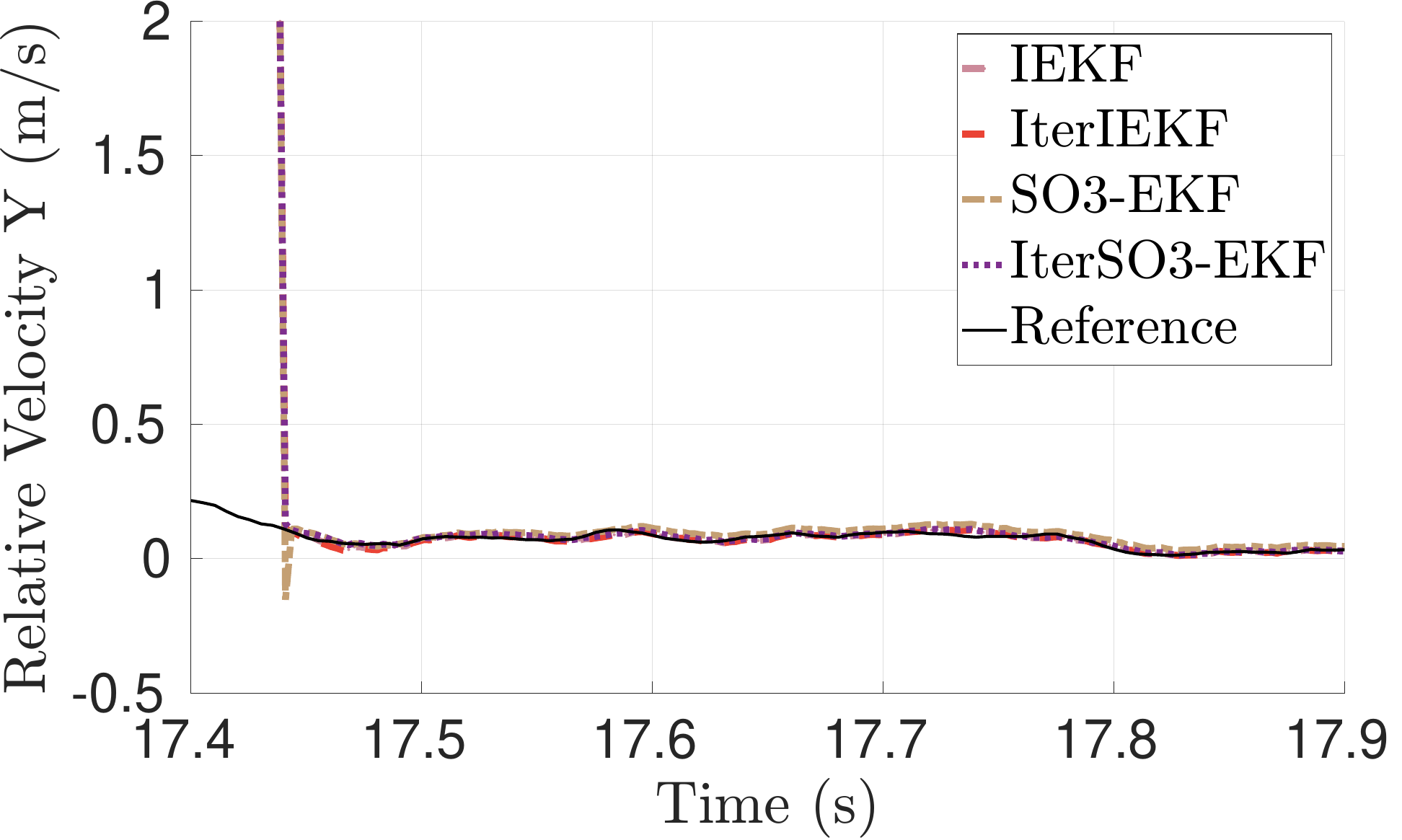}
    }
    \subfloat[\protect\label{fig:vel_z_scenario_iii}]{
        \includegraphics[scale=0.172]{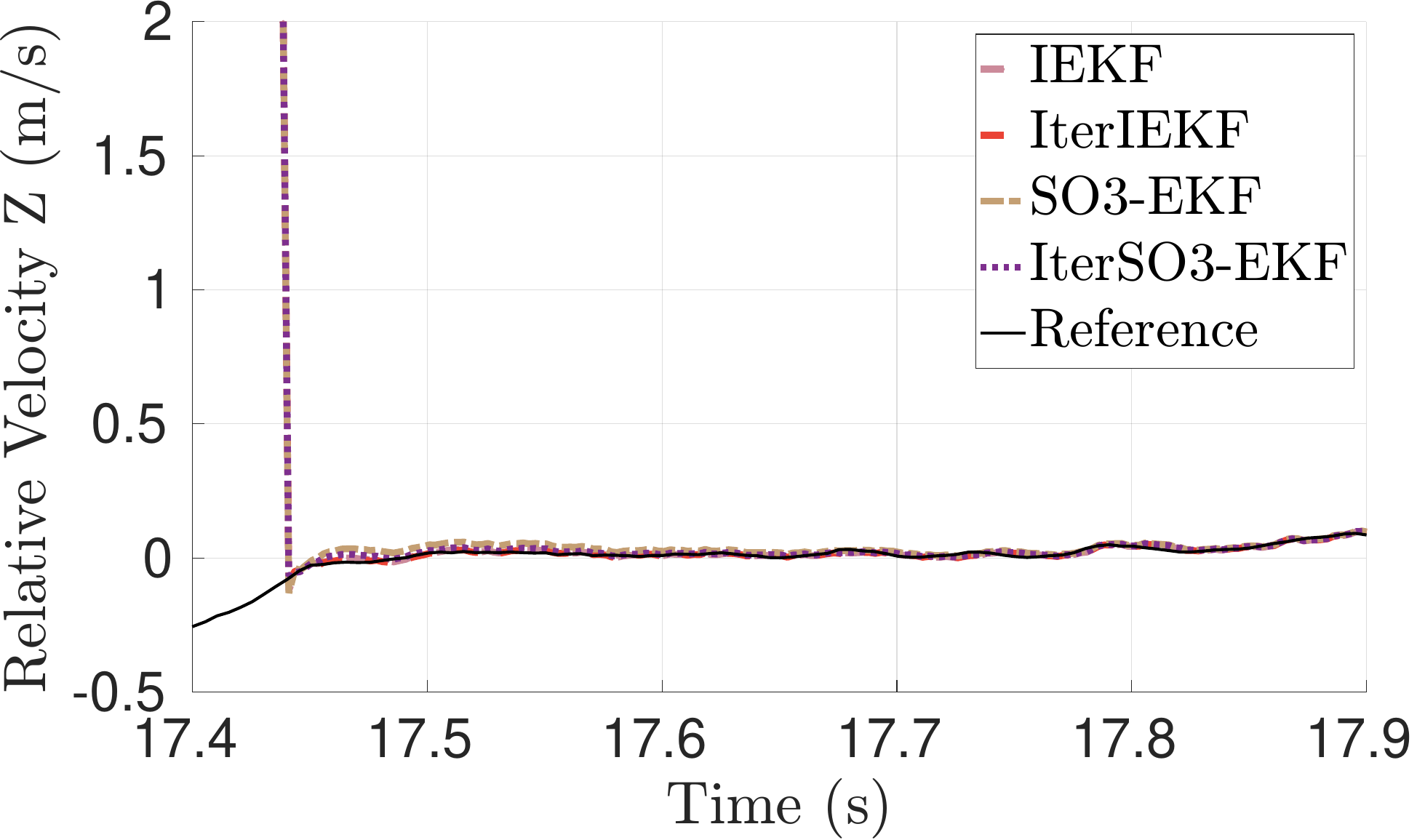}
    }
   \hfill
   \subfloat[\protect\label{fig:pitch_scenario_iii}]{
       \includegraphics[scale=0.172]{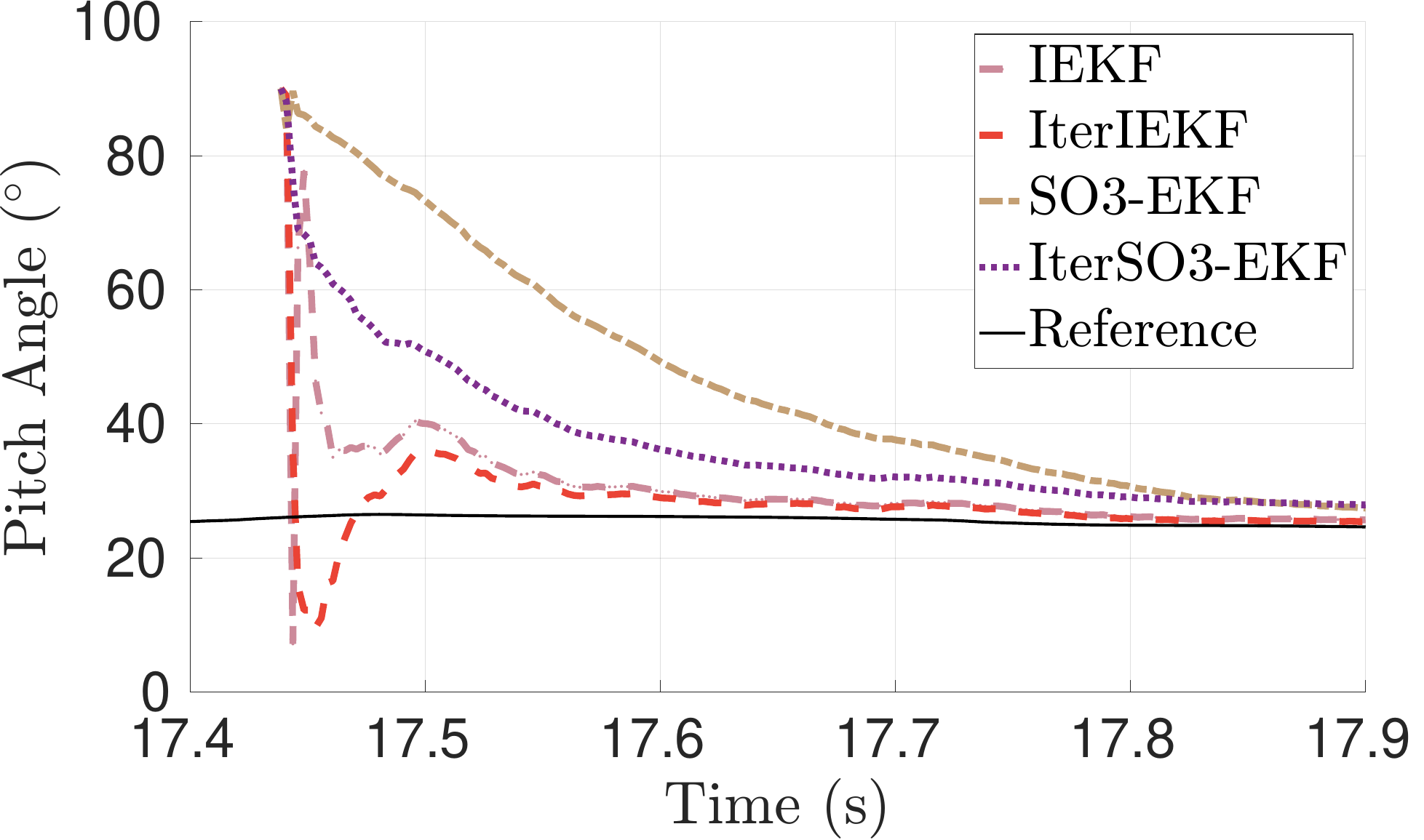}
    }
    \subfloat[\protect\label{fig:roll_scenario_iii}]{
        \includegraphics[scale=0.172]{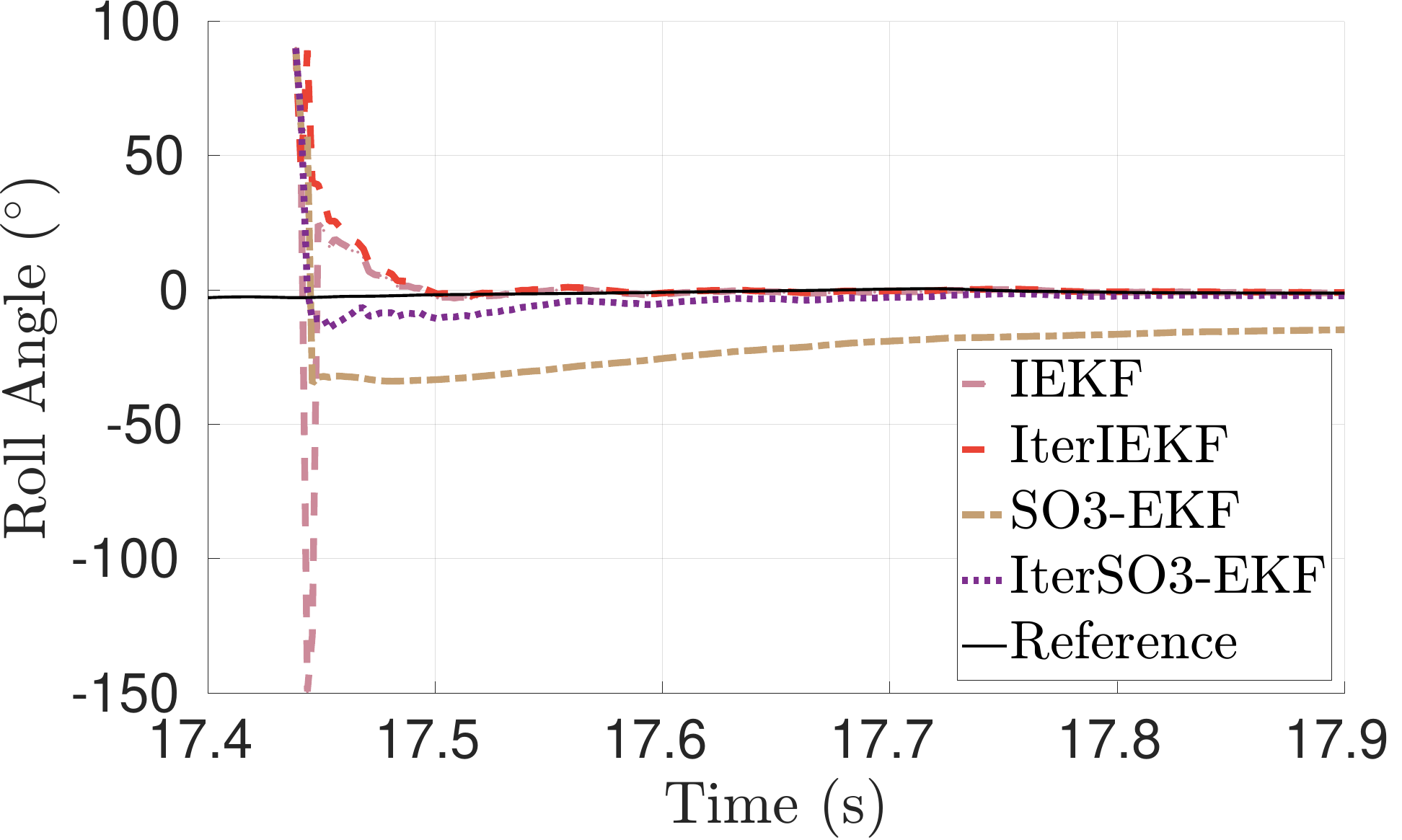}
    }
    \subfloat[\protect\label{fig:gravity_dir_scneario_iii}]{
        \includegraphics[scale=0.172]{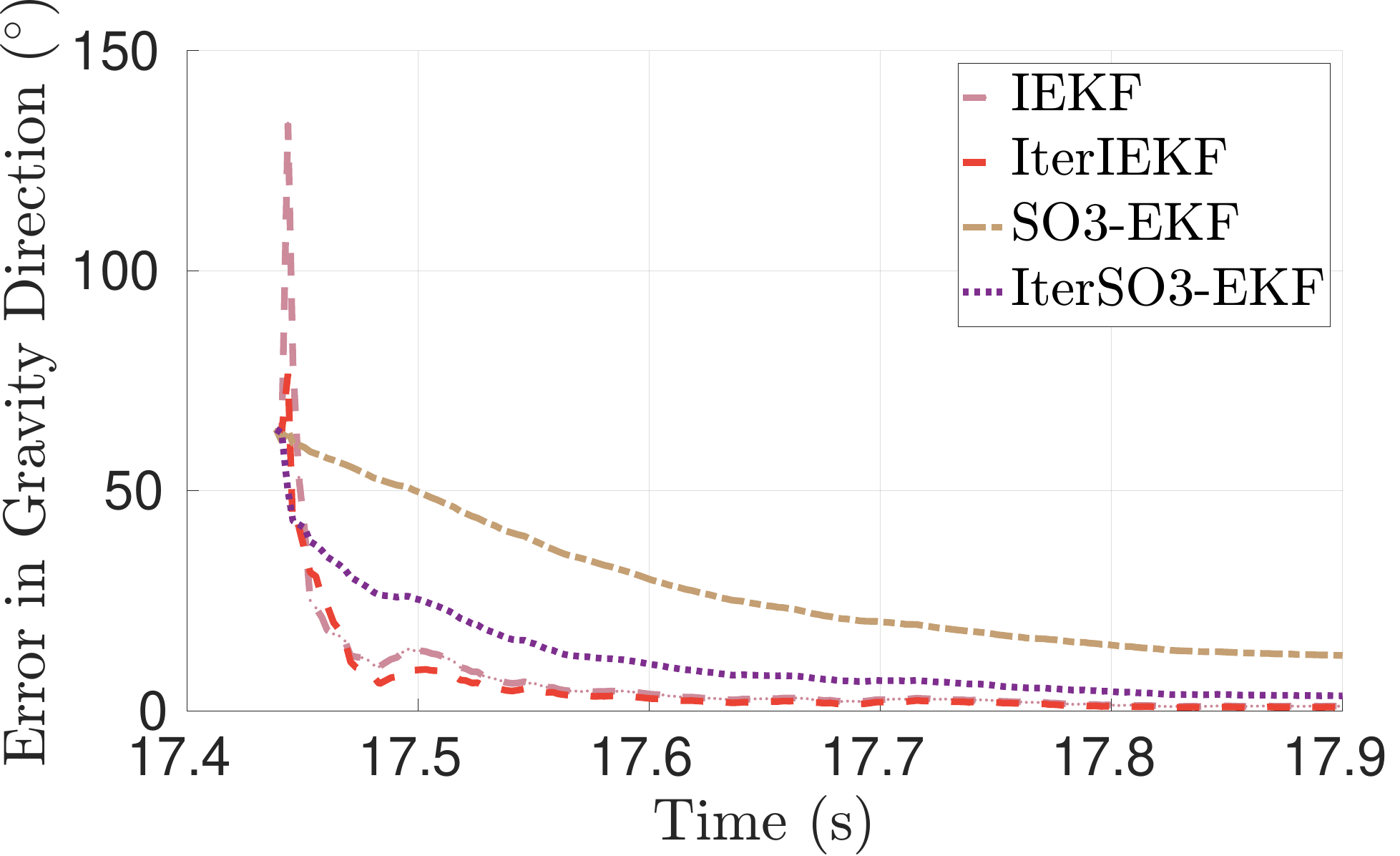}
    }
    \caption{
        Estimates obtained with different filters on the \texttt{Pilatus-Hike2} dataset under Scenario 3 conditions. 
            a) $x$-component,  b) $y$-component, c) $z$-component of the relative velocity $\bar{\mathbf{R}}_{i|i} \bar{\mathbf{v}}_{i|i}$. 
            d) Estimate of the pitch angle $\bar{\phi}_{i|i}$. e) Estimate of the roll angle $\bar{\theta }_{i|i}$. 
            f) Error in the gravity direction $\mathrm{acos}(\bar{\mathbf{u}}_{i|i} \cdot \mathbf{u}_{i})$.
    }
    \label{fig:result_scenario_iii}
\end{figure*}

\section{LIMITATIONS}
\label{sec:limitations}

There are two important limitations of the iterated filters. First, the covariance of the measurement function must be well known. 
Second, it should be small. Experimental results indicate that if an incorrect covariance matrix is used for the iterated filters, 
the estimate may be worse than that of their non-iterated counterparts. This is a classical feature of iterated filters, though, as they better incorporate the measurement by finding the actual maximum a posteriori: if the measurement does not bring much information, their advantage is limited and iterating on a bad measurement may prove counterproductive. Moreover, we observe that if the measurement covariance matrix of the measured velocity has relatively large values 
($> 10^{-2}\mathbf{I}_3 \,(\mathrm{m^2/s^2})$), the improvement achieved by the iterated filters is significantly reduced, as shown in the previous section.

\section{CONCLUSIONS AND FUTURE WORK}

In this work, we proposed a novel IterIEKF for state estimation in legged robots, building on the latest developments in invariant filtering theory. Through numerical simulations and a real-world dataset, we demonstrated that the proposed filter remains effective even in the presence of noisy measurements, thus extending the theoretical properties established in \cite{goffin2025} to realistic scenarios. Furthermore, we showed that as few as four iterations are sufficient to achieve a considerable improvement in the estimates, without introducing a significant increase in computational time.

As future work, we plan to extend the IterIEKF to other state-estimation scenarios. As shown in Fig.~\ref{fig:slam}, its superior accuracy in classical localization suggests strong potential for SLAM applications. We also aim to develop an IterIEKF formulation based on the Two-Frame Group~\cite{barrau2022} to incorporate accelerometer bias, and to investigate how additional measurements, such as forward kinematics, can further improve performance within an iterated filtering framework.



\textbf{}

\vspace{-0.7cm}

\appendix
\subsection{Lie Group formulas for $\mathrm{SE_2}(3)$ and $\mathrm{SO}(3)\times\mathbb{R}^{6}$}
\label{appendix:appendix_formulas}
Here we define exponential and logarithm maps for $\mathrm{SE}_2(3)$ and $\mathrm{SO}(3) \times \mathbb{R}^{6}$:
\begin{equation}
\begin{aligned}
	\mathbb{R}^{9} \mapsto  \mathrm{SE}_2(3) \\
\mathrm{Exp}_{\mathrm{SE}_2(3)}(\boldsymbol{\xi}) := 
\mathrm{Exp}([\boldsymbol{\xi}^{\mathbf{R}}\, \boldsymbol{\xi}^{\mathbf{v}}\, \boldsymbol{\xi}^{\mathbf{p}}]^{T}) =\\ 
\begin{bmatrix}
\mathrm{Exp}_{\mathrm{SO}(3)}(\boldsymbol{\xi}^{\mathbf{R}}) & \mathcal{J}_{l,\mathrm{SO}(3)}(\boldsymbol{\xi}^{\mathbf{R}})\boldsymbol{\xi}^{\mathbf{v}} & 
\mathcal{J}_{l,\mathrm{SO}(3)}(\boldsymbol{\xi}^{\mathbf{R}})\boldsymbol{\xi}^{\mathbf{p}} \\
\mathbf{0}_{1,3} & 1 & 0 \\
\mathbf{0}_{1,3} & 0 & 1
\end{bmatrix},
\end{aligned}
\end{equation}
\begin{equation}
\begin{aligned}
	\mathbb{R}^{9} \mapsto  \mathrm{SO}(3) \times \mathbb{R}^{6} \\
\mathrm{Exp}_{\mathrm{SO}(3) \times \mathbb{R}^{6}}(\delta \boldsymbol{x}) := 
\mathrm{Exp}([\delta \boldsymbol{x}^{\mathbf{R}}\, \delta \boldsymbol{x}^{\mathbf{v}}\, \delta \boldsymbol{x}^{\mathbf{p}}]^{T}) =\\ 
\{\mathrm{Exp}_{\mathrm{SO}(3)}(\delta \boldsymbol{x}^{\mathbf{R}}), \delta \boldsymbol{x}^{\mathbf{v}}, \delta \boldsymbol{x}^{\mathbf{p}}\},
\end{aligned}
\end{equation}
\begin{equation}
	\begin{aligned}
	\mathrm{SE}_2(3) \mapsto \mathbb{R}^{9} \\
\mathrm{Log}_{\mathrm{SE}_2(3)}(\mathcal{X}) :=
\begin{bmatrix}  
\mathrm{Log}_{\mathrm{SO}(3)}(\mathbf{R}) \\
\mathcal{J}_{l,\mathrm{SO}(3)}(\mathrm{Log}_{\mathrm{SO}(3)}(\mathbf{R}))^{-1} \mathbf{v} \\
\mathcal{J}_{l,\mathrm{SO}(3)}(\mathrm{Log}_{\mathrm{SO}(3)}(\mathbf{R}))^{-1} \mathbf{p}
\end{bmatrix} 
	,\end{aligned}
\end{equation}
\begin{equation}
\begin{aligned}
	\mathrm{SO}(3) \times \mathbb{R}^{6} \mapsto \mathbb{R}^{9} \\
\mathrm{Log}_{\mathrm{SO}(3) \times \mathbb{R}^{6}}(\boldsymbol{x}) :=
\begin{bmatrix}  
\mathrm{Log}_{\mathrm{SO}(3)}(\mathbf{R}) \\
\mathbf{v} \\
\mathbf{p}
\end{bmatrix} 
,\end{aligned}
\end{equation}
where 
\begin{equation}
\begin{aligned}
	\mathcal{J}_{l, \mathrm{SO}(3)}(\boldsymbol{\xi}^{\mathbf{R}}) = \mathbf{I}_3 + \frac{1-\cos(\|\boldsymbol{\xi}^{\mathbf{R}}\|)}
	{\|\boldsymbol{\xi}^{\mathbf{R}}\|^2} [\boldsymbol{\xi}^{\mathbf{R}}]^{\times} + \\
	\frac{\|\boldsymbol{\xi}^{\mathbf{R}}\| - \sin(\|\boldsymbol{\xi}^{\mathbf{R}}\|)}{\|\boldsymbol{\xi}^{\mathbf{R}}\|^3} ([\boldsymbol{\xi}^{\mathbf{R}}]^{\times})^{2},
\end{aligned}
\end{equation}
\begin{equation}
	\begin{aligned}
	\mathcal{J}_{l, \mathrm{SO}(3)}(\boldsymbol{\xi}^{\mathbf{R}})^{-1} = \mathbf{I}_3 - \frac{1}{2} [\boldsymbol{\xi}^{\mathbf{R}}]^{\times} + \\
	\left(\frac{1}{\|\boldsymbol{\xi}^{\mathbf{R}}\|^2} - \frac{1+\cos(\|\boldsymbol{\xi}^{\mathbf{R}}\|)}{2 \|\boldsymbol{\xi}^{\mathbf{R}}\| \sin(\|\boldsymbol{\xi}^{\mathbf{R}}\|)}\right) 
	([\boldsymbol{\xi}^{\mathbf{R}}]^{\times})^{2}
	,\end{aligned}
\end{equation}
\begin{equation}
	\begin{aligned}
	\mathrm{Exp}_{\mathrm{SO}(3)}(\boldsymbol{\xi}^{\mathbf{R}}) = \mathbf{I}_3 + \frac{\sin(\|\boldsymbol{\xi}^{\mathbf{R}}\|)}{\|\boldsymbol{\xi}^{\mathbf{R}}\|} [\boldsymbol{\xi}^{\mathbf{R}}]^{\times} + \\ \frac{1-\cos(\|\boldsymbol{\xi}^{\mathbf{R}}\|)}{\|\boldsymbol{\xi}^{\mathbf{R}}\|^2} 
	([\boldsymbol{\xi}^{\mathbf{R}}]^{\times})^{2},
	\end{aligned}
\end{equation}
\begin{equation}
\begin{aligned}
	\mathrm{Log}_{\mathrm{SO}(3)} (\mathbf{R}) = \frac{\theta}{2 \sin(\theta)} (\mathbf{R} - \mathbf{R}^{T})^{\vee}, \\
	\theta = \mathrm{acos}\left(\frac{\mathrm{tr}(\mathbf{R}) - 1}{2}\right)	
,\end{aligned}
\end{equation}
and $(\cdot)^{\times}$ is the skew-symmetric operator and $(\cdot)^{\vee}$ is its inverse. The right-jacobian for $\mathrm{SE}_2(3)$, $\mathcal{J}_{r, \mathrm{SE}_2(3)}$, is given by \cite{brossard2020}:
\begin{equation}
\mathcal{J}_{r, \mathrm{SE}_2(3)}(\boldsymbol{\xi}) = \mathcal{J}_{l, \mathrm{SE}_2(3)}(-\boldsymbol{\xi}),
\end{equation}
\begin{equation}
\begin{aligned}
	\mathcal{J}_{l, \mathrm{SE}_2(3)}(\boldsymbol{\xi}) = \\
\begin{bmatrix}
	\mathcal{J}_{l, \mathrm{SO}(3)}(\boldsymbol{\xi}^{\mathbf{R}}) & 0 & 0 \\
	Q(\boldsymbol{\xi}^{\mathbf{R}}, \boldsymbol{\xi}^{\mathbf{v}}) & \mathcal{J}_{l, \mathrm{SO}(3)}(\boldsymbol{\xi}^{\mathbf{R}}) & 0 \\
	Q(\boldsymbol{\xi}^{\mathbf{R}},\boldsymbol{\xi}^{\mathbf{p}}) & 0 & \mathcal{J}_{l, \mathrm{SO}(3)}(\boldsymbol{\xi}^{\mathbf{R}})
\end{bmatrix}
,\end{aligned}
\end{equation}
\begin{equation}
	\begin{aligned}
Q(\boldsymbol{\xi}^{\mathbf{R}}, \boldsymbol{\xi}^{\mathbf{u}})
=
\frac12 \mathbf{U} 
+ a(\mathbf{V} \mathbf{U} + \mathbf{U}\mathbf{V} + \mathbf{V}\mathbf{U} \mathbf{V}) \\
- b(\mathbf{V}\mathbf{V}\mathbf{U} + \mathbf{U}\mathbf{V}\mathbf{V} - 3\mathbf{V}\mathbf{U}\mathbf{V}) \\
+ c(\mathbf{V}\mathbf{U}\mathbf{V}\mathbf{V} + \mathbf{V}\mathbf{V}\mathbf{U}\mathbf{V}), \\
\mathbf{V} = (\boldsymbol{\xi}^{\mathbf{R}}{})^{\times}, \theta = \|\boldsymbol{\xi}^{\mathbf{R}}\|,
\mathbf{U} = (\boldsymbol{\xi}^{\mathbf{u}}{})^{\times}, \\
a = \frac{\theta - \sin\theta}{\theta^3}, 
b = \frac{1 - \tfrac{\theta^2}{2} - \cos\theta}{\theta^4}, \\
c = -\frac12
\left(
\frac{1 - \tfrac{\theta^2}{2} - \cos\theta}{\theta^4}
-
3\frac{\theta - \sin\theta - \tfrac{\theta^3}{6}}{\theta^5}
\right)
.\end{aligned}
\end{equation}

\subsection{Proof of Proposition ~\ref{prop:coset}}
\label{appendix:appendix_proof}
Let $G$ be the same Lie Group as in Definition~\ref{def:submanifold} and let 
$G_\mathbf{d}$ be a subgroup of $G$ defined as:
\begin{equation}
  G_\mathbf{d} := \{\mathcal{X} \in G \mid \mathcal{X} \mathbf{d} = \mathbf{d}\},
\end{equation}
known as the stabilizer of $\mathbf{d}$. Let $\mathcal{X}_0 \in S_{\mathcal{X}^{-1} \mathbf{y}= \mathbf{d}}$ such that 
$\mathcal{X}_0^{-1} \mathbf{d} = \mathbf{y}$. Then, for any element 
$\mathcal{X} \in G_\mathbf{d} \mathcal{X}_0$, we have:
\begin{equation}
    \mathcal{X}^{-1} \mathbf{d} 
    = (\mathcal{X}_0^{-1} \mathcal{H}^{-1}) \mathbf{d} 
    = \mathcal{X}_0^{-1} (\mathcal{H}^{-1} \mathbf{d}) 
    = \mathcal{X}_0^{-1} \mathbf{d} 
    = \mathbf{y},
\end{equation}
where $\mathcal{H} \in G_\mathbf{d}$, so that $G_\mathbf{d} \mathcal{X}_0 \subseteq S_\mathcal{X}$. Conversely, for any 
$\mathcal{X} \in S_{\mathcal{X}^{-1} \mathbf{d} = \mathbf{y}}$, let $\mathcal{H}:= \mathcal{X} \mathcal{X}_0^{-1}$, notice that:
\begin{equation}
    \mathcal{H} \mathbf{d} = 
    \mathcal{X} \mathcal{X}_0^{-1} \mathbf{d} =
    \mathcal{X} \mathbf{y} = \mathbf{d},
\end{equation}
then $\mathcal{H} \in G_\mathbf{d}$ and $S_{\mathcal{X}^{-1}\mathbf{d} = \mathbf{y}} = G_\mathbf{d} \mathcal{X}_0$.

\subsection{Robocentric Formulation of IterIEKF}
\label{appendix:appendix_robocentric}
In this section, we develop a robot-centric formulation of the IterIEKF. While the proposed world-centric formulation is well-suited for navigation, a robot-centric approach is more appropriate for control, since the base velocity in the state is naturally expressed in the base frame. 
We define
the robocentric state, $\boldsymbol{\chi}$, by using the inverse of $\mathcal{X}$ defined in~\eqref{eq:state_inverse}:
\begin{equation}
\boldsymbol{\mathcal{\chi}} := \mathcal{X}^{-1} = (\mathbf{R}^{T}, - \mathbf{R}^{T} \mathbf{v}, -\mathbf{R}^{T} \mathbf{p}) :=
(\boldsymbol{C}, \boldsymbol{v}, \boldsymbol{r}) := (\boldsymbol{C}, \mathbf{e}),
\end{equation}
where $\boldsymbol{C} := \mathbf{R}^{T}$ and $\mathbf{e} := [\boldsymbol{v}, \boldsymbol{r}]^{T}$. Notice
we keep the same notation as \cite{bloesch2013}.

We obtain the prediction step by inverting~\eqref{eq:prediction_step}:
\begin{equation}
 \begin{aligned}
     \bar{\boldsymbol{\chi}}_{i|i-1} = \mathcal{F}_{i-1}^{-1}(\bar{\mathcal{X}}_{i-1}) = 
     \mathcal{F}_{i-1}^{-1}(\bar{\boldsymbol{\chi}}_{i-1}) = \\
\bar{\mathcal{Y}}_{i-1}^{-1}(\mathbf{w}_{i-1})
\Phi_{i-1}^{-1}(\bar{\boldsymbol{\chi}}_{i-1}) \mathcal{W}_{i-1}^{-1} := \\
\bar{\Upsilon}_{i-1}(\mathbf{w}_{i-1}) \Phi_{i-1} (\bar{\boldsymbol{\chi}}_{i-1}) \Gamma_{i-1}
 \end{aligned}
\end{equation}
Each inverse is obtained individually, as follows:
\begin{equation}
\begin{aligned}
\Phi_{i-1}^{-1}(\bar{\mathcal{X}}) = [(\mathbf{I}_{3}, (\mathbf{F}_{i-1} - \mathbf{I}_6)*\mathbf{x}) \circ (\bar{\mathbf{R}}_{i-1}, \bar{\mathbf{x}}_{i-1})]^{-1} =  \\
(\bar{\mathbf{R}}_{i-1}, \bar{\mathbf{x}}_{i-1})^{-1} (\mathbf{I}_{3}, (\mathbf{F}_{i-1} - \mathbf{I}_{6})*\bar{\mathbf{x}}_{i-1})^{-1} =  \\
(\bar{\boldsymbol{C}}_{i-1}, \bar{\mathbf{e}}_{i-1}) (\mathbf{I}_{3}, (\mathbf{F}_{i-1} - \mathbf{I}_{6})* \bar{\boldsymbol{C}}_{i-1}^{T} \bar{\mathbf{e}}_{i-1}) = \\
(\bar{\boldsymbol{C}}_{i-1}, (\bar{\boldsymbol{C}}_{i-1}* \bar{\mathbf{F}}_{i-1}* \bar{\boldsymbol{C}}^{T}_{i-1})*\bar{\mathbf{e}}_{i-1}) = \\
(\bar{\boldsymbol{C}}_{i-1}, \mathbf{F}_{i-1}* \bar{\mathbf{e}}_{i-1}) = \Phi_{i-1}(\boldsymbol{\chi}_{i-1}),
\end{aligned}
\end{equation}
where the last equality followed because $\mathbf{F}_{i-1}$ 
commutes with $\bar{\boldsymbol{C}}_{i-1}$ (see Proposition 1 in \cite{barrau2022}). We also have:
\begin{equation}
    \Gamma_{i-1} := \mathcal{W}_{i-1}^{-1} = (\mathbf{I}_{3}, -\mathbf{d}_{i-1}),
\end{equation}
and
\begin{equation}
\begin{aligned}
    &\Upsilon_{i-1} := \bar{\mathcal{Y}}_{i-1}^{-1} = \\
(
&\mathrm{Exp}_{\mathrm{SO}(3)}(\mathbf{G}_{i-1}^{1} \mathbf{w}_{g,i-1})^{T}
\mathrm{Exp}_{\mathrm{SO}(3)}({\boldsymbol{\omega}}_{I,i-1}\Delta t)^{T} 
\\- 
&\mathrm{Exp}_{\mathrm{SO}(3)}(\mathbf{G}_{i-1}^{1} \mathbf{w}_{g,i-1})^{T}
\mathrm{Exp}_{\mathrm{SO}(3)}({\boldsymbol{\omega}}_{I,i-1}\Delta t)^{T} \\
&(\mathbf{s}_{i-1} + \mathbf{G}_{i-1}^{2} \mathbf{w}_{a,i-1} ).
\end{aligned}
\end{equation}
By composing all inverses we obtain the prediction step:
\begin{equation}
\label{eq:prediction_step_robocentric}
\begin{aligned}
\boldsymbol{\chi}_{i|i-1} =
\Bigl(
\bar{\mathbf{G}}_{i-1}\bar{\boldsymbol{C}}_{i-1},\;
-\bar{\mathbf{G}}_{i-1}*\bigl(\mathbf{s}_{i-1} + \mathbf{G}_{i-1}^{2}\mathbf{w}_{a,i-1}\bigr)
+ \\ \bar{\mathbf{G}}_{i-1}*\mathbf{F}_{i-1}\bar{\mathbf{e}}_{i-1} 
-(\bar{\mathbf{G}}_{i-1}\bar{\boldsymbol{C}}_{i-1})*\mathbf{d}_{i-1}
\Bigr),
\end{aligned}
\end{equation}
where
\begin{equation}
\bar{\mathbf{G}}_{i-1}
:=
\mathrm{Exp}_{\mathrm{SO}(3)}(\mathbf{G}_{i-1}^{1} \mathbf{w}_{g,i-1})^{T}
\mathrm{Exp}_{\mathrm{SO}(3)}(\boldsymbol{\omega}_{I,i-1}\Delta t)^{T}.
\end{equation}
By expanding~\eqref{eq:prediction_step_robocentric} we can obtain the same IMU integration as \cite{bloesch2013}.

In the robocentric framework, the appropriate measure of error is the left-invariant, $\boldsymbol{\xi}_{l}$. 
This happens because we have a measurement in the base frame; otherwise, if we have a measurement in the world frame, the appropriate error would be the right invariant \cite{barrau2022}. The opposite happens when we have a world-centric formulation.
The state part of the propagation of the left-invariant error was obtained in \cite{brossard2022} and given as follows:
\begin{equation}
    \boldsymbol{\xi}_{l, i|i-1} = \hat{\mathbf{A}}_{i-1} \boldsymbol{\xi}_{l, i}, \; \hat{\mathbf{A}}_{i-1} := \mathrm{Ad}_{\Gamma_{i-1}^{-1}}  \mathbf{F}_{i-1}.
\end{equation}
The noisy part is given by:
\begin{equation}
\begin{aligned} 
\bar{\boldsymbol{\chi}}_{i|i-1} \oplus \boldsymbol{\xi}_{l,i|i-1} = 
\mathrm{Ad}_{\bar{\boldsymbol{\chi}}_{i|i-1}} \boldsymbol{\xi}_{l,i|i-1} \oplus  \bar{\boldsymbol{\chi}}_{i|i-1} &= \\ 
\Upsilon_{i-1}(\mathbf{0}_{6,1}) \Phi_{i-1} (\bar{\boldsymbol{\chi}}_{i|i-1})\Gamma_{i-1} &\implies \\
\mathrm{Ad}_{\bar{\boldsymbol{\chi}}_{i|i-1}} \boldsymbol{\xi}_{l,i|i-1} \oplus \bar{\Upsilon}_{i-1} (\mathbf{w}_{i-1}) \Phi(\bar{\boldsymbol{\chi}}_{i|i-1}) \Gamma_{i-1} &= \\ 
\Upsilon_{i-1}(\mathbf{0}_{6,1}) \Phi_{i-1} (\bar{\boldsymbol{\chi}}_{i|i-1})\Gamma_{i-1} &\implies \\ 
\boldsymbol{\xi}_{l,i-1} =  \mathrm{Ad}_{\bar{\boldsymbol{\chi}}_{i|i-1}^{-1}}
\mathrm{Log}(\Upsilon_{i-1}(\mathbf{0}_{6,1}) \bar{\Upsilon}_{i-1}^{-1})
\end{aligned}
\end{equation}
To obtain $\mathrm{Log}(\Upsilon_{i-1}(\mathbf{0}_{6}) \bar{\Upsilon}_{i-1}^{-1})$, notice:
\begin{equation}
\begin{aligned}
\mathrm{Log}(\Upsilon_{i-1}(\mathbf{0}_{6,1}) \bar{\Upsilon}_{i-1}^{-1}) =  \\
\mathrm{Log}(\mathcal{Y}_{i-1}(\mathbf{0}_{6,1})^{-1} \bar{\mathcal{Y}}_{i-1}(\mathbf{w}_{i-1})) =  \\
\mathrm{Log}((\bar{\mathcal{Y}}_{i-1} \mathcal{Y}_{i-1})^{-1}) \stackrel{\eqref{eq:B_noise}}{\approx} \\
\mathrm{Log}(\mathrm{Exp}(\mathbf{G}_{i-1} \mathbf{w}_{i-1})^{-1}) = \\
- \mathbf{G}_{i-1} \mathbf{w}_{i-1}.
\end{aligned}
\end{equation}
The final error propagation is given as:
\begin{equation}
    \begin{aligned}
    \boldsymbol{\xi}_{l,i|i-1} = \hat{\mathbf{A}}_{i-1} \mathbf{P}_{i-1} \hat{\mathbf{A}}^{T} +\\ 
    (\mathrm{Ad}_{\boldsymbol{\chi}_{i|i-1}^{-1}}\mathbf{G}_{i-1})\mathbf{Q}_{i-1} 
    (\mathrm{Ad}_{\boldsymbol{\chi}_{i|i-1}^{-1}}\mathbf{G}_{i-1})^{T}
    .\end{aligned}
\end{equation}
The base velocity as a measurement can be obtained as an action of the state:
\begin{equation}
\begin{aligned}
\mathbf{y}_i := \begin{bmatrix} \tilde{\mathbf{v}}_{I,i} \\ -1 \\ 0 \end{bmatrix} = 
\begin{bmatrix} \mathbf{R}^{T}_i \mathbf{v}_i + \mathbf{w}_{f,i} \\ -1 \\ 0 \end{bmatrix} = \\
\begin{bmatrix} \boldsymbol{C}_i & \boldsymbol{v}_i & \boldsymbol{r}_i  \\
\mathbf{0}_{3,1} & 1 & 0 \\
0 & 0 & 1 \end{bmatrix}\begin{bmatrix}
\mathbf{0}_{3,1} \\
-1 \\
0
\end{bmatrix} + \begin{bmatrix}
\mathbf{w}_{f,i} \\
0 \\
0
\end{bmatrix} = \\
\boldsymbol{\chi}_{i} \mathbf{d} + \mathbf{n}_i,
\end{aligned}
\end{equation}
For the left-invariant error, the innovation is defined as~\cite{barrau2022}:
\begin{equation}
    \begin{aligned}
        \mathbf{z}_{i} := \bar{\boldsymbol{\chi}}_{i|i-1}^{-1} \mathbf{y}_{i} - \mathbf{d} = \\
        \mathrm{Exp}(\boldsymbol{\xi}_{l,i}) \mathbf{d} - \mathbf{d} + \bar{\boldsymbol{\chi}}_{i|i-1}^{-1} \mathbf{n}  \approx \\
        - \mathbf{H} \boldsymbol{\xi}_{l,i} + \bar{\boldsymbol{\chi}}_{i|i-1}^{-1} \mathbf{n} := \\
        \hat{\mathbf{H}} \boldsymbol{\xi}_{l,i} + \bar{\boldsymbol{\chi}}_{i|i-1}^{-1} \mathbf{n}
    .\end{aligned}
\end{equation}
Finally, to obtain the update step of the left-invariant IterIEKF, we minimize the equivalent of~\eqref{eq:minimization},
by means of the Gauss-Newton iterations proposed in \cite{goffin2025}:
\begin{equation}
    \tilde{\hat{\mathbf{H}}}_{i}^{j+1} = \mathrm{Exp}_{\mathrm{SO}(3)}([\boldsymbol{\xi}_{l,i}^{j}]_{1:3}) 
    \tilde{\hat{\mathbf{H}}}_{i} \mathcal{J}_{r}(\boldsymbol{\xi}_{l,i}^{j}),
\end{equation}
\begin{equation}
    \mathbf{S}_i^{j+1} = \tilde{\hat{\mathbf{H}}}_{i}^{j+1} \mathbf{P}_{i|i-1} 
    (\tilde{\hat{\mathbf{H}}}_{i}^{j+1})^{T} + \hat{\tilde{\mathbf{N}}}_{i},
\end{equation}
\begin{equation}
    \mathbf{K}^{j+1}_{i} = \mathbf{P}_{i|i-1}(\tilde{\hat{\mathbf{H}}}_{i}^{j+1})^{T}
    (\mathbf{S}_i^{j+1})^{-1},
\end{equation}
\begin{equation}
    \boldsymbol{\xi}_{l,i}^{j+1} = 
    \mathbf{K}_{i}^{j+1} (\tilde{\mathbf{z}}_{i} - [\mathrm{Exp}(\boldsymbol{\xi}_{l,i}^{j})\mathbf{d} - \mathbf{d}]_{1:3} + 
    \tilde{\hat{\mathbf{H}}}_{i}^{j+1} \boldsymbol{\xi}_{l,i}^{j}),
\end{equation}
\begin{equation}
    \tilde{\hat{\mathbf{H}}}_{i}^{0} := \tilde{\hat{\mathbf{H}}}_{i}, \boldsymbol{\xi}_{l,i}^{0} := \mathbf{0}_{9,1}.
\end{equation}
Let $k$ denote the last iteration, then the update step for the left-invariant error IterIEKF
is given as follows:
\begin{equation}
    \begin{aligned}
        \hat{\boldsymbol{\xi}}_{l,i} := \boldsymbol{\xi}_{l,i}^{k}, \\
    \end{aligned}
\end{equation}
\begin{equation}
    \bar{\boldsymbol{\chi}}_{i} := \bar{\boldsymbol{\chi}}_{i|i} := 
    \bar{\boldsymbol{\chi}}_{i|i-1}
    \oplus 
    \hat{\boldsymbol{\xi}}_{l,i}
\end{equation}
\begin{equation}
    \mathbf{P}_{i|i} = (\mathbf{I}_9 - \mathbf{K}_{i}^{0} \tilde{\hat{\mathbf{H}}}_{i}^{0}) \mathbf{P}_{i|i-1}.
\end{equation}
%



\bibliographystyle{IEEEtran}
\bibliography{main}

@inproceedings{i2ekflo,
  title={I2{EKF}-{LO}: A dual-iteration extended {K}alman filter based {L}i{DAR} odometry},
  author={Yu, Wenlu and Xu, Jie and Zhao, Chengwei and Zhao, Lijun and Nguyen, Thien-Minh and Yuan, Shenghai and Bai, Mingming and Xie, Lihua},
  booktitle={2024 IEEE/RSJ Int. Conf. Intell. Robots Syst.},
  pages={10453--10460},
  year={2024},
  organization={IEEE}
}

@article{foxlin2005pedestrian,
  title={Pedestrian tracking with shoe-mounted inertial sensors},
  author={Foxlin, Eric},
  journal={IEEE Comput. Graph. Appl.},
  volume={25},
  number={6},
  pages={38--46},
  year={2005},
  publisher={IEEE}
}

@article{goffin2025,
  title   = {Iterated {I}nvariant {E}xtended {K}alman {F}ilter ({I}ter{IEKF})},
  author  = {Goffin, Sven and Barrau, Axel and Bonnabel, Silv{\`e}re and Br{\"u}ls, Olivier and Sacr{\'e}, Pierre},
  journal = {IEEE Trans. Autom. Control},
  year    = {2025},
  pages   = {1--8},
  doi     = {10.1109/TAC.2025.3637661}
}

@article{Hartley2019ContactaidedIE,
  title={Contact-aided invariant extended {K}alman filtering for robot state estimation},
  author={Ross Hartley and Maani Ghaffari Jadidi and Ryan M. Eustice and Jessy W. Grizzle},
  journal={Int. J. Robot. Res},
  year={2019},
  volume={39},
  pages={402 - 430},
}

@inproceedings{santana2024,
  title={Proprioceptive state estimation for quadruped robots using invariant Kalman filtering and scale-variant robust cost functions},
  author={Santana, Hilton Marques Souza and Soares, Jo{\~a}o Carlos Virgolino and Nistic{\`o}, Ylenia and Meggiolaro, Marco Antonio and Semini, Claudio},
  booktitle={2024 IEEE-RAS Int. Conf. Humanoid Robots},
  pages={213--220},
  year={2024},
  organization={IEEE}
}

@article{barrau2016,
  title={The invariant extended Kalman filter as a stable observer},
  author={Barrau, Axel and Bonnabel, Silvere},
  journal={IEEE Trans. Autom. Control},
  volume={62},
  number={4},
  pages={1797--1812},
  year={2016},
  publisher={IEEE}
}

@article{barrau2018,
  title={Invariant kalman filtering},
  author={Barrau, Axel and Bonnabel, Silvere},
  journal={Annu. Rev. Control. Robot. Auton. Syst.},
  volume={1},
  number={1},
  pages={237--257},
  year={2018},
  publisher={Annual Reviews}
}

@article{barrau2022,
  title={The geometry of navigation problems},
  author={Barrau, Axel and Bonnabel, Silvere},
  journal={IEEE Trans. Autom. Control},
  volume={68},
  number={2},
  pages={689--704},
  year={2022},
  publisher={IEEE}
}

@article{wang2006error,
  title={Error propagation on the Euclidean group with applications to manipulator kinematics},
  author={Wang, Yunfeng and Chirikjian, Gregory S},
  journal={IEEE Trans. Robot.},
  volume={22},
  number={4},
  pages={591--602},
  year={2006},
  publisher={IEEE}
}

@article{bell1994iterated,
  title={The iterated Kalman smoother as a Gauss--Newton method},
  author={Bell, Bradley M},
  journal={SIAM J. Optim.},
  volume={4},
  number={3},
  pages={626--636},
  year={1994},
  publisher={SIAM}
}

@INPROCEEDINGS{bloesch2013,
  author={Bloesch, Michael and Gehring, Christian and Fankhauser, Péter and Hutter, Marco and Hoepflinger, Mark A. and Siegwart, Roland},
  booktitle={2013 IEEE/RSJ  Int. Conf. Intell. Robots Syst.}, 
  title={State estimation for legged robots on unstable and slippery terrain}, 
  year={2013},
  volume={},
  number={},
  pages={6058-6064},
  doi={10.1109/IROS.2013.6697236}}

@book{shalom2002,
author = {Bar-Shalom, Yaakov and Kirubarajan, Thiagalingam and Li, X.-Rong},
title = {Estimation with Applications to Tracking and Navigation},
year = {2002},
isbn = {0471221279},
publisher = {John Wiley \& Sons, Inc.},
address = {USA},
}

@book{farrell2008,
  title={Aided Navigation: GPS with High Rate Sensors},
  author={Farrell, J.A.},
  isbn={9780071642668},
  series={McGraw-Hill professional engineering: Electronic engineering},
  year={2008},
  publisher={McGraw Hill LLC}
}

@article{sola2018,
  title={A micro Lie theory for state estimation in robotics},
  author={Joan Sol{\`a} and J{\'e}r{\'e}mie Deray and Dinesh Atchuthan},
  journal={ArXiv},
  year={2018},
  volume={abs/1812.01537},
}

@article{ko2005,
  title={State estimation of linear systems with state equality constraints},
  author={Ko, Sangho and Bitmead, Robert R},
  journal={IFAC Proceedings Volumes},
  volume={38},
  number={1},
  pages={241--246},
  year={2005},
  publisher={Elsevier}
}

@article{barrau2019,
  title={Extended Kalman filtering with nonlinear equality constraints: A geometric approach},
  author={Barrau, Axel and Bonnabel, Silvere},
  journal={IEEE Trans. Autom. Control},
  volume={65},
  number={6},
  pages={2325--2338},
  year={2019},
  publisher={IEEE}
}

@ARTICLE{brossard2022,
  title={Associating uncertainty to extended poses for on lie group {IMU} preintegration with rotating earth},
  author={Brossard, Martin and Barrau, Axel and Chauchat, Paul and Bonnabel, Silv{\`e}re},
  journal={IEEE Trans. Robot.},
  volume={38},
  number={2},
  pages={998--1015},
  year={2021},
  publisher={IEEE}
}

@misc{sola2017,
  title={Quaternion kinematics for the error-state Kalman filter},
  author={Sola, Joan},
  journal={arXiv preprint arXiv:1711.02508},
  year={2017}
}

@article{forster2017,
  title = {On-Manifold Preintegration for Real-Time Visual--Inertial Odometry},
  volume = {33},
  ISSN = {1941-0468},
  DOI = {10.1109/tro.2016.2597321},
  number = {1},
  journal = {IEEE Trans. Robot.},
  publisher = {Institute of Electrical and Electronics Engineers (IEEE)},
  author = {Forster,  Christian and Carlone,  Luca and Dellaert,  Frank and Scaramuzza,  Davide},
  year = {2017},
  month = feb,
  pages = {1–21}
}

@inproceedings{bloesch2012,
author = {Bloesch, Michael and Hutter, Marco and Hoepflinger, Mark and Leutenegger, Stefan and Gehring, Christian and Remy, C and Siegwart, Roland},
booktitle={RSS},
year = {2012},
month = {07},
pages = {},
title = {State Estimation for Legged Robots - Consistent Fusion of Leg Kinematics and {IMU}},
doi = {10.15607/RSS.2012.VIII.003}
}

@article{lin2021,
  title={Legged robot state estimation using invariant Kalman filtering and learned contact events},
  author={Lin, Tzu-Yuan and Zhang, Ray and Yu, Justin and Ghaffari, Maani},
  journal={arXiv preprint arXiv:2106.15713},
  year={2021}
}

@article{wisth2023,
  title = {{VILENS}: Visual,  Inertial,  Lidar,  and Leg Odometry for All-Terrain Legged Robots},
  volume = {39},
  ISSN = {1941-0468},
  DOI = {10.1109/tro.2022.3193788},
  number = {1},
  journal = {IEEE Trans. Robot.},
  publisher = {Institute of Electrical and Electronics Engineers (IEEE)},
  author = {Wisth,  David and Camurri,  Marco and Fallon,  Maurice},
  year = {2023},
  month = feb,
  pages = {309–326}
}

@article{fahmi2021,
  title = {On State Estimation for Legged Locomotion Over Soft Terrain},
  volume = {5},
  ISSN = {2475-1472},
  DOI = {10.1109/lsens.2021.3049954},
  number = {1},
  journal = {IEEE Sens. Lett.},
  publisher = {Institute of Electrical and Electronics Engineers (IEEE)},
  author = {Fahmi,  Shamel and Fink,  Geoff and Semini,  Claudio},
  year = {2021},
  month = jan,
  pages = {1–4}
}

@INPROCEEDINGS{youm2024,
  author={Youm, Donghoon and Oh, Hyunsik and Choi, Suyoung and Kim, Hyeongjun and Jeon, Seunghun and Hwangbo, Jemin},
  booktitle={2025 IEEE Int. Conf. Robot. Autom.}, 
  title={Legged Robot State Estimation with Invariant Extended Kalman Filter Using Neural Measurement Network}, 
  year={2025},
  volume={},
  number={},
  pages={670-676},
  doi={10.1109/ICRA55743.2025.11127971}}

@INPROCEEDINGS{lee2025,
  author={Lee, Seokju and Kim, Hyun-Bin and Kim, Kyung-Soo},
  booktitle={2025 IEEE/RSJ Int. Conf. Intell. Robots Syst.}, 
  title={Legged Robot State Estimation Using Invariant Neural-Augmented Kalman Filter with a Neural Compensator}, 
  year={2025},
  volume={},
  number={},
  pages={15445-15452},
  doi={10.1109/IROS60139.2025.11247668}}

@book{stillwell2008,
  title={Naive Lie Theory},
  author={Stillwell, J.},
  isbn={9780387782157},
  lccn={2008927921},
  series={Undergraduate Texts in Mathematics},
  year={2008},
  publisher={Springer New York}
}

@article{brown1997,
  title={Introduction to random signals and applied Kalman filtering: with MATLAB exercises and solutions},
  author={Brown, Robert Grover and Hwang, Patrick YC},
  journal={Introduction to random signals and applied Kalman filtering: with MATLAB exercises and solutions},
  year={1997}
}

@article{camurri2020,
  title={Pronto: A multi-sensor state estimator for legged robots in real-world scenarios},
  author={Camurri, Marco and Ramezani, Milad and Nobili, Simona and Fallon, Maurice},
  journal={ Front. Robot. AI},
  volume={7},
  pages={68},
  year={2020},
  publisher={Frontiers Media SA}
}

@article{lin2023,
  title={Proprioceptive invariant robot state estimation},
  author={Lin, Tzu-Yuan and Li, Tingjun and Tong, Wenzhe and Ghaffari, Maani},
  journal={arXiv preprint arXiv:2311.04320},
  year={2023}
}

@article{bourmaud2016,
  title={From intrinsic optimization to iterated extended Kalman filtering on Lie groups},
  author={Bourmaud, Guillaume and M{\'e}gret, R{\'e}mi and Giremus, Audrey and Berthoumieu, Yannick},
  journal={J. Math. Imaging Vis.},
  volume={55},
  number={3},
  pages={284--303},
  year={2016},
  publisher={Springer}
}

@article{he2021,
  title={Kalman filters on differentiable manifolds},
  author={He, Dongjiao and Xu, Wei and Zhang, Fu},
  journal={arXiv preprint arXiv:2102.03804},
  year={2021}
}

@book{corke2023,
  title     = {Robotics, Vision and Control: Fundamental Algorithms in MATLAB},
  edition   = {3},
  author    = {Corke, Peter and Jachimczyk, Witold and Pillat, Remo},
  year      = {2023},
  publisher = {Springer International Publishing},
  isbn      = {978-3-031-07261-1},
  language  = {English}
}

@inproceedings{hutter2016anymal,
  title={ANYmal-a highly mobile quadrupedal robot},
  author={Hutter, Marco and Gehring, Christian and Jud, Dominic and Lauber, Andreas and Bellicoso, Carmelo C and Tsounis, Vassilios and Hwangbo, Jemin and Bodie, Karen and Fankhauser, Peter and Bloesch, Michael and others},
  booktitle={2016 IEEE/RSJ Int. Conf. Intell. Robots Syst.},
  pages={3136--3143},
  year={2016},
  organization={IEEE}
}

@article{barrau2015,
  title={An EKF-SLAM algorithm with consistency properties},
  author={Barrau, Axel and Bonnabel, Silvere},
  journal={arXiv preprint arXiv:1510.06263},
  year={2015}
}

@ARTICLE{nistico2025,
       author = {{Nistic\`o}, Ylenia and {Soares}, Joo Carlos Virgolino and {Amatucci}, Lorenzo and {Fink}, Geoff and {Semini}, Claudio},
        title = "{MUSE: A Real-Time Multi-Sensor State Estimator for Quadruped Robots}",
      journal = {IEEE Robot. Autom. Lett},
         year = 2025,
        month = jan,
       volume = {10},
       number = {5},
        pages = {4620-4627},
          doi = {10.1109/LRA.2025.3553047},
archivePrefix = {arXiv},
       eprint = {2503.12101},
 primaryClass = {cs.RO}
}

@book{jazwinski1970,
  address = {New York, NY [u.a.]},
  author = {Jazwinski, {Andrew H.}},
  isbn = {0123815509},
  number = 64,
  pagetotal = {XIV, 376},
  ppn_gvk = {021832242},
  publisher = {Acad. Press},
  series = {Math. Sci. Eng.},
  title = {Stochastic processes and filtering theory},
  year = 1970
}

@phdthesis{barrau2015thesis,
  title={Non-linear state error based extended Kalman filters with applications to navigation},
  author={Barrau, Axel},
  year={2015},
  school={Mines Paristech}
}

@phdthesis{brossard2020,
  title={Deep learning, inertial measurements units, and odometry: some modern prototyping techniques for navigation based on multi-sensor fusion},
  author={Brossard, Martin},
  year={2020},
  school={Universit{\'e} Paris sciences et lettres}
}

@incollection{hall2013,
  title={Lie groups, Lie algebras, and representations},
  author={Hall, Brian C},
  booktitle={Quantum Theory Math.},
  pages={333--366},
  year={2013},
  publisher={Springer}
}

@INPROCEEDINGS{mujoco,
  author={Todorov, Emanuel and Erez, Tom and Tassa, Yuval},
  booktitle={2012 IEEE/RSJ Int. Conf. Intell. Robots Syst.}, 
  title={{M}u{J}o{C}o: A physics engine for model-based control}, 
  year={2012},
  volume={},
  number={},
  pages={5026-5033},
  doi={10.1109/IROS.2012.6386109}}

@ARTICLE{Yoon2024,
  author={Yoon, Ziwon and Kim, Joon-Ha and Park, Hae-Won},
  journal={IEEE Trans. Robot.}, 
  title={Invariant Smoother for Legged Robot State Estimation With Dynamic Contact Event Information}, 
  year={2024},
  volume={40},
  number={},
  pages={193-212},
  doi={10.1109/TRO.2023.3328202}}

@INPROCEEDINGS{turrisi2024,
  author={Turrisi, Giulio and Modugno, Valerio and Amatucci, Lorenzo and Kanoulas, Dimitrios and Semini, Claudio},
  booktitle={2024 IEEE/RSJ Int. Conf. Intell. Robots Syst.}, 
  title={On the Benefits of {GPU} Sample-Based Stochastic Predictive Controllers for Legged Locomotion}, 
  year={2024},
  pages={13757-13764},
  doi={10.1109/IROS58592.2024.10801698}}

@article{frey2026grandtour,
  title={GrandTour: A Legged Robotics Dataset in the Wild for Multi-Modal Perception and State Estimation},
  author={Frey, Jonas and Tuna, Turcan and Fu, Frank and Patterson, Katharine and Xu, Tianao and Fallon, Maurice and Cadena, Cesar and Hutter, Marco},
  journal={arXiv preprint arXiv:2602.18164},
  year={2026}
}

@INPROCEEDINGS{fink2020iros,
  author={Fink, Geoff and Semini, Claudio},
  booktitle={2020 IEEE/RSJ Int. Conf. Intell. Robots Syst.}, 
  title={Proprioceptive Sensor Fusion for Quadruped Robot State Estimation}, 
  year={2020},
  volume={},
  number={},
  pages={10914-10920},
  doi={10.1109/IROS45743.2020.9341521}}


\end{document}